\pgfplotsset{compat=1.18}
\newcommand{\coloneqq}{\mathrel{\mathord :=}}
\newcommand{\eqqcolon}{\mathrel{=\mathord :}}
\newcommand{\Rbf}{{\mathbf R}}
\newcommand{\Rbflin}{{\mathbf R_{\rm lin}}}
\newcommand{\R}{{\mathbb R}}
\newcommand{\Rbot}{{\R_\bot}}
\newcommand{\D}{{\mathcal D}}
\newcommand{\E}{{\mathcal E}}
\newcommand{\F}{{\mathcal F}}
\newcommand{\Net}{{\mathcal{N}}}
\newcommand{\A}{\mathcal{A}}
\newcommand{\B}{\mathcal{B}}
\newcommand{\Arr}{\A}
\newcommand{\Upscell}{\Upsilon^{\mathrm{cell}}}
\newcommand{\Upsnet}{\Upsilon^{\mathrm{net}}}
\newcommand{\Upsarr}{\Upsilon^{\mathrm{arr}}}
\newcommand{\Upspwl}{\Upsilon^{\mathrm{pwl}}}
\newcommand{\Upsconst}{\Upsilon^{\mathrm{const}}}
\newcommand{\PWLV}[1]{\Upspwl_{#1}}
\newcommand{\hpArrV}[1]{\Upsarr_{#1}}
\newcommand{\cellV}[1]{\Upscell_{#1}}
\newcommand{\foR}{\textup{FO}(\Rbf)}
\newcommand{\foRlin}{\textup{FO}(\Rbflin)}
\newcommand{\form}[1]{\textup{FO}(\Rbf,#1)}
\newcommand{\forlinm}[1]{\textup{FO}(\Rbflin,#1)}
\newcommand{\bbl}{\form F}
\newcommand{\bblin}{\forlinm F}
\newcommand{\wal}{\textup{FO(SUM)}}
\newcommand{\K}{\mathbf F}
\newcommand{\ReLU}{\mathrm{ReLU}}
\newcommand{\inn}{\mathrm{in}}
\newcommand{\out}{\mathrm{out}}
\newcommand{\bvec}[1]{{\boldsymbol #1}}
\newcommand{\ba}{\bvec a}
\newcommand{\xx}{\bvec x}
\newcommand{\vv}{\bvec v}
\newcommand{\Func}[1]{F^{#1}}
\newcommand{\Funcu}[2]{F^{#1}_{#2}}
\newcommand{\val}{\mathit{val}}
\newcommand{\eval}{\mathit{eval}}
\newcommand{\breakx}{\mathit{break}_x}
\newcommand{\breaky}{\mathit{break}_y}
\newcommand{\hiddenF}{\mathit{hidden}}
\newcommand{\breakNextF}{\textit{succ}}
\newcommand{\surface}{\mathit{area}(u_1,u_2)}
\newcommand{\mend}{\mathrm{end}}
\newcommand{\mstart}{\mathrm{start}}
\newcommand{\startR}{\mathrm{start}}
\newcommand{\EndR}{\mathrm{end}}
\newcommand{\structA}{\mathcal{A}}
\newcommand{\structB}{\mathcal{B}}
\newcommand{\domF}{\varphi_{\mathrm{dom}}}
\newcommand{\domR}[1]{\mathrm{dom}_{#1}}
\newcommand{\eqF}{\varphi_{=}}
\newcommand{\PL}{\mathcal{P\!\!L}}
\DeclareMathOperator{\Shap}{\text{\normalfont\textsc{Shap}}}
\title{Query languages for neural networks}
\author{Martin Grohe}{RWTH Aachen University, Aachen, Germany}{grohe@informatik.rwth-aachen.de}{https://orcid.org/0000-0002-0292-9142}{}
\author{Christoph Standke}{RWTH Aachen University, Aachen, Germany}{standke@informatik.rwth-aachen.de}{https://orcid.org/0000-0002-3034-730X}{}
\author{Juno Steegmans}{UHasselt, Data Science Institute, Diepenbeek, Belgium}{juno.steegmans@uhasselt.be}{https://orcid.org/0000-0003-2087-9430}{Supported by the Special Research Fund (BOF) of UHasselt}
\author{Jan {Van den Bussche}}{UHasselt, Data Science Institute, Diepenbeek, Belgium}{jan.vandenbussche@uhasselt.be}{https://orcid.org/0000-0003-0072-3252}{}
\authorrunning{M. Grohe, C. Standke, J. Steegmans, and J, Van den Bussche}
\keywords{Expressive power of query languages,
Machine learning models, languages for
interpretability, explainable AI}
\begin{document}

\maketitle

\begin{abstract}

We lay the foundations for a database-inspired approach to
interpreting and understanding neural network models by
querying them using declarative languages. Towards this end we
study different query languages, based on first-order logic,
that mainly differ in their access to the neural network model.
First-order logic over the reals naturally yields a language
which views the network as a black box; only the input--output
function defined by the network can be queried.  This is
essentially the approach of constraint query languages.  On the
other hand, a white-box language can be obtained by viewing the
network as a weighted graph, and extending
first-order logic with summation over weight terms.  The latter
approach is essentially an abstraction of SQL\@.  In general,
the two approaches are incomparable in expressive power, as we
will show.  Under natural circumstances, however, the white-box
approach can subsume the black-box approach; this is our main
result.  We prove the result concretely for linear constraint
queries over real functions definable by
feedforward neural networks with a fixed number of hidden layers and
piecewise linear activation functions.

\end{abstract}

\section{Introduction}

Neural networks \cite{goodfellow-book} are a popular and
successful representation model for real functions
learned from data.  Once deployed, the neural network is
``queried'' by supplying it with inputs then obtaining the
outputs.  In the field of databases, however, we
have a much richer conception of querying than simply applying a
function to given arguments.  For example, in querying a database
relation $\rm Employee(name,salary)$, we can not only ask
for Anne's salary; we can also ask how many salaries are
below that of Anne's; we can ask whether no two employees have
the same salary; and so on.

In this paper, we consider the querying of neural networks from
this more general perspective.  We see many potential
applications: obvious ones are
in explanation, verification, and
interpretability of neural networks and other machine-learning
models \cite{molnar-book,aws-book,LiuALSBK21}.  These are huge
areas \cite{rudin-stop-explaining,foscadino-explain-survey}
where it is important \cite{silva-logic-explain,kwiat-cert-nn} to
have formal, logical definitions for the myriad notions of
explanation that are being considered.  Another potential
application is in managing machine-learning projects, where we
are testing many different architectures and training datasets,
leading to a large number of models, most of which become
short-term legacy code.  In such a context it would be useful if
the data scientist could search the repository for earlier
generated models having certain characteristics in their
architecture or in their behavior, which were perhaps not duly
documented.

The idea of querying machine learning models with an expressive,
declarative query language comes naturally to database
researchers, and indeed, Arenas et al.\ already proposed a
language for querying boolean functions over an unbounded set of
boolean features \cite{arenas-foil}.  In the modal logic
community, similar languages are being investigated
\cite[references therein]{lorini-logic-explain}.

In the present work, we focus on real, rather than boolean,
functions and models, as is indeed natural in the setting of
verifying neural networks \cite{aws-book}.

\subparagraph*{The constraint query language approach}

A natural language for querying real functions on a fixed number
of arguments (features) is obtained by simply using first-order
logic over the reals, with a function symbol $F$ representing the
function to be queried.  We denote this by $\foR$.  For example,
consider functions $F$ with three arguments.  The formula
$ \forall b'\, |F(a,b,c)-F(a,b',c)|<\epsilon $
expresses that the output on $(a,b,c)$ does not depend
strongly on the second feature, i.e., $F(a,b',c)$ is
$\epsilon$-close to $F(a,b,c)$ for any $b'$.  Here, $a$, $b$, $c$
and $\epsilon$ can be real constants or parameters (free
variables).

The language $\foR$ (also known as $\textsc{FO}+\textsc{Poly}$)
and its restriction $\foRlin$ to linear arithmetic (aka
$\textsc{FO}+\textsc{Lin}$) were intensively investigated in
database theory around the turn of the century, under the heading
of \emph{constraint query languages}, with applications to
spatial and temporal databases.  See the compendium volume
\cite{cdbbook} and book chapters \cite[chapter 13]{libkin_fmt},
\cite[chapter 5]{fmta_book}.  Linear formulas with only universal
quantifiers over the reals, in front of a quantifier-free
condition involving only linear arithmetic (as the above example
formula), can already model many properties considered in the
verification of neural networks \cite{aws-book}.  This universal
fragment of $\foRlin$ can be evaluated using linear programming
techniques \cite{aws-book}.

Full $\foR$ allows alternation of quantifiers over the reals, and
multiplication in arithmetic.  Because the
first-order theory of the reals is decidable
\cite{basu_algorithms}, $\foR$ queries can still be effectively
evaluated on any function that is \emph{semi-algebraic}, i.e.,
itself definable in first-order logic over the reals.  Although
the complexity of this theory is high, if the function is
presented as a quantifier-free formula, $\foR$ query evaluation
actually has polynomial-time \emph{data} complexity; here, the
``data'' consists of the given quantifier-free formula
\cite{kkr_cql}.

Functions that can be represented by feedforward neural networks
with ReLU hidden units and linear output units are clearly
semi-algebraic; in fact, they are piecewise linear.
For most of our results, we will indeed focus on this class of 
networks, which are widespread in practice
\cite{goodfellow-book}, and denote them by ReLU-FNN\@.

\subparagraph*{The SQL approach}  Another natural approach to
querying neural networks is to query them directly, as graphs of
neurons with weights on the nodes and edges.  For this purpose
one represents such graphs as relational structures with
numerical values and uses SQL to query them.  As an abstraction of
this approach, in this paper, we model neural networks as
weighted finite structures.  As a query language we use $\wal$:
first-order logic over weighted structures, allowing order
comparisons between weight terms, where weight terms can be built
up using rational arithmetic, if-then-else, and, importantly,
summation.

Originally introduced by Gr\"adel and Gurevich
\cite{gg_metafinite}, the language $\wal$ is comparable to the
relational calculus with aggregates \cite{klug_agg} and, thus, to
SQL \cite{libkin_sql}.  Logics close to $\wal$, but involving
arithmetic in different semirings, were recently also used for
unifying different algorithmic problems in query processing
\cite{toruncz-faq}, as well as for expressing hypotheses in the
context of learning over structures \cite{vbs-wal}.  The
well-known FAQ framework \cite{faq_sigmodrecord}, restricted to
the real semiring, can be seen as the conjunctive fragment of
$\wal$.

To give a simple example of an $\wal$ formula,
consider ReLU-FNNs with a single input unit,
one hidden layer of ReLU units, and a single linear output unit.
The following formula expresses the query that asks if the
function evaluation on a given input value is positive:
\[ 0 < b(\out) + \sum_{x:E(\inn,x)} w(x,\out) \cdot \ReLU(w(\inn,x) \cdot
\mathit{val} + b(x)). \]
Here, $E$ is the edge relation between neurons,
and constants in and out hold the input and output unit,
respectively.  Thus, variable $x$ ranges over
the neurons in the hidden layer.
Weight functions $w$ and $b$ indicate the
weights of edges and the biases of units, respectively; the
weight constant $\it val$ stands for a given input value.
We assume for clarity that ReLU is
given, but it is definable in $\wal$.

Just like the relational calculus with aggregates, or SQL select
statements, query evaluation for $\wal$ has polynomial time data
complexity, and techniques for query processing and optimization
from database systems directly apply.

\subparagraph*{Comparing expressive powers}

Expressive power of query languages has been a classical topic in
database theory and finite model theory
\cite{ahv_book,libkin_fmt}, so, with the advent of new models, it
is natural to revisit questions concerning expressivity.  The
goal of this paper is to understand and compare the expressive
power of the two query languages $\foR$ and $\wal$ on neural
networks over the reals.  The two languages are quite different.
$\foR$ sees the model as a black-box function $F$, but can
quantify over the reals.  $\wal$ can see the model as a white
box, a finite weighted structure, but can quantify only over the
elements of the structure, i.e., the neurons.

In general, indeed the two expressive powers are incomparable.
In $\wal$, we can express queries about the network topology; for
example, we may ask to return the hidden units that do not
contribute much to the function evaluation on a given input
value. (Formally, leaving them out of the network would yield an
output within some $\epsilon$ of the original output.)  Or, we
may ask whether there are more than a million neurons in the
first hidden layer.  For $\foR$, being a black box language, such
queries are obviously out of scope.

A more interesting question is how the two languages compare in
expressing \emph{model agnostic} queries: these are queries that
return the same result on any two neural networks that represent
the same input--output function.  For example, when restricting
attention to networks with one hidden layer, the example $\wal$
formula seen earlier, which evaluates the network,
is model agnostic.  $\foR$ is model agnostic by design,
and, indeed, serves as a very natural declarative benchmark of
expressiveness for model-agnostic queries.  It turns out that
$\wal$, still restricting to networks of some fixed depth,
can express model-agnostic queries that $\foR$
cannot.  For example, for any fixed depth $d$, we will show that
$\wal$ can express the integrals of a functions 
given by a ReLU-FNNs of depth $d$. In contrast, we will show
that this cannot be done in $\foR$ (Theorem~\ref{thmint}).

The depth of a neural network can be taken as a crude notion of
``schema''.  Standard relational query languages typically
cannot be used without knowledge of the schema of the data.
Similarly, we will show that without knowledge of the depth,
$\wal$ cannot express any nontrivial model-agnostic query
(Theorem~\ref{thm:fully_agnostic}).  Indeed, since
$\wal$ lacks recursion, function evaluation can only be expressed if
we known the depth.
(Extensions with
recursion is one of the many interesting directions for further
research.)

When the depth is known, however, for model-agnostic queries, the
expressiveness of $\wal$ exceeds the benchmark of expressiveness
provided by $\foRlin$.  Specifically, we show that every
$\foRlin$ query over functions representable by ReLU-FNNs is also
expressible in $\wal$ evaluated on the networks directly
(Theorem~\ref{theormain}).  This is our main technical result,
and can be paraphrased as \emph{``SQL can verify neural
networks.''}  The proof involves showing that the required
manipulations of higher-dimensional piecewise linear functions,
and the construction of cylindrical cell decompositions in
$\R^n$, can all be expressed in \wal.  To allow for a modular
proof, we also develop the notion of $\wal$ translation,
generalizing the classical notion of first-order interpretations
\cite{hodges}.

This paper is organized as follows. Section~\ref{secprel}
provides preliminaries on neural networks. Section~\ref{secbbl}
introduces $\foR$. Section~\ref{secwal} introduces weighted
structures and $\wal$, after which Section~\ref{secwbl}
introduces white-box querying. Section~\ref{secagnostic} considers
model-agnostic queries. Section~\ref{sectheorem} presents the
main technical result.
Section~\ref{seconc}
concludes with a discussion of topics for further research.

\section{Preliminaries on neural networks} \label{secprel}

A feedforward neural network \cite{goodfellow-book}, in general,
could be defined as a finite, directed, weighted, acyclic graph,
with some additional aspects which we discuss next. The nodes
are also referred to as \emph{neurons} or \emph{units}.  Some of the
source nodes are designated as \emph{inputs}, and some of the
sink nodes are designated as \emph{outputs}.  Both the inputs,
and the outputs, are linearly ordered.  Neurons that are neither
inputs nor outputs are said to be \emph{hidden}.  All nodes,
except for the inputs, carry a weight, a real value, called the
\emph{bias}.  All directed edges also carry a weight.

In this paper, we focus on \emph{ReLU-FNNs}: networks with
ReLU activations and linear outputs.
This means the following.  Let $\Net$ be a neural network with
$m$ inputs.  Then every node $u$ in $\Net$ represents a function
$\Funcu \Net u  : \R^m \to \R$ defined as follows.  We proceed
inductively based on some topological ordering of $\Net$.  For
input nodes $u$, simply $\Funcu \Net u(x_1,\dots,x_m) := x_i$, if $u$
is the $i$th input node.  Now let $u$ be a hidden neuron and assume
$\Funcu \Net v$ is already defined for all \emph{predecessors}
$v$ of $u$, i.e., nodes $v$ with an edge to $u$.
Let $v_1,\dots,v_l$ be these
predecessors, let $w_1,\dots,w_l$ be the weights on the
respective edges, and let $b$ be the bias of $v$.  Then \[ \Funcu
\Net u(\xx) := \ReLU(b + \sum_i w_i\Funcu \Net {v_i}(\xx)), \]
where $\ReLU : \R \to \R : z \mapsto \max(0,z)$.

Finally, for an output node $u$, we define $\Funcu \Net u$
similarly to hidden neurons, except that the application of ReLU
is omitted.
The upshot is that a neural network $\Net$ with $m$ inputs and $n$
outputs $u_1,\dots,u_n$ represents a function
$\Func\Net : \R^m \to \R^n$ mapping $\xx$ to $(\Funcu\Net{u_1}(\xx),\dots,\Funcu\Net{u_n}(\xx))$.
For any node $u$ in the network, $\Funcu\Net u$ is always a continuous piecewise
linear function. We denote the class of all continuous piecewise linear functions $F:\R^m\to\R$ by $\PL(m)$; that is, continuous functions $F$ that admit a partition of $\R^m$ into finitely many polytopes such that $F$ is affine linear on each of them.

\subparagraph*{Hidden layers} Commonly, the hidden neurons are
organized in disjoint blocks called \emph{layers}.  The layers
are ordered, such that the neurons in the first layer have
only edges from inputs, and the neurons in any later layer have
only edges from neurons in the previous layer.  Finally, outputs
have only edges from neurons in the last layer.

We will use $\K(m,\ell)$ to denote the class of layered networks with $m$ inputs of depth $\ell$, that is, with an input layer with $m$ nodes, $\ell-1$ hidden layers, and an output layer with a single node. Recall that the nodes on all hidden layer use ReLU activations and the output node uses the identity function.

It is easy to see that networks in $\K(1,1)$ just compute linear functions and that for every $\ell\ge 2$ we have
$\{F^{\Net}\mid \Net\in\K(1,\ell)\}=\PL(1)$,
that is, the class of functions $\R\to\R$ that can be computed by a network in $\K(1,\ell)$ is the class of all continuous piecewise linear functions.
The well-known \emph{Universal Approximation Theorem} \cite{Cybenko89,Hornik91} says that every continuous function $f:K\to\R$ defined on a compact domain $K\subseteq\R^m$ can be approximated to any additive error by a network in $\K(m,2)$.

\section{A black-box query language} \label{secbbl}

First-order logic over the reals, denoted here by $\foR$, is,
syntactically, just first-order logic over the vocabulary of
elementary arithmetic, i.e., with binary function symbols $+$ and
$\cdot$ for
addition and multiplication, binary predicate $<$,
and constant symbols $0$ and $1$ \cite{basu_algorithms}.
Constants for rational numbers, or even algebraic
numbers, can be added as an abbreviation (since they are
definable in the logic).

The fragment $\foRlin$ of \emph{linear} formulas uses
multiplication only for scalar multiplication, i.e.,
multiplication of variables with rational number constants.  For
example, the formula $y=3x_1-4x_2+7$ is linear, but the formula
$y=5x_1\cdot x_2 - 3$ is not.  In practice, linear queries are
often sufficiently expressive, both from earlier applications for
temporal or spatial data \cite{cdbbook}, as well as for querying
neural networks (see examples to follow). The only caveat is that
many applications assume a distance function on vectors.  When
using distances based on absolute value differences between real
numbers, e.g., the Manhattan distance or the max norm, we still
fall within $\foRlin$.

We will add to $\foR$ extra relation or function symbols;
in this paper, we will mainly consider $\bbl$, which is $\foR$
with an extra function symbol $F$.
The structure on the domain $\R$ of reals, with the arithmetic
symbols having their obvious interpretation, will be denoted here
by $\Rbf$.  Semantically, for any vocabulary $\tau$ of extra
relation and function symbols, $\form\tau$ formulas are
interpreted over structures that expand $\Rbf$ with additional
relations and functions on $\R$ of the right arities, that
interpret the symbols in $\tau$.  In this way, $\bbl$ expresses
queries about functions $F\colon \R^m\to \R$.

This language can express a wide variety of properties (queries)
considered in interpretable machine learning and neural-network
verification.  Let us see some examples.

\begin{example}\label{exrobust}
To check whether $F\colon \R^m\to \R$ is robust around
  an $m$-vector $\bvec a$ \cite{szegedy-intriguing}, using parameters
  $\epsilon$ and $\delta$,
  we can write the formula $ \forall \bvec
  x(d(\bvec x,\bvec a)<\epsilon \Rightarrow |F(\bvec x)-F(\bvec
  a)|<\delta)$.
  Here $\bvec x$ stands for a tuple of $m$ variables, and $d$ stands
  for some distance function which is assumed to be expressible.
\end{example}

\begin{example}\label{excounterf}
  Counterfactual explanation methods
  \cite{wachter-counterfactual} aim to find the closest
  $\bvec x$ to an input $\bvec a$ such that $F(\bvec x)$ is
  ``expected,'' assuming that $F(\bvec a)$ was unexpected.
  A typical example is credit denial; what should we change
  minimally to be granted credit?
  Typically we can define expectedness by some formula, e.g.,
  $F(\bvec x) > 0.9$. Then we can express the
  counterfactual explanation as $ F(\bvec x)>0.9 \land \forall
  \bvec y(F(\bvec y)>0.9 \Rightarrow d(\bvec x,\ba) \leq d(\bvec
  y,\ba))$.
\end{example}

\begin{example}\label{excontrib}
  We may define the contribution of an input feature $i$
  on an input $\ba=(a_1,\dots,a_m)$ as the inverse of the
  smallest change we have to make to that feature for the
  output to change significantly.  We can express that $r$ is
  such a change by writing (taking $i=1$ for clarity) $r>0 \land
  (d(F(a_1-r,a_2,\dots,a_m),F(\ba)) > \epsilon \lor
  d(F(a_1+r,a_2,\dots,a_m),F(\ba)) > \epsilon)$.  Denoting this
  formula by $\mathit{change}(r)$, the smallest change is then
  expressed as $\mathit{change}(r) \land \forall
  r'(\mathit{change}(r') \Rightarrow r\leq r')$.
\end{example}
  
\begin{example}
  We finally illustrate that $\bbl$
can express gradients and many other notions from
  calculus.  For simplicity
  assume $F$ to be unary.  Consider the definition
  $F'(a) = \lim_{x\to c} (F(x)-F(c))/(x-c)$ of the derivative
  in a point $c$.  So it suffices to show how to express
  that $l=\lim_{x\to c} G(x)$ for a function $G$ that is continuous in $c$.
  We can write down the textbook definition literally as
  $ \forall \epsilon>0\, \exists \delta>0\,
  \forall x (|x-c|<\delta \Rightarrow |G(x)-l|<\epsilon)$.
\end{example}

\subparagraph*{Evaluating $\foR$ queries}  Black box
queries can be effectively evaluated
using the decidability and quantifier elimination properties of
$\foR$.  This is the constraint query language approach
\cite{kkr_cql,cdbbook}, which we briefly recall next.

A function $f\colon \R^m\to\R$ is called semialgebraic
\cite{basu_algorithms} (or semilinear) if there exists an $\foR$
(or $\foRlin$) formula $\varphi(x_1,\dots,x_m,y)$ such that for
any $m$-vector $\bvec a$ and real value $b$, we have $\Rbf
\models \varphi(\bvec a,b)$ if and only if $F(\bvec a)=b$.

Now consider the task of
evaluating an $\bbl$ formula $\psi$ on a semialgebraic function $f$,
given by a defining formula $\varphi$.  By introducing auxiliary
variables, we may assume that the function symbol $F$ is used in
$\psi$ only in subformulas for the form
$z=F(u_1,\dots,u_m)$.  Then replace in $\psi$ each such
subformula by $\varphi(u_1,\dots,u_m,z)$, obtaining a pure $\foR$
formula $\chi$.

Now famously, the first-order theory of $\R$ is
decidable \cite{tarski_decision,basu_algorithms}.
In other words, there is an
algorithm that decides, for any $\foR$
formula $\chi(x_1,\dots,x_k)$ and $k$-vector $\bvec c$, whether
$\Rbf \models \chi(\bvec c)$.  Actually, a stronger property holds,
to the effect that every $\foR$-formula is equivalent to a
quantifier-free formula.  The upshot is that there is an
algorithm that, given a $\bbl$ query $\psi(x_1,\dots,x_k)$
and a semialgebraic function $f$ given by a defining formula,
outputs a quantifier-free formula defining the result set
$\{\bvec c \in \R^k \mid \Rbf,f \models \psi(\bvec c)\}$.
If $f$ is given by a quantifier-free formula, the evaluation can
be done in polynomial time in the length of the description of
$f$, so polynomial-time data complexity. This is because the
exponential complexity of the
first-order theory of the reals lies mainly in the number
of variables and the number of quantifiers
\cite{kkr_cql,basu_algorithms}.

\subparagraph*{Complexity} Of course, we want to evaluate queries
on the functions represented by neural networks.  From the
definition given in Section~\ref{secprel}, it is clear that the
functions representable by ReLU-FNNs are always semialgebraic
(actually, semilinear).  For every output feature $j$, it is
straightforward to compile, from the network, a quantifier-free
formula defining the $j$th output component function.  In this
way we see that $\bbl$ queries on ReLU-FNNs are, in principle,
computable in polynomial time.

However, the algorithms are notoriously complex, and we stress
again that $\bbl$ should be mostly seen as a \emph{declarative
benchmark} of expressiveness.  Moreover, we assume here for
convenience that ReLU is a primitive function that does not need
to be expressed using disjunction.  However, symbolic constraint
solving algorithms for the reals can indeed be extended to deal
with ReLU natively \cite{aws-book}.

\begin{remark}
In closing this Section
we remark that, to query the entire network function, we would
not strictly use just only a single
function symbol $F$, but rather the language
$\form{F_1,\dots,F_n}$, with function symbols for the $n$
outputs.  In this paper, for the sake of clarity, we will often
stick to a single output, but our treatment generalizes to
multiple outputs.
\end{remark}

\section{Weighted structures and $\wal$} \label{secwal}

Weighted structures are standard abstract structures equipped
with one or more weight functions from tuples of domain elements
to values from some separate, numerical domain.  Here, as
numerical domain, we will use $\Rbot = \R \cup \{\bot\}$, the set
of ``lifted reals'' where $\bot$ is an extra element representing
an undefined value.  Neural networks are weighted graph
structures.  Hence, since we are interested in declarative query
languages for neural networks, we are interested in logics over
weighted structures.  Such logics were introduced by Gr\"adel and
Gurevich \cite{gg_metafinite}.  We consider here a concrete
instantiation of their approach, which we denote by $\wal$.

Recall that a (finite, relational) vocabulary is a finite set of
function symbols and relation symbols, where each symbol
comes with an arity (a natural number).  We extend the notion of
vocabulary to also include a number of \emph{weight function
symbols}, again with associated arities. We allow $0$-ary weight function symbols, which we call \emph{weight constant symbols}.

A (finite) \emph{structure} $\A$ over such a vocabulary
$\Upsilon$ consists of a finite domain $A$, and functions and
relations on $A$ of the right arities, interpreting the standard
function symbols and relation symbols from $\Upsilon$.
So far this is standard.
Now additionally, $\A$ interprets
every weight function symbol $w$, of arity $k$, by
a function $w^\A : A^k \to \Rbot$.

The syntax of $\wal$ formulas (over some vocabulary)
is defined exactly as for standard first
order logic, with one important extension.
In addition to \emph{formulas} (taking Boolean values) and \emph{standard terms} (taking values in the structure), the logic contains \emph{weight terms} taking values in $\Rbot$. Weight terms $t$ are defined by the following grammar:
\[
t ::= \bot \mid w(s_1,\dots,s_n) \mid
r(t,\dots,t) \mid
\textsf{if $\varphi$ then $t$ else $t$}
\mid \sum_{\bvec x:\varphi} t 
\]
Here, $w$ is a weight function symbol of arity $n$ and
the $s_i$ are standard terms;
$r$ is a rational function applied to weight terms, with rational
coefficients; $\varphi$ is a formula; and $\bvec x$ is a tuple of
variables. The syntax of weight terms and formulas is mutually recursive. As just seen,
the syntax of formulas $\varphi$ is used in the syntax of weight terms;
conversely, weight terms $t_1$ and $t_2$ can be combined to form
formulas $t_1 = t_2$ and
$t_1 < t_2$.


Recall that a rational function is a fraction between two
polynomials.  Thus, the arithmetic operations that we consider
are addition, scalar multiplication by a rational number,
multiplication, and division.

The \emph{free variables} of a weight term are defined as
follows.  The weight term $\bot$ has no free variables.  The free
variables of $w(s_1,\dots,\allowbreak s_n)$ are simply the variables
occurring in the $s_i$.  A variable occurs free in $r(t_1,\dots,t_n)$
if it occurs free in some $t_i$.  A variable occurs free in
`\textsf{if $\varphi$ then $t_1$ else $t_2$}' if it occurs free in
$t_1$, $t_2$, or $\varphi$.  The free variables of
$\sum_{\bvec x:\varphi} t$ are those of $\varphi$ and
$t$, except for the variables in $\xx$. A formula or (weight) term is \emph{closed} if it has no free variables.

We can evaluate a weight term $t(x_1,\dots,x_k)$ on a
structure $\A$ and a tuple $\bvec a \in A^k$ providing values to the
free variables.  The result of the evaluation,
denoted by $t^{\A,\bvec a}$, is a value in $\Rbot$, defined
in the obvious manner.  In particular, when $t$ is of the form
$\sum_{\bvec y:\varphi} t'$, we have \[ t^{\A,\bvec a} =
\sum_{\bvec b : \A \models \varphi(\ba,\bvec b)} t'^{\A,\ba,\bvec
b}. \]  Division by zero, which can happen when evaluating
terms of the form $r(t,\dots,t)$, is given the value $\bot$. The
arithmetical operations are extended so that $x+\bot$, $q\bot$
(scalar multiply),
$x\cdot \bot$, and $x/\bot$ and $\bot/x$ always equal $\bot$.
Also, $\bot < a$ holds for all $a\in \R$.

\section{White-box querying} \label{secwbl}

For any natural numbers $m$ and $n$,
we introduce a vocabulary for neural networks with
$m$ inputs and $n$ outputs.  We denote this vocabulary by
$\Upsnet(m,n)$, or just $\Upsnet$ if $m$ and $n$ are understood.
It has a binary relation symbol $E$ for the edges;
constant symbols $\inn_1$, \dots, $\inn_m$ and $\out_1$, \dots,
$\out_n$ for the input and output nodes; a unary weight
function $b$ for the biases, and a binary weight function
symbol $w$ for the weights on the edges.

Any ReLU-FNN $\Net$, being a weighted graph, is
an $\Upsnet$-structure in the
obvious way.  When there is no edge from node $u_1$ to $u_2$, we
put $w^\Net(u_1,u_2)=0$. Since inputs have no bias, we put
$b^\Net(u) = \bot$ for any input $u$.

Depending on the application, we may want to enlarge $\Upsnet$
with some additional parameters.  For example, we can use
additional weight constant symbols to provide input values to be
evaluated, or output values to be compared with, or interval
bounds, etc.

The logic $\wal$ over the vocabulary
$\Upsnet$ (possibly enlarged as just mentioned) serves as a
``white-box'' query language for neural networks, since the
entire model is given and can be directly queried, just like an
SQL query can be evaluated on a given relational database.
Contrast this with the language
$\bbl$ from Section~\ref{secbbl}, which only has access to the function
$F$ represented by the network, as a black box.

\begin{example} \label{exeval}

  While the language $\bbl$ cannot see inside the model, at
  least it has direct access to the function represented by the
  model.  When we use the language $\wal$, we must compute this function 
  ourselves.  At least when we know the depth of the network, this is
  indeed easy.  In the Introduction, we
  already showed a weight term expressing the evaluation of a
  one-layer neural network on a single input and output.  We can
  easily generalize this to a weight term expressing the value of
  any of a fixed number of outputs, with any fixed number $m$ of
  inputs, and any fixed number of layers.  Let $\val_1$, \dots,
  $\val_m$ be additional weight constant symbols representing
  input values.  Then the weight term $
  \ReLU(b(u)+w(\inn_1,u)\cdot \val_1+\cdots+w(\inn_m,u)\cdot
  \val_m) $ expresses the value of any neuron $u$ in the first
  hidden layer ($u$ is a variable).  Denote this term by
  $t_1(u)$.  Next, for any subsequent layer numbered $l>1$, we
  inductively define the weight term $t_l(u)$ as \[ \ReLU(b(u) +
  \sum_{x : E(x,u)} w(x,u)\cdot t_{l-1}(x)). \] Here, $\ReLU(c)$
  can be taken to be the weight term \textsf{if $c>0$ then $c$
  else 0}.  Finally, the value of the $j$th output is 
  given by the weight term $\eval_j :=
  b(\out_j)+\sum_{x:E(x,\out_j)} w(x,\out_j)\cdot
  t_l(x)$, where $l$ is the number of the last hidden layer.
\end{example}

\begin{example} \label{exuseless}

We can also look for useless neurons: neurons that can
be removed from the network without altering the output too much
on given values.  Recall the weight term $\eval_j$ from the
previous example; for clarity we just write $\eval$.  Let $z$ be
a fresh variable, and let $\eval'$ be the term obtained from
  $\eval$ by altering the summing conditions $E(x,u)$ and
  $E(x,\out)$ by adding the conjunct $x \neq z$.
  Then the formula $|\eval
- \eval'| < \epsilon$ expresses that $z$ is useless.  (For $|c|$
  we can take the weight term \textsf{if $c>0$ then $c$ else
  $-c$}.)

\end{example}

Another interesting example is computing integrals.  Recall
that $\K(m,\ell)$ is the class of networks with $m$ inputs, one
output, and depth $\ell$.

\begin{lemma}
  \label{lemintm}
  Let $m$ and $\ell$ be natural numbers.
There exists an $\wal$ term $t$ over $\Upsnet(m,1)$ with 
  $m$ additional pairs of weight constant symbols $\mathit{min}_i$ and
  $\mathit{max}_i$ for $i \in \{ 1, \dots, m\}$, such that for
  any network $\Net$ in $\K(m,\ell)$, and values $a_i$ and $b_i$ for
  the $\mathit{min}_i$ and $\mathit{max}_i$, we have
  $
  t^{\Net,a_1,b_1,\dots,a_m,b_m} = \int_{a_1}^{b_1}\cdots
  \int_{a_m}^{b_m} \Func\Net\,dx_1\dots dx_m $.
\end{lemma}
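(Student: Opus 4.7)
The plan is to exploit the fact that $F^\Net$ is continuous piecewise linear and express $\int_R F^\Net$, where $R:=[a_1,b_1]\times\cdots\times[a_m,b_m]$, as an $\wal$ sum over a polytopal decomposition of $R$ into cells on which $F^\Net$ is affine. Because $m$ and $\ell$ are both fixed, every cell, together with its vertices and its defining hyperplanes, is determined by only a bounded number of neurons, so the entire enumeration fits inside $\wal$ summations over tuples of domain elements of bounded arity, and on each cell the restriction of $F^\Net$ can be integrated by a simple closed-form formula.

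Concretely, I would first re-use the evaluation term $\eval$ from Example~\ref{exeval} and, for each hidden neuron $u$ in layer $k\le\ell$, introduce a weight term $g_u$ denoting its pre-activation. For $k=1$, $g_u$ is affine linear in $\xx$ with coefficients $b(u)$ and $w(\inn_i,u)$, and $\{g_u=0\}$ is a hyperplane read directly from the structure. For $k>1$, $g_u$ is a piecewise linear function of $\xx$ whose affine pieces are determined by the activation pattern of the preceding $k-1$ layers; since $k\le\ell$ is fixed, such a pattern can itself be pinned down by a bounded tuple of neurons, and each hyperplane piece of $\{g_u=0\}$ can correspondingly be described by a tuple of length at most some function of $\ell$.

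The cylindrical decomposition of $R$ refining all of these hyperplane pieces together with the $2m$ faces of the box then consists of polytopal cells each indexed by an $O(m\ell)$-tuple of neurons augmented with sign choices. For every cell $C$ one can extract, as weight terms, the coordinates of its vertices (by solving an $m\times m$ linear system read off from an $m$-sub-tuple of the index), its volume, and the value of $F^\Net$ at its centroid (via $\eval$); since $F^\Net$ is affine on $C$ we have $\int_C F^\Net=\mathrm{vol}(C)\cdot F^\Net(\mathrm{centroid}(C))$, and the total integral is the $\wal$ sum of these contributions. The main obstacle is the case $\ell>2$, where cell boundaries are no longer plain hyperplanes but piecewise linear surfaces, so the indexing has to thread a consistent activation pattern through all intermediate layers while keeping the tuple length bounded by a function of $m$ and $\ell$; this is a special case of the cylindrical cell-decomposition machinery that will be developed in full generality for the proof of Theorem~\ref{theormain}, and here it can be carried out directly because the parameters $m$ and $\ell$ are built into the term.
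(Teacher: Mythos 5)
Your overall strategy matches the paper's: build a cylindrical decomposition refining the breakplanes of $F^\Net$ and the faces of the box $R$, index cells by tuples of neurons of bounded arity, and integrate the affine restriction of $F^\Net$ on each piece. The noteworthy difference is \emph{where} you integrate. You decompose $\R^m$ and, on each $m$-cell $C$, propose to use $\int_C F=\mathrm{vol}(C)\cdot F(\mathrm{centroid}(C))$; the paper instead works in $\R^{m+1}$, adjoins to $\Arr_{F^\Net}$ the hyperplane $x_{m+1}=0$ and the box-face hyperplanes, builds a CD of $\R^{m+1}$, and sums signed volumes of the $(m+1)$-cells lying between the graph of $F^\Net$ and the coordinate hyperplane. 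Your version is one dimension cheaper, but beware a hidden step: a CD cell for $m>1$ is a general convex polytope, not a simplex, so neither $\mathrm{vol}(C)$ nor $\mathrm{centroid}(C)$ drops out of its vertex list; you still have to triangulate $C$ into simplices, which is exactly the recursive Clarkson-style triangulation plus determinant terms for simplex volume that the paper implements. Once you do that, it is in fact cleaner to drop the cell centroid altogether and apply the centroid identity simplex by simplex: for a simplex the centroid \emph{is} the vertex average, so $\int_C F=\sum_S\mathrm{vol}(S)\cdot F(\mathrm{centroid}(S))$ is directly a $\wal$ weight term. With that repair your route is correct and roughly as long as the paper's; the paper's choice of $\R^{m+1}$ is natural mainly because the arrangement $\Arr_{F^\Net}$ in higher-dimensional space and the CD machinery are built anyway for Theorem~\ref{theormain}. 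One further caution: you say vertices come from ``an $m$-sub-tuple of the index'' via an $m\times m$ linear system, but after the projection phase of the CD construction the section mappings bounding a cell are themselves intersections of (recursively) projected hyperplanes; the tuples that index cells therefore carry more structure than $m$ raw neurons, and tracking this cleanly is precisely what the paper's $\wal$-translation mechanism is for.
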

\begin{proof}[Proof (sketch)]
  \let\qed\relax
  The proof for the full result is delayed to 
Appendix~\ref{secproofintmd}, since it needs results and notions
  from Section~\ref{sectheorem}.  However, we can sketch here a
  self-contained and elementary proof for $m=1$ and
  $\ell=2$ (one input, one hidden layer).  This case
  already covers all continuous piecewise linear functions
  $\R\to\R$.

  Every hidden neuron $u$ may represent a
  ``quasi breakpoint'' in the piecewise linear function (that is, a point where its slope may change). Concretely, we consider the hidden neurons with nonzero input weights to avoid dividing by zero. Its $x$-coordinate
  is given by the weight term $\breakx(u) :=
  -b(u)/w(\inn_1,u)$.  The $y$-value at the breakpoint
  is then given by $\breaky(u) \coloneqq \eval_1(\breakx(u))$, where $\eval_1$ is the
  weight term from Example~\ref{exeval} and we substitute 
  $\breakx(u)$ for $\val_1$.

  Pairs $(u_1,u_2)$ of neurons representing successive
  breakpoints are easy to define by a formula
  $\breakNextF(u_1,u_2)$.  Such pairs represent the pieces of
  the function, except for the very first and very last pieces.
  For this proof sketch, assume we simply want the integral
  between the first breakpoint and the last breakpoint.

  The area (positive or negative)
  contributed to the integral by the piece $(u_1,u_2)$
  is easy to write as a
  weight term: $\surface = \frac12 (\breaky(u_1) + \breaky(u_2))(\breakx(u_2) - \breakx(u_1))$.
  We sum these
  to obtain the desired integral.  However,
  since different neurons may represent the same quasi breakpoint, we
  must divide by the number of duplicates.
  Hence, our desired term $t$ equals \(
  \sum_{u_1,u_2 : \breakNextF(u_1,u_2)}
  \surface/
  (\sum_{u'_1,u'_2 : \gamma} 1), \)
  where $\gamma$ is the formula
  $ \breakNextF(u'_1,u'_2) \land
  \breakx(u'_1)=\breakx(u_1) \land
  \breakx(u'_2)=\breakx(u_2)$.
\end{proof}

\begin{example}
  A popular alternative to Example~\ref{excontrib} for measuring
the contribution of an input feature $i$ to an input $\bvec y = (y_1, \ldots, y_m)$ is the $\Shap$ score \cite{lund17}. It assumes a probability distribution $\mathbb{P}$ on the input space and quantifies the change to the expected value of $\Func\Net$ caused by fixing input feature $i$ to $y_i$ in a random fixation order of the input features:
	\[
	\Shap(i) = \mkern-6mu \sum_{I \subseteq \{1,\ldots,m\}\setminus\{i\}} \mkern-6mu 
	\frac{\lvert I \rvert ! (m - 1 - \lvert I \rvert !)}{m!} 
	\Big(
	\mathbb{E}\big(\Func\Net(\bvec x)\mid\bvec x_{I\cup\{i\}} = \bvec y_{I\cup\{i\}}\big)
	- 
	\mathbb{E}\big(\Func\Net(\bvec x)\mid\bvec x_{I} = \bvec y_{I}\big)
	\Big).
	\]
	When we assume that $\mathbb{P}$ is the product of uniform distributions over the intervals $(a_j, b_j)$, we can write the conditional expectation
        $\mathbb{E}\big(\Func\Net(\bvec x)\mid\bvec x_{J} = \bvec y_{J}\big)$ for some $J \subseteq \{1, \ldots, m\}$ by setting $\{1, \ldots m\}\setminus J \eqqcolon \{j_{1}, \ldots, j_r\}$ as follows.
	\[
	\mathbb{E}\big(\Func\Net(\bvec x)\mid\bvec x_{J} = \bvec y_{J}\big) 
	= \
	\prod_{k = 1}^r \frac{1}{b_{j_k} - a_{j_k}} \cdot \int_{a_{j_1}}^{b_{j_1}} \cdots \int_{a_{j_r}}^{b_{j_r}}
	\Func\Net(\bvec x \vert_{\bvec x_J = \bvec y_J})\, 
	d x_{j_r} \ldots d x_{j_1}
	\]
	where $\bvec x \vert_{\bvec x_J = \bvec y_J}$ is a short notation for the variable obtained from $\bvec x$ by replacing $x_j$ with $y_j$ for all $j \in J$. 
	With lemma \ref{lemintm}, this conditional expectation can be expressed in $\wal$ and by replacing $J$ with $I$ or $I\cup\{i\}$ respectively, we can express the $\Shap$ score.
\end{example}

\subparagraph{More examples}  Our main result will be that, over
networks of a given depth, all of $\bblin$
can be expressed
in $\wal$.  So the examples from Section~\ref{secbbl}
(which are linear if a Manhattan or max distance is used)
apply here as well.  Moreover, the techniques by which we show
our main result readily adapt to queries not about the final
function $F$ represented by the network, but about the function
$F_z$ represented by a neuron $z$ given as a parameter to the
query, much as in Example~\ref{exuseless}.  For example, in
feature visualization \cite{molnar-book} we want to find the
input that maximizes the activation of some neuron $z$.  Since
this is expressible in $\bblin$, it is also expressible in $\wal$.

\section{Model-agnostic queries} \label{secagnostic}

We have already indicated that $\bbl$ is ``black box'' while
$\wal$ is ``white box''.  Black-box queries are
commonly called \emph{model agnostic} \cite{molnar-book}.
Some $\wal$ queries may, and others may not, be model agnostic.

Formally, for some $\ell\ge 1$,
let us call a closed $\wal$ formula $\varphi$,
possibly using weight constants $c_1,\ldots,c_k$,
\emph{depth-$\ell$ model agnostic} if for all $m\ge 1$ all neural
networks $\Net,\Net'\in\bigcup_{i=1}^\ell\K(m,i)$ such that
$F^{\Net}=F^{\Net'}$, and all $a_1,\ldots,a_k\in\R$ we have
$\Net,a_1,\ldots,a_k\models\varphi$ $\Leftrightarrow$
$\Net',a_1,\ldots,a_k\models\varphi$.
A similar definition applies to closed $\wal$ weight terms.

For example, the term of Example~\ref{exeval} evaluating the
function of a neural network of depth at most $\ell$ is
depth-$\ell$ model agnostic. By comparison the formula stating
that a network has useless neurons (cf.~Example~\ref{exuseless}) is not model agnostic.
The term $t$ from Lemma~\ref{lemintm}, computing the integral, is
depth-$\ell$ model agnostic.

\begin{theorem}\label{thmint}
  The query $\int_0^1f=0$ for functions $f\in\PL(1)$  is
  expressible by a depth-$2$ agnostic $\wal$ formula, but not in
  $\bbl$.
\end{theorem}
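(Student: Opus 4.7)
The plan separates the two directions. For the $\wal$-expressibility, apply Lemma~\ref{lemintm} with $m=1$ and $\ell=2$: this yields a weight term $t$ over $\Upsnet(1,1)$, augmented with weight constants $\mathit{min}_1$ and $\mathit{max}_1$, satisfying $t^{\Net,a,b}=\int_{a}^{b} F^\Net\,dx$ on every $\Net\in\K(1,2)$. Since rational constants are available as $0$-ary rational-function weight terms in the grammar of $\wal$, substituting $0$ and $1$ for $\mathit{min}_1,\mathit{max}_1$ produces a closed weight term $t_0$ computing $\int_0^1 F^\Net$ on all of $\K(1,2)$. Because every $f\in\PL(1)$ is realized by some $\Net\in\K(1,2)$, and because the trivial depth-$1$ case (where the network computes a linear function $ax+b$, with integral $b+a/2$) can be covered by an \textsf{if-then-else} that checks for the existence of a hidden neuron, the sentence $t_0=0$ expresses the query and is depth-$2$ model-agnostic.

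For the inexpressibility in $\bbl$, suppose toward contradiction that some closed sentence $\varphi\in\bbl$ of quantifier depth $k$ expresses $\int_0^1 F=0$ uniformly over $F\in\PL(1)$. The plan is to exhibit two PL functions $f,g$ with $\int_0^1 f=0$ and $\int_0^1 g\neq 0$ such that $(\Rbf,f)$ and $(\Rbf,g)$ agree on all $\bbl$-sentences of quantifier depth at most $k$. A natural candidate: for $n\gg k$, let $f_n$ be a symmetric alternating sawtooth on $[0,1]$ with $n$ teeth whose signed triangular areas cancel exactly, and obtain $g_n$ from $f_n$ by a localized perturbation (a microscopic shift of one breakpoint, say) that breaks the cancellation while preserving the function's local geometry outside a negligible region. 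Indistinguishability would then be established by an Ehrenfeucht--Fra\"{i}ss\'{e} game argument in which the duplicator, facing $k$ probes from the spoiler, matches values and semialgebraic types between $f_n$ and $g_n$.

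The decisive difficulty lies in this EF argument, because $\bbl$ inherits the full expressive power of $\foR$ --- arbitrary polynomial arithmetic, quantification over all of $\R$, and comparison of $F$-values at symbolically defined points --- making naive matching strategies fragile. The more promising route is to pass through quantifier elimination: substituting the quantifier-free defining formula of an $n$-breakpoint PL function into $\varphi$ yields a quantifier-free $\foR$ formula whose size and combinatorial structure are bounded by $k$ independently of $n$, while the integral-zero locus on $n$-breakpoint functions is a linear equation whose number of summands and specific coefficient pattern grow with $n$. Showing that no such bounded-complexity formula can uniformly define all these integral-zero hyperplanes --- via a compactness argument or a dimension/genericity count on a carefully parameterized subfamily of $\PL(1)$ --- is the technical heart of the proof.
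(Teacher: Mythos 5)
For the expressibility direction, your argument matches the paper's (which simply invokes Lemma~\ref{lemintm}); your extra care about the $\K(1,1)$ edge case and the substitution of rational constants $0,1$ for $\mathit{min}_1,\mathit{max}_1$ is sound.

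For the inexpressibility direction there is a genuine gap: you sketch two strategies (an Ehrenfeucht--Fra\"iss\'e game between paired sawtooths, and a quantifier-elimination/counting argument) but, as you yourself flag, neither is completed, and each faces a substantive obstacle. The EF-game route is fragile precisely for the reason you identify: a $\bbl$ sentence can probe $F$ at arbitrary real points and compare polynomials in those values, so a ``microscopic perturbation'' of a single breakpoint is easily visible and indistinguishability of $f_n$ and $g_n$ is not at all clear. The quantifier-elimination route as stated also contains a misstep: after substituting the quantifier-free definition of an $n$-breakpoint function into $\varphi$, the resulting $\foR$ formula has \emph{size} growing with $n$ (the defining formula of $F$ has $\Theta(n)$ atoms), so the claim that the post-elimination ``size and combinatorial structure are bounded by $k$ independently of $n$'' does not hold without further work. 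What both of your strategies are implicitly trying to re-derive is a collapse theorem, and that is exactly the tool the paper uses instead of proving one from scratch. The paper's proof reduces from the equal-cardinality query on disjoint finite sets $(S_1,S_2)$: one constructs in $\form{S_1,S_2}$ a sawtooth $f_{S_1,S_2}$ with positive unit teeth at $S_1$ and negative unit teeth at $S_2$ (after an affine rescaling into $(0,1)$ and with tooth width chosen small enough to avoid overlaps), so that $\int_0^1 f_{S_1,S_2}=0$ iff $|S_1|=|S_2|$. If $\int_0^1F=0$ were $\bbl$-expressible, composing would express equal cardinality in $\form{S_1,S_2}$, contradicting the generic collapse theorem for constraint query languages combined with the classical fact that equal cardinality is not expressible in order-invariant first-order logic over finite ordered structures. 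Your instinct that the obstruction is ``counting'' is exactly right; the missing move is to \emph{reduce} to a known counting inexpressibility result via the collapse machinery rather than to argue indistinguishability directly against full $\foR$.
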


\begin{proof}
  We have already seen the expressibility in $\wal$.
We prove nonexpressibility in $\bbl$.

  Consider the equal-cardinality
  query $Q$ about disjoint
  pairs $(S_1,S_2)$ of finite sets of reals, asking whether
  $|S_1|=|S_2|$.  Over \emph{abstract} ordered finite structures, equal
  cardinality is well-known
  not to be expressible in order-invariant first-order logic
  \cite{libkin_fmt}.  Hence, by the generic collapse theorem
  for constraint query languages over the reals
  \cite{cdbbook,libkin_fmt}, query $Q$ is not expressible
  in $\form{S_1,S_2}$.

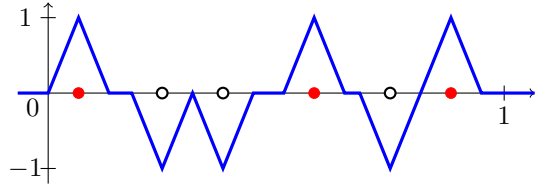
\begin{figure}
  \centering
    \begin{tikzpicture}[
            rdot/.style={circle,red,draw,fill,inner sep=0pt,minimum size=4pt},
            bdot/.style={circle,draw,thick,fill=white,inner sep=0pt,minimum size=4pt},
            scale=1
        ]
        \draw[->] (-0.4,0) -> (6.4,0);
        \draw[->] (0,-1.2) -> (0,1.2); 
        \draw (-0.1,1) -- (0.1,1) (-0.1,-1) -- (0.1,-1) (6,-0.1)--(6,0.1);
        \path (-0.3,1) node {$1$} (-0.3,-1) node {$-1$} (-0.2,-0.2) node {$0$} (6,-0.3) node {$1$};

        \path (0.4,0) node[rdot] {} (1.5,0) node[bdot] {} (2.3,0) node[bdot] {} (3.5,0) node[rdot] {} (4.5,0) node[bdot] {} (5.3,0) node[rdot] {};
        \draw[very thick,blue] (-0.4,0) -- (0,0) -- (0.4,1) -- (0.8,0) -- (1.1,0) -- (1.5,-1) -- (1.9,0) -- (2.3,-1) -- (2.7,0) -- (3.1,0) -- (3.5,1) -- (3.9,0) -- (4.1,0) -- (4.5,-1) -- (5.3,1) -- (5.7,0) -- (6.4,0);
    \end{tikzpicture}
    \caption{The function $f_{S_1,S_2}$ of the proof of Theorem~\ref{thmint} for the set $S_1$ consisting of the three red points and the set $S_2$ consisting of the three white points}
\end{figure}

  Now for any given $S_1$ and $S_2$, we construct a continuous piecewise
  linear function $f_{S_1,S_2}$ as follows.  We first apply a
  suitable
  affine transformation so that $S_1 \cup S_2$ falls within the
  open interval $(0,1)$.
  Now $f_{S_1,S_2}$ is a
  sawtooth-like function, with positive teeth at elements from
  $S_1$, negative teeth (of the same height, say 1)
  at elements from $S_2$, and zero everywhere else.
  To avoid teeth that overlap the zero
  boundary at the left or that overlap each other, we make
  them of width $\min\{m,M\}/2$, where $m$ is the minimum of
  $S_1\cup S_2$ and $M$ is the minimum distance between any two
  distinct elements in $S_1 \cup S_2$.

  Expressing the above construction uniformly in
  $\form{S_1,S_2}$ poses no difficulties; let $\psi(x,y)$ be a
  formula defining $f_{S_1,S_2}$.  Now assume, for the sake of
  contradiction, that $\int_0^1F=0$ would be expressible by a
  closed $\bbl$
  formula $\varphi$.  Then composing $\varphi$ with $\psi$ would
  express query $Q$ in $\form{S_1,S_2}$.
  Indeed, clearly, $\int_0^1f_{S_1,S_2}=0$ if and only if $|S_1|=|S_2|$.
  So, $\varphi$ cannot exist.
\end{proof}

It seems awkward that in the definition of model agnosticity we
need to bound the depth. Let us call an $\wal$ term or formula
\emph{fully model agnostic} if is depth-$\ell$ model agnostic for
every $\ell$. It turns out that there are no nontrivial fully
model agnostic $\wal$ formulas.

\begin{theorem}\label{thm:fully_agnostic}
  Let $\varphi$ be a fully model agnostic closed $\wal$
  formula over $\Upsnet(m,1)$. Then either $\Net\models\varphi$ for all $\Net\in\bigcup_{\ell\ge 1}\K(m,\ell)$ or $\Net\not\models\varphi$ for all $\Net\in\bigcup_{\ell\ge 1}\K(m,\ell)$.
\end{theorem}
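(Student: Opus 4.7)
The plan is to prove the stronger claim that $\varphi$ takes the same truth value on every network $\Net\in\bigcup_{\ell\ge 1}\K(m,\ell)$, regardless of the function it computes. The strategy is to exploit the freedom granted by full model agnosticity: for each network $\Net$, the formula $\varphi$ must agree with its value on \emph{any} other network computing $\Func\Net$. By replacing two given networks $\Net_1,\Net_2$ with carefully chosen such alternatives, I would arrange the resulting structures to be indistinguishable by any $\wal$ formula of bounded rank.

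First I would catalogue function-preserving transformations: (i) inserting identity layers via the gadget $x = \ReLU(x) - \ReLU(-x)$, which increases depth without changing $\Func\Net$; (ii) adding ``dead'' neurons with zero incoming and outgoing weights to hidden layers; (iii) duplicating a neuron $v$ into copies whose outgoing weights sum to those of $v$; and (iv) rescaling a neuron's incoming weights by $\lambda>0$ while dividing its outgoing weights by $\lambda$, with the bias correspondingly adjusted. Full model agnosticity forces $\varphi$ to be invariant under each of these. Given $\varphi$ of quantifier rank and summation depth at most $q$, I would then apply these operations to pad $\Net_1$ and $\Net_2$ to a common large depth and enlarge every hidden layer with many dead neurons, producing $\hat\Net_1,\hat\Net_2$ in which the bulk of each hidden layer consists of identical dummies contributing $0$ to every weight-sensitive sum, while the ``active'' cores encoding the original functions remain small and controllable.

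The final step is an Ehrenfeucht--Fra\"iss\'e-style argument for $\wal$ showing that $\hat\Net_1$ and $\hat\Net_2$ are indistinguishable at rank $q$: the duplicator matches active neurons with active neurons and dummies with dummies, and the abundance of dummies ensures that any weighted sum queried by $\varphi$ is either trivially matched (dominated by the dummy pool, which contributes $0$) or confined to the small active cores, where exact matching is arranged by the choice of padding. The principal obstacle is designing the correct EF game for $\wal$. Unlike plain first-order logic, $\wal$ features a summation construct that aggregates real-valued weights across arbitrary definable subsets and can thereby extract global information; the game must track not merely a partial isomorphism between selected elements but also the values of all relevant weighted sums at each summation depth. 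Establishing that dummy padding suffices to align these sums across networks computing entirely different functions is the heart of the argument, and I expect it to require a normal form for $\wal$ weight terms that bounds how much information any single nested sum can extract from the structure.
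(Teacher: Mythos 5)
Your overall logical frame is sound: if you can produce $\hat\Net_1,\hat\Net_2$ with $F^{\hat\Net_i}=F^{\Net_i}$ that are $\wal$-indistinguishable at the relevant rank, then full model agnosticity forces $\varphi(\Net_1)=\varphi(\hat\Net_1)=\varphi(\hat\Net_2)=\varphi(\Net_2)$, and the theorem follows. However, the concrete construction you propose for getting $\hat\Net_1,\hat\Net_2$ cannot work, and this is not a minor detail you can defer --- it is the entire content of the theorem.

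The gap: all four of your padding operations are \emph{function-preserving on the individual network}, so $\hat\Net_1$ still has an ``active core'' whose weight multiset is essentially that of $\Net_1$, and likewise for $\hat\Net_2$. Since $F^{\Net_1}\neq F^{\Net_2}$, these cores genuinely differ, and $\wal$ can see the difference without any need for nested quantification: for instance the closed weight term $\sum_{x,y:E(x,y)} w(x,y)^2$ (or sums restricted to neurons at a fixed bounded distance from $\inn$ or $\out$) is an invariant that dead neurons (contributing $0$) and identity/duplication/rescaling tricks cannot in general align between two unrelated networks. Your own sketch concedes the problem when it says ``exact matching is arranged by the choice of padding,'' but there is no padding that equalizes the active cores, because they compute different functions. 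An EF-style game for $\wal$, however formalized, will let the spoiler win by playing a summation over edges incident to $\out$.

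The paper sidesteps this by making \emph{both} gadgets $\Net_1$ and $\Net_2$ appear in \emph{both} constructed networks. It takes an abstract two-chain structure of length $2n$ with constants $c_1,c_2$ marking the chain midpoints, swaps the marks to get a sibling structure, and then replaces each $c_i$ by a copy of $\Net_i$; the output is wired so that only one chain contributes (the other has a weight-$0$ edge into $\out$). The two resulting networks $\Net(A_n),\Net(B_n)$ then have \emph{identical multisets of $r$-neighborhood types} for every fixed $r$ once $n$ is large, because the weight-$0$ edge is at distance $n$ from the gadgets, so Hanf-locality of $\wal$ (cited, not re-proved via an EF game) immediately gives $\wal$-indistinguishability. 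Yet the computed functions differ, since which gadget feeds the output has been swapped. This ``hide the distinguishing feature at distance $n$'' trick is the idea your proposal is missing; without something equivalent, the indistinguishability claim for $\hat\Net_1,\hat\Net_2$ will fail.
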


The proof is in Appendix \ref{sec:proof_fully_agnostic}. The idea
is that $\wal$ is Hanf-local \cite{libkin_sql,libkin_fmt}. No
formula $\varphi$ can distinguish a long enough structures consisting of
two chains where the middle nodes are marked by two distinct
constants $c_1$ and $c_2$, from its sibling structure where the
markings are swapped.  We can turn the two structures into neural
networks by replacing the markings by two gadget networks
$N_1$ and $N_2$, representing different functions, that $\varphi$
is supposed to distinguish.  However, the construction is
done so that the function represented by the
structure is the same as that represented by the gadget in the
left chain.  Still, $\wal$ cannot distinguish these two
structures.  So, $\varphi$ is either not fully model-agnostic, or
$N_1$ and $N_2$ cannot exist and $\varphi$ is trivial.

\begin{corollary} \label{corzero}
  The $\bbl$ query $F(0)=0$ is not expressible in $\wal$.
\end{corollary}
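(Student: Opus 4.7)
The plan is to derive this directly from Theorem~\ref{thm:fully_agnostic}. Suppose, for contradiction, that the query ``$F(0)=0$'' is expressible by a closed $\wal$ formula $\varphi$ over $\Upsnet(m,1)$ (for $m=1$; the same argument works for any fixed $m$ by plugging zeros into the other inputs). First I would check that $\varphi$ must be fully model agnostic: by assumption, for every network $\Net\in\K(1,\ell)$ we have $\Net\models\varphi$ iff $F^{\Net}(0)=0$, and this condition depends only on the function $F^{\Net}$, so $\varphi$ gives the same truth value on any two networks computing the same function, regardless of depth.

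Next I would invoke Theorem~\ref{thm:fully_agnostic}: any fully model agnostic closed $\wal$ formula over $\Upsnet(m,1)$ is either satisfied by every network in $\bigcup_{\ell\ge 1}\K(m,\ell)$ or by none of them. To reach a contradiction, it is then enough to exhibit one network $\Net_0$ with $F^{\Net_0}(0)=0$ and one network $\Net_1$ with $F^{\Net_1}(0)\neq 0$; for example, $\Net_0$ can be a trivial one-hidden-layer network representing the zero function, and $\Net_1$ a one-hidden-layer network representing the constant function $1$. Since both lie in $\K(1,2)\subseteq\bigcup_{\ell\ge 1}\K(1,\ell)$, the formula $\varphi$ would have to agree on $\Net_0$ and $\Net_1$ while the black-box query disagrees on them, contradicting the assumption that $\varphi$ expresses $F(0)=0$.

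There is essentially no technical obstacle once Theorem~\ref{thm:fully_agnostic} is in hand; the only point that needs a brief justification is the full model agnosticity of $\varphi$, which follows immediately from the semantics of the black-box query. The same argument shows that no nontrivial $\bbl$ query is expressible in $\wal$ without a depth bound, and Corollary~\ref{corzero} is simply the most elementary instance.
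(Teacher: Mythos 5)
Your proposal is correct and matches the intended derivation: the paper states Corollary~\ref{corzero} without proof precisely because it follows immediately from Theorem~\ref{thm:fully_agnostic} by the argument you give — any $\wal$ formula expressing $F(0)=0$ would be fully model agnostic, hence trivial by the theorem, contradicting the existence of networks (say, representing the zero function and the constant-one function) on which the query disagrees. The one step you flag as needing justification — that $\varphi$ is fully model agnostic because the black-box query depends only on $F^{\Net}$ — is indeed the whole content and you handle it correctly.
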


\section{From $\foRlin$ to $\wal$} \label{sectheorem}

In practice, the number of layers in the
employed neural network architecture is often fixed and known.
Our main result then is that $\wal$ can express all $\foRlin$
queries.

\begin{theorem} \label{theormain}
  Let $m$ and $\ell$ be natural numbers.  For every closed
  $\bblin$ formula
  $\psi$ there exists a closed $\wal$ formula $\varphi$ such that for
  every network $\Net$ in $\K(m,\ell)$, we have $ \Rbf,\Func\Net
  \models \psi $ iff $ \Net \models \varphi $.
\end{theorem}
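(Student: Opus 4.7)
The plan is to mimic the classical quantifier-elimination procedure for linear arithmetic on semilinear functions, using a cylindrical cell decomposition of $\R^m$ adapted to $\Func\Net$, and to show that each stage of this construction, as well as the final evaluation of $\psi$ on it, can be expressed by a $\wal$ translation of $\Net$. Because the depth $\ell$ is fixed, every hidden neuron $u$ of $\Net$ has a preactivation that is piecewise linear with pieces determined by sign assignments to ReLU units in earlier layers. For any fixed sign assignment $\sigma$ to the hidden units, the region $R_\sigma \subseteq \R^m$ where every preactivation attains the prescribed sign is a polyhedron defined by linear inequalities whose coefficients are rational functions of the network weights; on $R_\sigma$ the function $\Func\Net$ is affine linear, and the coefficients of this linear piece are expressible as $\wal$ weight terms parameterised by the tuple of neurons witnessing $\sigma$.

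The second step is to build, via a $\wal$ translation in the sense alluded to in the introduction, the cylindrical cell decomposition of $\R^m$ induced by the hyperplane arrangement arising from the network. I propose to encode a $k$-dimensional cell by a tuple of neurons specifying the hyperplanes that define its affine hull together with a sign vector pinning down its position. Cell incidence, adjacency, projection from $\R^k$ to $\R^{k-1}$, and the selection of a sample point per cell (for instance by taking the barycenter of an appropriate set of vertices) can then be defined by $\wal$ formulas and weight terms, because all the required coordinates are rational combinations of finitely many weights of $\Net$. Cylindrical refinement along each coordinate adds new bounding hyperplanes coming from projecting boundaries of the previous decomposition, and these are again encoded by longer but uniformly bounded tuples of neurons: the bound depends only on $m$, $\ell$, and the quantifier depth of $\psi$.

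Once the cylindrical decomposition is $\wal$-definable, I translate $\psi$ inductively on its formula structure. Atomic inequalities involving $F$ are handled by substituting, cell by cell, the affine-linear formula for $\Func\Net$ that holds on that cell, yielding a $\wal$ weight-term condition; linear atoms not involving $F$ are translated directly. An existential quantifier $\exists x_i$ in $\psi$ becomes a disjunction over the cells of the $i$-th projected decomposition, with each disjunct instantiating $x_i$ at the corresponding sample point, relying on the fact that the truth value of the quantifier-free body is constant along each cell of a well-chosen cylindrical refinement. Iterating this translation until all quantifiers of $\psi$ have been processed produces a closed $\wal$ formula $\varphi$ over $\Upsnet(m,1)$ whose truth on $\Net$ agrees with that of $\psi$ on $\Func\Net$, as required.

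The main obstacle, and where most of the technical work lies, is the second step: sustaining a $\wal$-definable encoding of the cylindrical decomposition through all rounds of projection. Projected cells are bounded by hyperplanes obtained from intersections and coordinate-fixing of earlier ones, so the combinatorial complexity of the encoding grows with each round, and sample points that live inside cells have to be described uniformly by weight terms involving only bounded tuples of neurons. Organising these data modularly, and verifying that projection, refinement, and sample-point selection compose cleanly, is exactly why a robust notion of $\wal$ translation, generalising first-order interpretations, is needed; developing this translation framework and checking the closure properties above is expected to occupy the bulk of the formal proof.
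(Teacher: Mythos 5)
Your proposal follows the same high-level route as the paper's own proof: compile $\Net$ into a piecewise-linear description, build a $\wal$-definable affine cylindrical cell decomposition compatible with the hyperplanes arising from both the network and the linear constraints of $\psi$, and then simulate linear quantifier elimination over that decomposition, with all the intermediate structures threaded together via $\wal$ translations generalizing first-order interpretations. The one noteworthy technical divergence is in how the decomposition is consumed. You propose to evaluate the quantifier-free matrix at a distinguished sample point in each cell and to process quantifiers via alternating disjunctions and conjunctions over sample points; the paper instead tracks, for each suffix of the quantifier prefix, the \emph{set} of $k$-cells whose union is the satisfying region (``cell selection'', Lemma~\ref{lemsel}), which never needs a numerical witness inside a cell and makes the membership test purely combinatorial. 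Both are legitimate flavors of CAD-style elimination, and the paper does in fact compute cell-corner coordinates in an appendix for the integral term, so the sample-point route is $\wal$-realizable too; but cell selection is arguably cleaner here because it avoids threading weight-term arithmetic through the quantifier recursion. Two points in your sketch that would need tightening to make it go through: first, the decomposition must live in $\R^d$ where $d$ is the total number of variables in $\psi$, not in $\R^m$ — the paper handles this by bringing $\psi$ into an ordered prenex form and, for every atom $F(x_{i_1},\dots,x_{i_m})=x_j$ with $i_1<\dots<i_m<j$, inserting copies of all breakplanes and component-function graphs at those coordinate positions to form $\Arr_f\cup\Arr_\psi$; second, the sample point of a $k$-cell must be chosen compatibly with the cylindrical structure, i.e.\ its first $k-1$ coordinates must be the sample point of its base cell, which a generic barycenter will not in general satisfy — a stack-aware construction, such as the corner-coordinate terms the paper defines, is needed instead.
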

The challenge in proving this result is to simulate, using
quantification and summation over neurons, the
unrestricted access to real numbers that is available in
$\foRlin$.  Thereto, we will divide the relevant real space in a
finite number of cells which we can represent by finite tuples of
neurons.

The proof involves several steps that transform weighted
structures.  Before presenting the proof,
we formalize such transformations in the notion of
$\wal$ translation, which
generalize the classical notion of first-order interpretation
\cite{hodges} to weighted structures.

\subsection{$\wal$ translations}

Let $\Upsilon$ and $\Gamma$ be vocabularies for weighted
structures,
and let $n$ be a natural number.
An \emph{$n$-ary $\wal$ translation $\varphi$ from $\Upsilon$ to
$\Gamma$} consists of a number of formulas and weight terms over
$\Upsilon$, described next.  There are formulas $\domF(\bvec{x})$
and $\eqF(\bvec{x}_1,\bvec{x}_2)$; formulas
$\varphi_R(\bvec{x}_1,\dots,\bvec{x}_k)$ for every $k$-ary relation
symbol $R$ of $\Gamma$; and formulas
$\varphi_f(\bvec x_0,\bvec{x}_1,\dots,\bvec{x}_k)$ for every
$k$-ary standard function symbol $f$ of $\Gamma$.
Furthermore, there are weight terms
$\varphi_w(\bvec{x}_1,\dots,\bvec{x}_k)$ for every $k$-ary weight
function $k$ of $\Gamma$.

In the above description, bold $\bvec x$ denote
$n$-tuples of distinct variables.  Thus, the formulas
and weight terms of $\varphi$ define relations or weight
functions of arities that are a multiple of $n$.

We say that $\varphi$ maps a weighted structure $\structA$ over
$\Upsilon$ to a weighted structure $\structB$ over $\Gamma$ if
there exists a surjective function $h$ from $\domF({\structA})
\subseteq A^n$ to $B$ such that:
\begin{itemize}

\item \( h(\bvec{a}_1) = h(\bvec{a}_2) \Leftrightarrow \structA
  \models \eqF(\bvec{a}_1, \bvec{a}_2) \);

\item \( (h(\bvec{a}_1), \dots, h(\bvec{a}_k)) \in R^{\structB}
  \Leftrightarrow \structA \models \varphi_R(\bvec{a}_1, \dots,
    \bvec{a}_k) \);

  \item \( (h(\bvec a_0) =
    f^{\B}(h(\bvec{a}_1), \dots, h(\bvec{a}_k))
  \Leftrightarrow \structA \models \varphi_f(\bvec a_0,\bvec{a}_1, \dots,
    \bvec{a}_k) \);

\item \( w^{\structB}(h(\bvec{a}_1), \dots, h(\bvec{a}_m)) =
  \varphi_{w}^{\structA}(\bvec{a}_1, \dots, \bvec{a}_n) \).

\end{itemize}
In the above, the bold $\bvec a$ denote $n$-tuples in $\domF(\A)$.

For any given $\A$, if $\varphi$ maps $\A$ to $\B$, then $\B$
is unique up to isomorphism. Indeed,
the elements of $B$ can be understood as
representing the equivalence classes of the equivalence relation
$\eqF(\A)$ on $\domF(\A)$. In particular, for
$\structB$ to exist, $\varphi$ must be \emph{admissible} on $\A$,
which means that $\eqF(\A)$ is indeed an equivalence relation on
$\domF({\structA})$, and all relations and all functions
$\varphi_R(\A)$, $\varphi_f(\A)$ and $\varphi_w(\A)$ are
invariant under this equivalence relation.

If $\mathbf K$ is a class of structures over $\Upsilon$, and $T$
is a transformation of structures in $\mathbf K$ to structures
over $\Gamma$, we say that $\varphi$ \emph{expresses} $T$ if
$\varphi$ is admissible on every $\A$ in $\mathbf K$, and maps
$\A$ to $T(\A)$.

The relevant reduction theorem for translations is the following:

\begin{theorem}

  Let $\varphi$ be an $n$-ary $\wal$ translation from $\Upsilon$
  to $\Gamma$, and let $\psi(y_1,\dots,y_k)$ be a formula
  over $\Gamma$. Then there exists a formula
  $\varphi_\psi(\bvec{x}_1,\dots, \bvec{x}_k)$ over $\Upsilon$ such
  that whenever $\varphi$ maps $\A$ to $\B$ through $h$, we have
  $\structB \models \psi(h(\bvec{a}_1),\dots, h(\bvec{a}_k))$ iff
  $\structA \models \varphi_\psi(\bvec{a}_1,\dots, \bvec{a}_k)$.
  Furthermore, for any weight term $t$ over $\Gamma$, there exists
  a weight term $\varphi_t$ over $\Upsilon$ such that
  $t^\structB(h(\bvec{a}_1),\dots, h(\bvec{a}_k)) =
  \varphi_t^{\structA}(\bvec{a}_1,\dots, \bvec{a}_k)$.
\end{theorem}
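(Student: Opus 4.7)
The plan is to prove both halves simultaneously by structural induction on the formula $\psi$ and the weight term $t$ over $\Gamma$; the induction is necessarily mutual because $\wal$ formulas and weight terms are mutually defined (via the comparisons $t_1 < t_2$, $t_1 = t_2$ on one side and the if--then--else and sum constructors on the other). Before starting the main induction, I would first handle the standard terms of $\Gamma$: by recursion on a standard term $s(y_1,\dots,y_k)$, I would define an auxiliary formula $\varphi_s(\bvec x_0, \bvec x_1,\dots, \bvec x_k)$ over $\Upsilon$ expressing ``$h(\bvec a_0) = s^{\B}(h(\bvec a_1),\dots, h(\bvec a_k))$''. For a variable $y_i$ this is $\eqF(\bvec x_0, \bvec x_i)$, and for a function application $f(s_1,\dots,s_m)$ one existentially quantifies intermediate $n$-tuples $\bvec z_j \in \domF$, chaining together the $\varphi_{s_j}(\bvec z_j, \bvec x)$ with $\varphi_f(\bvec x_0, \bvec z_1,\dots,\bvec z_m)$. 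Throughout, free-variable tuples $\bvec x_i$ are implicitly taken from $\domF(\A)$, matching the domain of $h$.

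Most cases of the main induction are then routine. An atomic formula $R(s_1,\dots,s_k)$ or equality $s_1 = s_2$ is handled by existentially quantifying witness tuples $\bvec z_j$ via the $\varphi_{s_j}$ and then applying $\varphi_R$ or $\eqF$. A weight-term comparison $t_1 < t_2$ or $t_1 = t_2$ becomes simply $\varphi_{t_1} < \varphi_{t_2}$ or $\varphi_{t_1} = \varphi_{t_2}$, using the inductively translated weight terms directly. Boolean connectives propagate, and quantifiers relativize to $\domF$: $\exists y\,\psi$ becomes $\exists \bvec x\,(\domF(\bvec x) \land \varphi_\psi)$ and $\forall y\,\psi$ becomes $\forall \bvec x\,(\domF(\bvec x) \to \varphi_\psi)$. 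For weight terms, $\bot$ translates to $\bot$; rational function applications $r(t_1,\dots,t_n)$ translate componentwise; if-then-else is translated branch-by-branch; and a weight function application $w(y_1,\dots,y_n)$ on variables is translated directly as $\varphi_w(\bvec x_1,\dots,\bvec x_n)$, with the general case $w(s_1,\dots,s_n)$ reducing to it via the averaging trick described below.

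The crux is the summation $t = \sum_{\bvec y : \chi} t'$. The naive translation (sum $\varphi_{t'}$ over all tuples satisfying $\varphi_\chi$) over-counts, because every $\bvec b \in B^p$ has an entire $\eqF$-equivalence class of preimages in $(\domF(\A))^p$. Admissibility of $\varphi$ guarantees that both $\varphi_\chi$ and (by induction) $\varphi_{t'}$ are constant on equivalence classes of tuples, so the fix is to weight each summand by the reciprocal of its class size, which is itself $\wal$-expressible as an inner sum of $1$'s. Writing $\bvec x_1,\dots,\bvec x_p$ for the $n$-tuples corresponding to $\bvec y = (y_1,\dots,y_p)$, the translation is
\[
\varphi_t \;=\; \sum_{\bvec x_1,\dots,\bvec x_p\,:\,\bigwedge_i \domF(\bvec x_i)\,\land\,\varphi_\chi}\; \frac{\varphi_{t'}}{\displaystyle\sum_{\bvec x'_1,\dots,\bvec x'_p\,:\,\bigwedge_i \eqF(\bvec x_i,\bvec x'_i)} 1}.
\]
An analogous averaging handles $w(s_1,\dots,s_n)$ when the $s_j$ contain nontrivial function symbols, by quantifying witness tuples $\bvec z_j$ for each $\varphi_{s_j}$ and dividing by the number of such witness configurations.

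The main obstacle is precisely this summation case: the proof that the division yields the correct $\B$-value depends on three ingredients working together — the admissibility-based invariance of $\varphi_\chi$ and $\varphi_{t'}$ under $\eqF$, the surjectivity of $h$ onto $B$ so that every element of $B^p$ is indeed accounted for, and the observation that the denominator is always a positive integer (since $\bvec x_i$ itself lies in its own $\eqF$-class by reflexivity, the inner sum is at least $1$, so neither a zero nor a $\bot$ division arises). Once this key case is in place, all other clauses compose mechanically from the inductive hypothesis, and the $\Rbot$-arithmetic conventions for $\bot$ propagate correctly because they do so termwise in each translated construct.
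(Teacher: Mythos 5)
Your proposal is correct and follows essentially the same approach as the paper: a mutual structural induction on formulas and weight terms, with the routine cases handled as for classical first-order interpretations, and the summation case resolved by the same averaging trick of dividing each summand by the size of its $\eqF$-equivalence class. The paper's proof sketch only spells out the summation case (giving exactly the formula you propose), so you have simply filled in the standard clauses the paper declares ``well known and straightforward'' and added the helpful observations about why the division is well-defined (reflexivity of $\eqF$ ensures the denominator is a positive integer) and sound (admissibility of $\varphi$ plus surjectivity of $h$).
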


\begin{proof}[Proof (sketch)]
  \let\qed\relax
  As this result
  is well known and straightforward to prove for classical
  first-order interpretations,
  we only deal here with summation terms, which are the main new aspect.
  Let $t$ be of the form $\sum_{y : \gamma} t'$.
  Then for $\varphi_t$ we take
  \( \sum_{\bvec x : \varphi_\gamma}
  {\varphi_{t'}(\bvec x_1,\dots,\bvec x_k,\bvec x)}
  /
  (\sum_{\bvec x' : \eqF(\bvec x,\bvec x')} 1)\).
\end{proof}

\subsection{Proof of Theorem~\ref{theormain}}
\label{subproofmainthm}

We sketch the proof of Theorem~\ref{theormain}.  For clarity of
exposition, we present it first for single inputs, i.e., the case $m=1$.
We present three Lemmas which can be chained together to obtain
the theorem.

\subparagraph*{Piecewise linear functions}

We can naturally model piecewise linear (PWL) functions from $\R$
to $\R$ as weighted structures, where the elements are simply the
pieces.  Each piece $p$ is defined by a line $y=ax+b$ and left
and right endpoints.  Accordingly, we use a vocabulary
$\Upspwl_1$ with four unary weight functions indicating $a$, $b$,
and the $x$-coordinates of the endpoints.  (The left- and
rightmost pieces have no endpoint; we set their
$x$-coordinate to $\bot$.)

For $m=1$ and $\ell=2$, the proof of the following Lemma is based on the same
ideas as in the proof sketch we gave for Lemma~\ref{lemintm}.
For $m>1$, PWL functions from $\R^m$ to $\R$ are more complex;
the vocabulary $\Upspwl_m$ and a general proof of the lemma
will be described in Section~\ref{submultiple}.

\begin{lemma} \label{lempwl}
  Let $m$ and $\ell$  be natural numbers.
  There is an $\wal$ translation from $\Upsnet(m,1)$ to
  $\Upspwl_m$ that transforms every network $\Net$ in $\K(m,\ell)$ into a proper
  weighted structure representing $\Func\Net$.
\end{lemma}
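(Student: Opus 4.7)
The plan is to construct the $\wal$ translation in stages, starting with the easy cases and building up. I first dispose of the base case $\ell = 1$ trivially: the network computes a single affine map, which can be represented by a translation whose domain is defined so as to have exactly one element (for instance, identifying the domain tuple with $\inn_1$), with slope coordinates read directly from $w(\inn_i,\out_1)$, bias from $b(\out_1)$, and endpoint weight functions set to $\bot$.

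For $m = 1$ and $\ell = 2$, I follow the sketch of Lemma~\ref{lemintm}. Take a $2$-ary translation whose domain consists of ordered pairs $(u_1, u_2)$ of hidden neurons with nonzero input weights such that $u_2$ is a successor breakpoint of $u_1$; the two extra boundary pieces (the leftmost and rightmost rays) can be encoded with placeholder tuples and flagged by their $\StartX$ or $\EndX$ being $\bot$. The breakpoint coordinates $\breakx(u) = -b(u)/w(\inn_1,u)$ give the endpoint weights directly, and $\eqF$ identifies pairs with coinciding breakpoint coordinates. The slope and bias weight functions are obtained by evaluating the network term $\eval_1$ from Example~\ref{exeval} at the midpoint and at one endpoint of the piece and solving the resulting $2 \times 2$ linear system, a straightforward combination of $\wal$ weight terms.

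For the general case $m, \ell \geq 2$, I represent pieces of $\Func\Net$ as cells of the polytopal decomposition of $\R^m$ induced by the hidden neurons. In the depth-$2$ case, this is literally the arrangement of hyperplanes $\{\bvec x : b(u) + \sum_i w(\inn_i,u) x_i = 0\}$, one per hidden neuron $u$, and every cell is determined by a sign vector on these hyperplanes. I would encode cells by tuples of hidden neurons of some fixed arity depending on $m$ and $\ell$, chosen so that a canonical interior representative point of the cell is definable from the tuple as a convex combination (with fixed rational coefficients) of vertex intersections. The formula $\eqF$ then identifies different tuples witnessing the same cell. The slope and bias weight functions are recovered, as in the $m=1$ case, by evaluating $\eval_1$ at $m+1$ such representative points and solving a fixed-size linear system. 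For $\ell \geq 3$, where the activation boundaries of later-layer neurons are piecewise linear rather than genuine hyperplanes, I would proceed by induction on $\ell$: first translate the depth-$(\ell-1)$ subnetwork ending at the last hidden layer into a multi-output PWL structure, and then compose with a further $\wal$ translation that refines each piece according to the sign pattern of the last-layer outputs (realizing the ReLUs) and applies the final affine combination.

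The main obstacle will be the $m \geq 2$ case: producing the canonical interior representative point of each cell uniformly in $\wal$ and verifying that $\eqF$ properly identifies redundant encodings. To manage this cleanly, I expect to pass through intermediate structures over $\Upsarr$ and $\Upscell$ (as suggested by the vocabulary names appearing in the paper), in which cells are constructed inductively by a cylindrical decomposition on the input coordinates — projecting the family of hyperplanes down one coordinate at a time and stacking cells in the last coordinate — so that each cell is described by a bounded-arity tuple of bounding neurons and the whole translation decomposes into a manageable chain of $\wal$ translations.
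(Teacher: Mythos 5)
Your overall plan --- a layer-by-layer induction, representing pieces as cells of a hyperplane arrangement obtained via a cylindrical decomposition, passing through intermediate $\Upsarr$/$\Upscell$ structures --- is essentially the route the paper takes. The paper likewise builds a PWL structure for the function computed at every neuron, one layer at a time, with explicit scaling, summing, and $\ReLU$ steps, each of which constructs a fresh cylindrical decomposition so that the current functions are affine on each $m$-cell.

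Where you genuinely diverge is in how the affine coefficients on a piece are obtained. You propose to evaluate the black-box term $\eval_1$ (Example~\ref{exeval}) at $m+1$ representative points of each cell and solve a fixed-size linear system. The paper never invokes $\eval_1$ in this proof: it carries the coefficients of the component functions through the induction \emph{symbolically}, multiplying by the edge weight at the scaling step, summing the contributing old polytopes' coefficients (plus the bias) at the summing step, and zeroing them out on the negative side at the $\ReLU$ step. Besides being cleaner, the symbolic route sidesteps a real gap in your method: a proper $\PWLV{m}$-structure (Definition~\ref{defproperpwl}) must also carry component-function coefficients on the lower-dimensional polytopes, i.e.\ those whose position vector has some $=$ entries. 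The closure of such a polytope does not contain $m+1$ affinely independent points, so your linear system is rank-deficient there, and in-cell evaluation cannot determine the coefficients; you would have to pull them from an incident full-dimensional cell and argue by continuity, which you do not address. If you either switch to symbolic propagation of coefficients or explicitly inherit coefficients on low-dimensional polytopes from an incident top-dimensional one, your plan goes through.
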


\subparagraph*{Hyperplane arrangements}

An \emph{affine function} on $\R^d$ is a function of the form
$a_0+a_1x_1+\cdots+a_dx_d$.  An affine \emph{hyperplane} is the
set of zeros of some  non-constant affine function (i.e. where
at least one of the $a_i$ with $i>0$ is non-zero). A \emph{hyperplane
arrangement} is a collection of affine hyperplanes.

We naturally model a hyperplane arrangement as a weighted
structure, where the elements are the hyperplanes.  The
vocabulary $\hpArrV{d}$ simply consists of unary weight functions
$a_0$, $a_1$, \dots, $a_d$ indicating the coefficients of the
affine function defining each hyperplane.

\begin{remark}
  \label{remhpdup}
  An $\hpArrV{d}$-structure may have duplicates, i.e., different elements representing the same hyperplane. This happens when they have the same coefficients up to a constant factor. In our development, we will allow structures with duplicates as representations of hyperplane arrangements.
\end{remark}

\subparagraph*{Cylindrical decomposition}

We will make use of a linear version of the notion of cylindrical
decomposition (CD) \cite{basu_algorithms}, which we call
\emph{affine CD}\@.  An affine CD of $\R^d$ is a sequence
$\D=\D_0,\dots,\D_d$, where each $\D_i$ is a partition of $\R^i$.
The blocks of partition $\D_i$ are referred to as $i$-cells or
simply cells.  The precise definition is by induction on $d$.
For the base case, there is only one possibility $\D_0 =
\{\R^0\}$.  Now let $d>0$.  Then $\D_0,\dots,\D_{d-1}$ should
already be an affine CD of $\R^{d-1}$.  Furthermore, for every
cell $C$ of $\D_{d-1}$, there must exist finitely many affine
functions $\xi_1$, \dots, $\xi_r$ from $\R^{d-1}$ to $\R$,
where $r$ may depend on $C$.  These are called the \emph{section
mappings} above $C$, and must satisfy $\xi_1<\cdots<\xi_r$ on 
$C$.  In this way, the section mappings induce a partition of the
cylinder $C \times \R$ in \emph{sections} and \emph{sectors}.
Each section is the graph of a section mapping, restricted to
$C$.  Each sector is the volume above $C$ between two consecutive
sections.  Now $\D_d$ must equal $\{C\times S \mid C \in
\D_{d-1}$ and $S$ is a section or sector above $C\}$.

The ordered sequence of cells $C \times S$ formed by the
sectors and sections of $C$ is called the \emph{stack above $C$},
and $C$ is called the \emph{base cell} for these cells.

An affine CD of $\R^d$ is \emph{compatible} with a hyperplane
arrangement $\A$ in $\R^d$ if every every $d$-cell $C$ lies
entirely on, or above, or below every hyperplane $h=0$.
(Formally, the affine function $h$ is everywhere zero, or everywhere
positive, or everywhere negative, on $C$.)

We can represent a CD compatible with a hyperplane arrangement as
a weighted structure with elements of two sorts: cells and
hyperplanes.  There is a constant $o$ for the ``origin cell''
$\R^0$.  Binary relations link every $i+1$-cell to its base
$i$-cell, and to its delineating section mappings.  (Sections are viewed
as degenerate sectors where the two delineating section mappings are
identical.) Ternary relations give the order of two hyperplanes in $\R_{i+1}$
above an $i$-cell, and whether they are equal. The vocabulary for CDs of $\R^d$ is
denoted by $\Upscell_d$.

\begin{lemma} \label{lemcd}
  Let $d$ be a natural number.
  There is an $\wal$ translation from $\Upsarr_d$ to $\Upscell_d$
  that maps any hyperplane arrangement $\A$ to a CD that is
  compatible with $\A$.
\end{lemma}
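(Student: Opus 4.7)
The plan is to simulate the classical projection/lifting algorithm for cylindrical decomposition, in its affine specialization, within an $\wal$ translation. Because $d$ is fixed, the projection tower can be unrolled into formulas and weight terms of bounded arity.

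For the projection phase, I would describe inductively, for each level $i$ from $d$ down to $0$, a set of \emph{projected hyperplanes} in $\R^i$, each represented by a tuple of original hyperplanes from the input arrangement $\A$. A level-$i$ projected hyperplane arises either as a ``vertical'' level-$(i+1)$ hyperplane (the coefficient of $x_{i+1}$ being zero), or as the projection $\{x\in\R^i : \xi_1(x)=\xi_2(x)\}$ of the intersection of a pair of non-vertical level-$(i+1)$ hyperplanes, where $\xi_j$ is obtained by solving for $x_{i+1}$. The coefficients of the resulting projected hyperplane are rational functions of the coefficients of the inputs, so they are directly expressible as $\wal$ weight terms. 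Unrolling gives a representation of every level-$i$ projected hyperplane as a tuple of $O(2^{d-i})$ original hyperplanes.

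For the lifting phase, I would inductively compute, within $\wal$ weight terms, a \emph{sample point} $\bvec p_C\in\R^i$ for every $i$-cell $C$. The base case $i=0$ has the empty sample point. Given sample points for all $i$-cells, the stack above an $i$-cell $C$ is obtained by evaluating every non-vertical level-$(i+1)$ projected hyperplane at $\bvec p_C$, sorting the resulting $x_{i+1}$-values, and taking the induced alternating sequence of sections and sectors; sample points for the new cells are set to $(\bvec p_C,\xi_j(\bvec p_C))$ for sections, the midpoint $(\bvec p_C,(\xi_j(\bvec p_C)+\xi_{j+1}(\bvec p_C))/2)$ for bounded sectors, and $(\bvec p_C,\xi_1(\bvec p_C)-1)$ or $(\bvec p_C,\xi_r(\bvec p_C)+1)$ for the unbounded top and bottom sectors. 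Sorting a finite list and selecting its $k$-th smallest entry are expressible as $\wal$ weight terms by summation over index pairs with a comparison formula, so the entire construction stays inside $\wal$.

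A $d$-cell is then encoded as a single tuple of original hyperplanes that records, at each level $i\le d$, the one or two projected level-$i$ hyperplanes delineating the section or sector at that level on the path from $\R^0$ up to the cell (with reserved markers for unbounded endpoints). The total arity is bounded by a constant depending only on $d$. The formula $\domF$ checks the well-formedness and consistency of such encodings; $\eqF$ compares cells via their sample points, which also resolves the duplication issue noted in Remark~\ref{remhpdup}; the relations linking each $(i+1)$-cell to its base $i$-cell and to its delineating section mappings are read off directly from the tuple; and the ternary ordering relations among section mappings above a cell are expressed by evaluating their $x_{i+1}$-components at the sample point of that cell and comparing the results. Compatibility of the output CD with $\A$ is then immediate from the construction of the level-$d$ stacks: each $d$-cell either coincides with the zero set of an original hyperplane or lies in one of its open half-spaces, and analogous statements at lower levels follow by the inductive unrolling. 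The main obstacle will be administrative, namely assembling a translation of fixed arity that encodes cells of every intermediate dimension uniformly, and verifying the admissibility conditions (that $\eqF$ is an equivalence on $\domF$ and that every defined relation and weight term respects it); the conceptual linchpin, however, is the use of sample points, which turn what would naively be assertions requiring quantification over real coordinates into concrete arithmetic comparisons that $\wal$ can compute directly.
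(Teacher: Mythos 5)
Your proposal is correct and follows the same high-level scheme as the paper (projection phase followed by buildup, i.e.\ vertical decomposition), but the key technical device for making the stack order definable in $\wal$ is genuinely different. The paper computes, for each $i$-cell, the coordinates of all of its (up to $2^i$) \emph{corner points}, identified by length-$i$ strings over a two-letter alphabet, and compares two candidate section mappings by projecting every corner onto each hyperplane, with the ``strict at some corner'' test accounting for the fact that a tie at one corner can coexist with a strict inequality at another. You instead compute a single \emph{sample point} per cell (the section value for sections, midpoints for bounded sectors, $\pm 1$ offsets for the unbounded ones) and compare section mappings by evaluating both at that one point. This works because compatibility with the projected arrangement forces the sign of the difference of two section mappings to be constant over any cell, so one representative suffices, and the midpoint of a sector is interior, so no tie-breaking case analysis is needed. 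The sample-point route buys a conceptually cleaner ordering test and dissolves the duplicate-representation issue of Remark~\ref{remhpdup} in the most natural way, via $\eqF$ comparing sample points, at the cost of a bit more arithmetic; the paper's corner-point route avoids introducing new geometric data (midpoints, offsets) beyond the vertices of the decomposition. The paper also factors the construction into a chain of short translations (one per dimension plus auxiliary sublemmas) whereas you assemble one translation of arity $O(2^d)$ directly; both are constant in $d$, so this is a presentational difference. One small inaccuracy in the last step: a $d$-cell that is a section lies \emph{on} an original hyperplane but does not ``coincide with the zero set'' (it is a proper subset of it); the compatibility argument is otherwise sound.
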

\begin{proof}[Proof (sketch)]
  We follow the method of vertical decomposition
  \cite{halperin_arr}.
  There is a projection phase, followed
  by a buildup phase.  For the projection phase, let $\Arr_d:=\A$.
  For $i=d,\dots,1$, take all intersections between hyperplanes in
  $\Arr_i$, and project one dimension down, i.e., project on the
  first $i-1$ components.  The result is a hyperplane arrangement
  $\A_{i-1}$ in $\R^{i-1}$.  For the buildup phase, let $\D_0 :=
  \{\R^0\}$.  For $i=0,\dots,d-1$, build a stack above every cell
  $C$ in $\D_i$ formed by intersecting $C \times \R$ with all
  hyperplanes in $\Arr_{i+1}$.  The result is a partition
  $\D_{i+1}$ such that $\D_0,\dots,\D_{i+1}$ is a CD of
  $\R^{i+1}$ compatible with $\Arr_{i+1}$.  This algorithm is
  implementable in $\wal$.
\end{proof}

\subparagraph*{Ordered formulas and cell selection}

Let $\psi$ be the $\bblin$ formula under consideration.
Let $x_1,\dots,x_d$ be an enumeration
of the set of variables in $\psi$, free or bound.  We may
assume that $\psi$ is in prenex normal form $Q_{1}x_{1} \dots
Q_dx_d\, \chi$, where each $Q_i$ is
$\exists$ or $\forall$, and $\chi$ is quantifier-free.

We will furthermore assume that $\psi$ is \emph{ordered}, meaning
that every atomic subformula is of the form
$F(x_{i_1},\dots,x_{i_m})=x_j$ with
$i_1<\cdots<{i_m}<j$, or is a \emph{linear constraint} of the form
$a_0+a_1x_1+\cdots+a_dx_d > 0$. By using
extra variables, every $\bblin$ formula can be brought in ordered
normal form.

Consider a PWL function $f : \R \to \R$.  Every
piece is a segment of a line $ax+b=y$ in $\R^2$.
We define the \emph{hyperplane arrangement
corresponding to $f$ in $d$ dimensions} to consist of all
hyperplanes $ax_i+b=x_j$, for all lines $ax+b=y$ of $f$, where $i
< j$ (in line with the ordered assumption on the formula
$\psi$).  We denote this arrangement by $\Arr_f$.  

Also the query $\psi$ gives rise to a hyperplane
arrangement, denoted by $\Arr_\psi$, which
simply consisting of all hyperplanes corresponding to the linear
constraints in $\psi$.

For the following statement, we use the disjoint union $\uplus$
of two weighted structures.  Such a disjoint union can itself be
represented as a weighted structure over the disjoint union of
the two vocabularies, with two extra unary relations to
distinguish the two domains.

\begin{lemma} \label{lemsel}

  Let $\psi \equiv Q_{1}x_{1} \dots Q_dx_d\, \chi$
  be an ordered closed $\bblin$ formula
  with function symbol $F$ of arity $m$.
  Let $k \in \{0,\dots,d\}$, and let $\psi_k$
  be $Q_{k+1}x_{k+1} \dots Q_dx_d\, \chi$.  There exists a unary
  $\wal$ query over $\Upspwl_m \uplus \Upscell_d$ that
  returns, on any piecewise linear function $f:\R^m\to\R$
  and any
  CD $\D$ of $\R^d$ compatible with $\Arr_f \cup
  \Arr_\psi$, a set of cells in $\R^k$ whose union equals \(
    \{(v_1,\dots,v_k) \mid \Rbf,f \models \psi(v_1,\dots,v_k)\}
    \).
\end{lemma}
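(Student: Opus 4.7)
The plan is a downward induction on $k$, from $k=d$ down to $k=0$, producing a unary $\wal$ formula $\Phi_k(z)$ (with $z$ ranging over cells) that selects exactly the $k$-cells of $\D$ whose union equals the truth set of $\psi_k$. A key auxiliary tool is a $\wal$-expressible recursive assignment of a representative point $\vec v_C$ to each cell $C$, defined along the base-cell relation using the CD's section mappings (for each cell above $C'$ one takes, e.g., the midpoint of the delineating sections, with a suitable convention for unbounded sectors). Compatibility of $\D$ with $\Arr_f\cup\Arr_\psi$ then guarantees that every hyperplane in either arrangement has constant sign on every $d$-cell, and that the piece of $f$ applicable to a cell's projection is unique.

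\textbf{Base case ($k=d$).} Since $\psi_d=\chi$ is quantifier-free, I evaluate each atom of $\chi$ on each $d$-cell $C$ using its representative point $\vec v_C$. Linear-constraint atoms $a_0+a_1x_1+\cdots+a_dx_d>0$ are handled by direct substitution into $\vec v_C$, whose result coincides with the constant sign of the corresponding hyperplane from $\Arr_\psi$ on $C$. For an atom $F(x_{i_1},\dots,x_{i_m})=x_j$, I iterate in $\wal$ over the pieces of $f$ (stored in $\Upspwl_m$) to locate the unique piece $p$ whose polytope contains the projection of $\vec v_C$ onto the coordinates $i_1,\dots,i_m$, apply $p$'s affine rule $\alpha_p$, and compare the outcome to the $j$-th coordinate of $\vec v_C$. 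Compatibility of $\D$ with $\Arr_f$, which must be set up to contain both the polytope-boundary hyperplanes of each piece (pulled back along $y_r\mapsto x_{i_r}$) and the equation hyperplane $\alpha_p(x_{i_1},\dots,x_{i_m})-x_j=0$, ensures the answer is constant on $C$. A Boolean combination then yields $\Phi_d$.

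\textbf{Inductive step.} Assume $\Phi_k$ has been built. Every $(k-1)$-cell $C'$ is the base of a stack of $k$-cells accessible through the base-cell relation of $\Upscell_d$, and this stack partitions the line $\{(\vec v,x_k)\mid x_k\in\R\}$ above each $\vec v\in C'$. If $Q_k=\exists$, let $\Phi_{k-1}(C')$ assert the existence of a $k$-cell $C$ above $C'$ satisfying $\Phi_k(C)$; if $Q_k=\forall$, let it assert $\Phi_k(C)$ for every such $C$. Correctness is immediate because $\psi_k(v_1,\dots,v_{k-1},x_k)$ has constant truth value across points in any single $k$-cell of the stack.

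\textbf{Main obstacle.} The hard part is the base case. It needs (i) a careful choice of $\Arr_f$ so that compatibility forces both a unique piece of $f$ to apply on a cell and the sign of each $F$-atom hyperplane to be constant on the cell; (ii) a $\wal$ construction of a representative point for every cell by recursion up the base-cell chain, with delicate handling of unbounded sectors and of coinciding section mappings; and (iii) a $\wal$ mechanism for matching a cell's projection to the correct piece of $f$ in $\Upspwl_m$. Once these pieces are in place, the inductive step becomes a routine translation of $Q_k$ into the corresponding $\wal$ quantifier over the stack relation, and chaining the $\Phi_k$'s yields the desired query for any given $k$.
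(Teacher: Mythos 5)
Your proposal is correct, and it reaches the conclusion by a genuinely different route from the paper's in the base case $k=d$. The paper evaluates a subformula $F(x_{i_1},\dots,x_{i_m})=x_j$ \emph{structurally}: it iterates over the pieces of $f$, finds the base cells covered by each piece, climbs the stacks to the $(j-1)$-dimensional level, picks out the section defined by the piece's equation hyperplane, and climbs again to $d$-cells; linear constraints are handled analogously by selecting sections above or below the constraint hyperplane. You instead construct, as a $\wal$ weight term built by unrolled recursion of depth $d$ along the base-cell chain, a \emph{representative point} $\vec v_C$ in each cell, and then evaluate every atom of $\chi$ arithmetically at $\vec v_C$ (finding the relevant piece of $f$ by sign-testing the projection of $\vec v_C$ against the breakplanes stored in $\Upspwl_m$). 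Compatibility of $\D$ with $\Arr_f\cup\Arr_\psi$ makes this evaluation constant on each cell, which is exactly what both approaches need. The inductive step -- projecting to base cells, with $\exists$ as "some stack element selected" and $\forall$ as "all stack elements selected" -- is the same in both (the paper phrases $\forall$ as $\neg\exists\neg$, which is equivalent). It is worth noting that the paper's structural argument is not coordinate-free under the hood: its building block $\stackLtiR{i}$ (Lemma~\ref{lemcdstackord}) is itself defined by computing corner coordinates $\coordT$ and projections $\projT$, so both proofs ultimately rely on $\wal$-definable cell coordinates. What your representative-point approach buys is uniformity and simplicity -- all atoms are treated by a single substitute-and-test mechanism -- at the cost of slightly more care with unbounded sectors and coincident section mappings when picking $\vec v_C$; what the paper's approach buys is that it feeds directly off the $\hpPosCCHpR$/$\hpNegCCHpR$/$\hpOnCCHpR$ machinery that is reused elsewhere (e.g., in Lemmas~\ref{lempwlsumcd} and~\ref{lempwlrelucd}) for the multivariate construction of $\Func{\Net}$.
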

\begin{proof}[Proof (sketch)]
  As already mentioned we focus first on $m=1$.
  The proof is
  by downward induction on $k$.  The base case $k=d$ deals with the
  quantifier-free part of $\psi$.  We focus on the atomic
  subformulas. Subformulas of the form
  $F(x_i)=x_j$ are dealt with as follows.
  For every piece $p$ of $f$, with line
  $y=ax+b$, select all $i$-cells
  where $x_i$ lies between $p$'s endpoints.  For each such
  cell, repeatedly take all cells in the stacks above it
  until we reach $j-1$-cells.  Now for each of these
  cells, take the section in its stack given by the section mapping
  $x_j=ax_i+b$.  For each of these sections, again repeatedly
  take all cells in the stacks above it until we reach $d$-cells.
  Denote the obtained set of $d$-cells by $S_p$; the desired
  set of cells is $\bigcup_p S_p$.

  Subformulas that are linear constraints,
  where $i$ is the largest index such that $a_1$ is nonzero,
  can be dealt with by taking, above every
  $i-1$-cell all sections that lie above the
  hyperplane corresponding to the constraint, if $a_i>0$, or,
  if $a_i < 0$, all sections that lie below it.
  The described algorithm for the quantifier-free part can
  be implemented in $\wal$.

  For the inductive case, if $Q_{k+1}$ is $\exists$, we must show
  that we can project a set of cells down one dimension, which is
  easy given the cylindrical nature of the decomposition; we just
  move to the underlying base cells.  If $Q_{k+1}$ is $\forall$,
  we treat it as $\neg \exists \neg$, so we complement the
  current set of cells, project down, and complement again.
\end{proof}

To conclude, let us summarise the structure of the whole proof.
We are given a neural network $\Net$ in $\K(m,\ell)$, and we want
to evaluate a closed $\bblin$ formula $\psi$. We assume the
query to be in prenex normal form and ordered. We start with an
interpretation that transforms $\Net$ to a structure representing
the piecewise linear function $\Func{\Net}$ (Lemma~\ref{lempwl}).
Then, using another interpretation, we expand the structure by
the hyperplane arrangement obtained from the linear pieces of
$\Func{\Net}$ as well as the query. Using Lemma~\ref{lemcd}, we
expand the current structure by a cell decomposition compatible
with the hyperplane arrangement. Finally, using
Lemma~\ref{lemsel} we inductively process the query on this cell
decomposition, at each step selecting the cells representing all
tuples satisfying the current formula. Since the formula $\psi$
is closed, we eventually either get the single $0$-dimensional
cell, in which case $\psi$ holds, or the empty set, in which case
$\psi$ does not hold.

\subsection{Extension to multiple inputs} \label{submultiple}

For $m>1$, the notion of PWL function $f:\R^m\to\R$ is more
complex. We can conceptualize our representation of $f$ as a decomposition of 
$\R^m$ into polytopes where, additionally, every polytope $p$ is accompanied by an
affine function $f_p$ such that $f = \bigcup_p f_p|_p$. We call $f_p$ the \emph{component function} of $f$ on $p$.
Where for $m=1$ each pies of piece $f$ was delineated by just two breakpoints, now our polytope in $\R^m$
may be delineated by many hyperplanes, called breakplanes. Thus, the vocabulary $\PWLV{m}$ includes
the position of a polytope relative to the breakplanes, 
indicating whether the polytope is on the breakplane, or on the positive or negative side of it.
We next sketch how to prove Lemma~\ref{lempwl} in its generality.  The
proof of Lemma~\ref{lemsel} poses no additional problems.



We will define a
PWL function $f_u$ for every neuron $u$ in the network; the final result is then
$f_{\rm out}$.  To represent these functions for
every neuron, we simply add one extra relation symbol, indicating to
which function each element of a $\PWLV{m}$-structure belongs.
The construction is by induction on the layer number.
At the base of the induction are the input neurons.  The $i$-th
input neuron defines the PWL functions where there is only one polytope
($\R^m$ itself), whose section mapping is the function $\bvec x
\mapsto x_i$.  

\textit{Scaling:}
For any hidden neuron $u$ and incoming edge $v\to u$ with weight
$w$, we define an auxiliary function $f_{v,u}$ which simply scales $f_v$ by
$w$.

To represent the function defined by $u$, we need to
sum the $f_{v,u}$'s and add $u$'s bias; and apply ReLU\@.  We
describe these two steps below, which can be implemented in
$\wal$.  For $u=\rm out$, the ReLU step is omitted.

 \textit{Summing:}
 For each $v\to u$, let $\D_{v,u}$ be the CD
    for $f_{v,u}$, and let $\Arr_{v,u}$ be the set of hyperplanes
    in $\R^m$ that led to $\D_{v,u}$.  We define the arrangements
    $\Arr_u = \bigcup_v \Arr_{v,u}$ and $\Arr = \bigcup_u \Arr_{u}$.
    We apply Lemma~\ref{lemcd} to $\Arr$ to obtain a CD
    $\D$ of $\Arr$, which is also compatible with each $\Arr_u$.
    Every $m$-cell $C$ in $\D$ is contained in 
    a unique polytope $p_{v,u}^{C} \in f_{v,u}$ for every $v\to u$.  We can define $p_{v,u}^{C}$
    as the polytope that is positioned the same with respect to
    the hyperplanes in $\Arr_{v,u}$ as $C$ is. 
    Two $m$-cells $C$ and $C'$ are called $u$-equivalent if $p_{v,u}^{C} = p_{v,u}^{C'}$
    for every $v \to u$. We can partition $\R^m$ in polytopes formed by merging
    each $u$-equivalence class $[C]$. Over this partition we define a PWL function $g_u$.
    On each equivalence class $[C]$, we define $g_u$ as $\sum_{v \to u} f_{p_{v,u}^{C}}$,
    plus $u$'s bias. The constructed function $g_u$ equals $b(u)+\sum_v f_{u,v}$.

\textit{ReLU:} 
To represent $\ReLU(g_u)$, we construct the new
    arrangements $\B_u$ formed by the union of $\Arr_u$ with all
    hyperplanes given by component functions of $g_u$, and $\B = \bigcup_u \B_u$.  
    Again apply Lemma~\ref{lemcd} to $\B$ to obtain a CD
    $\E$ of $\B$, which is compatible with each $B_u$. 
    Again every $m$-cell $C$ in $\E$ is
    contained in a unique polytope $p_{u}^{C}$ of $g_u$ with respect to $\Arr_u$.
    Now two $m$-cells $C$ and $C'$ are called strongly $u$-equivalent if 
    they are positioned the same with respect to the hyperplanes in $\B_u$.
    This implies $p_{u}^{C} = p_{u}^{C'}$ but is stronger.
    We can partition $\R^m$ in polytopes formed by merging
    each $u$-equivalence class $[C]$. Over this partition 
    we define a PWL function ${f_u}'$.
    Let $\xi_{u}^{C}$ be the component function of $g_u$ on $p_{u}^{C}$.
    On each equivalence class $[C]$, we define ${f_u}'$ as $\xi_{u}^{C}$ if it
    is positive on $C$; otherwise it is set to be zero.
    The constructed function ${f_u}'$ equals $f_{u}$ as desired.





\section{Conclusion} \label{seconc}

The immediate motivation for this research is explainability and
the verification of machine learning models. In this sense, our
paper can be read as an application to machine learning of
classical query languages known from database theory. The novelty
compared to earlier proposals
\cite{arenas-foil,lorini-logic-explain} is our focus on
real-valued weights and input and output features.  More
speculatively, we may envision machine learning models as genuine
data sources, maybe in combination with more standard databases,
and we want to provide a uniform interface.  For example,
practical applications of large language models will conceivably
also need to store a lot of hard facts. However, just being able
to query them through natural-language prompts may be suboptimal
for integrating them into larger systems. Thus query languages
for machine learning models may become a highly relevant research
direction.

$\wal$ queries will be likely very complex, so our result opens
up challenges for query processing of complex,
analytical SQL queries.  Such queries are at the focus
of much current database systems research, and supported by
recent systems such as DuckDB~\cite{duckdb} and Umbra~\cite{umbra}.
It remains to be investigated to what extent
white-box querying can be made useful in practice.  The
construction of a cell decomposition of variable space turned out crucial
in the proof of our main result.  Such cell decompositions might
be preproduced by a query processor as a novel kind of index data
structure.

While the language $\foR$ should mainly be seen as
an expressiveness benchmark, techniques from SMT
solving and linear programming are being adapted in the context
of verifying neural networks \cite{aws-book}.  Given the
challenge, it is conceivable that for specific classes of
applications, $\foR$ querying can be made practical.

Many follow-up questions remain open.  Does
the simulation of $\foRlin$ by $\wal$ extend to $\foR$?
Importantly, how
about other activations than ReLU \cite{tjeng}?  If we extend $\wal$ with
quantifiers over weights, i.e., real numbers, what is the
expressiveness gain?  Expressing $\foR$ on bounded-depth neural
networks now becomes immediate, but do we get strictly more
expressivity?  Also, to overcome the problem of being unable to
even evaluate neural networks of unbounded depth, it seems
natural to add recursion to $\wal$.  Fixed-point languages with
real arithmetic can be difficult to handle
\cite{BenediktGLS03,GeertsK05}.

The language $\wal$ can work with weighted relational structures
of arbitrary shapes, so it is certainly not restricted to the FNN
architecture.  Thus, looking at other NN architectures is another
direction for further research.  Finally, we mention the question
of designing flexible model query languages where the number of
inputs, or outputs, need not be known in advance
\cite{arenas-foil,arenas-foil2}.

\bibliography{database,extra}

\begin{thebibliography}{10}

\bibitem{ahv_book}
S.~Abiteboul, R.~Hull, and V.~Vianu.
\newblock {\em Foundations of Databases}.
\newblock Addison-Wesley, 1995.

\bibitem{aws-book}
A.~Albarghouthi.
\newblock Introduction to neural network verification.
\newblock {\em Foundations and Trends in Programming Languages}, 7(1--2):1--157, 2021.
\newblock URL: \url{https://verifieddeeplearning.com}.

\bibitem{arenas-foil}
M.~Arenas, D.~B\'aez, P.~Barcel\'o, J.~P\'erez, and B.~Subercaseaux.
\newblock Foundations of symbolic languages for model interpretability.
\newblock In M.~Ranzato et~al., editors, {\em Proceedings 34th Annual Conference on Neural Information Processing Systems}, pages 11690--11701, 2021.

\bibitem{arenas-foil2}
M.~Arenas, P.~Barcel\'o, D.~Bustamenta, J.~Caraball, and B.~Subercaseaux.
\newblock A symbolic language for interpreting decision trees.
\newblock arXiv:2310.11636, 2023.

\bibitem{basu_algorithms}
S.~Basu, R.~Pollack, and M.-F. Roy.
\newblock {\em Algorithms in Real Algebraic Geometry}.
\newblock Springer, second edition, 2008.

\bibitem{BenediktGLS03}
Michael Benedikt, Martin Grohe, Leonid Libkin, and Luc Segoufin.
\newblock Reachability and connectivity queries in constraint databases.
\newblock {\em J. Comput. Syst. Sci.}, 66(1):169--206, 2003.
\newblock \href {https://doi.org/10.1016/S0022-0000(02)00034-X} {\path{doi:10.1016/S0022-0000(02)00034-X}}.

\bibitem{foscadino-explain-survey}
F.~Bodria, F.~Giannotti, R.~Guidotti, F.~Naretto, D.~Pedreschi, and S.~Rinzivillo.
\newblock Benchmarking and survey of explanation methods for black box models.
\newblock {\em Data Mining and Knowledge Discovery}, 37:1719--1778, 2023.

\bibitem{clarkson88}
Kenneth~L. Clarkson.
\newblock Randomized algorithm for closest-point queries.
\newblock {\em SIAM journal on computing}, 17(4):830--847, 1988.

\bibitem{Cybenko89}
George Cybenko.
\newblock Approximation by superpositions of a sigmoidal function.
\newblock {\em Math. Control. Signals Syst.}, 2(4):303--314, 1989.

\bibitem{GeertsK05}
Floris Geerts and Bart Kuijpers.
\newblock On the decidability of termination of query evaluation in transitive-closure logics for polynomial constraint databases.
\newblock {\em Theor. Comput. Sci.}, 336(1):125--151, 2005.
\newblock URL: \url{https://doi.org/10.1016/j.tcs.2004.10.034}, \href {https://doi.org/10.1016/J.TCS.2004.10.034} {\path{doi:10.1016/J.TCS.2004.10.034}}.

\bibitem{goodfellow-book}
I.~Goodfellow, Y.~Bengio, and A.~Courville.
\newblock {\em Deep Learning}.
\newblock MIT Press, 2016.

\bibitem{gg_metafinite}
E.~Gr\"adel and Y.~Gurevich.
\newblock Metafinite model theory.
\newblock {\em Information and Computation}, 140(1):26--81, 1998.

\bibitem{fmta_book}
E.~Gr\"adel, Ph. Kolaitis, L.~Libkin, M.~Marx, J.~Spencer, M.~Vardi, Y.~Venema, and S.~Weinstein.
\newblock {\em Finite Model Theory and Its Applications}.
\newblock Springer, 2007.

\bibitem{halperin_arr}
D.~Halperin.
\newblock Arrangements.
\newblock In J.E. Goodman and J.~O'Rourke, editors, {\em Handbook of Discrete and Computational Geometry}, chapter~24. Chapman \& Hall/CRC, second edition, 2004.

\bibitem{hodges}
W.~Hodges.
\newblock {\em Model Theory}.
\newblock Cambridge University Press, 1993.

\bibitem{Hornik91}
Kurt Hornik.
\newblock Approximation capabilities of multilayer feedforward networks.
\newblock {\em Neural Networks}, 4(2):251--257, 1991.

\bibitem{kkr_cql}
P.C. Kanellakis, G.M. Kuper, and P.Z. Revesz.
\newblock Constraint query languages.
\newblock {\em Journal of Computer and System Sciences}, 51(1):26--52, August 1995.

\bibitem{klug_agg}
A.~Klug.
\newblock Equivalence of relational algebra and relational calculus query languages having aggregate functions.
\newblock {\em Journal of the ACM}, 29(3):699--717, 1982.

\bibitem{umbra}
A.~Kohn, V.~Leis, and Th. Neumann.
\newblock Tidy tuples and flying start: fast compilation and fast execution of relational queries in {Umbra}.
\newblock {\em VLDB Journal}, 30(5):883--905, 2021.

\bibitem{cdbbook}
G.~Kuper, L.~Libkin, and J.~Paredaens, editors.
\newblock {\em Constraint Databases}.
\newblock Springer, 2000.

\bibitem{kwiat-cert-nn}
M.~Kwiatkowska and X.~Zhang.
\newblock When to trust {AI}: Advances and challenges for certification of neural networks.
\newblock In M.~Ganzha, L.A. Maciaszek, M.~Paprycki, and D.~Slezak, editors, {\em Proceedings 18th Conference on Computer Science and Intelligent Systems}, volume~35 of {\em Annals of Computer Science and Information Systems}, pages 25--37. Polish Information Processing Society, 2023.

\bibitem{libkin_sql}
L.~Libkin.
\newblock Expressive power of {SQL}.
\newblock {\em Theoretical Computer Science}, 296:379--404, 2003.

\bibitem{libkin_fmt}
L.~Libkin.
\newblock {\em Elements of Finite Model Theory}.
\newblock Springer, 2004.

\bibitem{LiuALSBK21}
Changliu Liu, Tomer Arnon, Christopher Lazarus, Christopher Strong, Clark Barrett, Mykel~J Kochenderfer, et~al.
\newblock Algorithms for verifying deep neural networks.
\newblock {\em Foundations and Trends{\textregistered} in Optimization}, 4(3-4):244--404, 2021.

\bibitem{lorini-logic-explain}
X.~Liu and E.~Lorini.
\newblock A unified logical framework for explanations in classifier systems.
\newblock {\em Journal of Logic and Computation}, 33(2):485--515, 2023.

\bibitem{lund17}
Scott~M. Lundberg and Su{-}In Lee.
\newblock A unified approach to interpreting model predictions.
\newblock In {\em {NIPS}}, pages 4765--4774, 2017.

\bibitem{faq_sigmodrecord}
Abo~Khamis M., H.Q. Ngo, and A.~Rudra.
\newblock Juggling functions inside a database.
\newblock {\em SIGMOD Record}, 46(1):6--13, 2017.

\bibitem{silva-logic-explain}
J.~Marques-Silva.
\newblock Logic-based explainability in machine learning.
\newblock In L.E. Bertossi and G.~Xiao, editors, {\em Reasoning Web: Causality, Explanations and Declarative Knowledge}, volume 13759 of {\em Lecture Notes in Computer Science}, pages 24--104. Springer, 2023.

\bibitem{molnar-book}
Ch. Molnar.
\newblock {\em Interpretable Machine Learning: A Guide for Making Black Box Models Explainable}.
\newblock Second edition, 2022.
\newblock URL: \url{https://christophm.github.io/interpretable-ml-book}.

\bibitem{duckdb}
M.~Raasveld and H.~M\"uhleisen.
\newblock {DuckDB}: An embeddable analytical database.
\newblock In {\em Proceedings 2019 International Conference on Management of Data}, pages 1981--1984. ACM, 2019.

\bibitem{rudin-stop-explaining}
C.~Rudin.
\newblock Stop explaining black box maching learning models for high stakes decisions and use interpretable models instead.
\newblock {\em Nature Machine Intelligence}, 1:206--215, 2019.

\bibitem{szegedy-intriguing}
Ch. Szegedy, W.~Zaremba, et~al.
\newblock Intriguing properties of neural networks.
\newblock arXiv:1312.6199, 2014.

\bibitem{tarski_decision}
A.~Tarski.
\newblock {\em A Decision Method for Elementary Algebra and Geometry}.
\newblock University of California Press, 1951.

\bibitem{tjeng}
V.~Tjeng, K.Y. Xiao, and R.~Tedrake.
\newblock Evaluating robustness of neural networks with mixed integer programming.
\newblock In {\em Proceedings 7th International Conference on Learning Representations}. OpenReview.net, 2019.

\bibitem{toruncz-faq}
S.~Torunczyk.
\newblock Aggregate queries on sparse databases.
\newblock In {\em Proceedings 39h ACM Symposium on Principles of Databases}, pages 427--443. ACM, 2020.

\bibitem{vbs-wal}
S.~van Bergerem and N.~Schweikardt.
\newblock Learning concepts described by weight aggregation logic.
\newblock In Ch. Baier and J.~Goubault-Larrecq, editors, {\em 29th Annual Conference on Computer Science Logic}, volume 183 of {\em Leibnitz International Proceedings in Informatics}, pages 10:1--10:18. Schloss Dagstuhl--Leibniz-Zentrum f\"ur Informatik, 2021.

\bibitem{wachter-counterfactual}
S.~Wachter et~al.
\newblock Counterfactual explanation without opening the black box: Automated decisions and the {GDPR}.
\newblock {\em Harvard Journal of Law \& Technology}, 31(2):841--887, 2018.

\end{thebibliography}

\appendix
\section{Proof of Lemma~\ref{lemintm} for $m=1$ and $\ell = 2$}\label{secproofint1d}

\newcommand{\ifText}{\textsf{if }}
\newcommand{\thenTextNS}{\textsf{then }}
\newcommand{\thenText}{\ \thenTextNS}
\newcommand{\elseTextNS}{\textsf{else }}
\newcommand{\elseText}{\ \elseTextNS}

\newcommand{\breakF}{\textit{is-break}}
\newcommand{\areaCalc}{\textit{area-calc}}
\newcommand{\sameSSurf}{\textit{same-side-area}}
\newcommand{\sameSSurfFirst}{\textit{same-side-area-first}}
\newcommand{\sameSSurfLast}{\textit{same-side-area-last}}
\newcommand{\sameSSurfEnds}{\textit{same-side-area-ends}}
\newcommand{\zeroOffset}{\textit{zeropoint-offset}}
\newcommand{\zeroOffsetFirst}{\textit{zeropnt-offset-first}}
\newcommand{\zeroOffsetEnds}{\textit{zeropnt-offset-ends}}
\newcommand{\diffSSurf}{\textit{diff-side-area}}
\newcommand{\diffSSurfFirst}{\textit{diff-side-area-first}}
\newcommand{\diffSSurfLast}{\textit{diff-side-area-last}}
\newcommand{\diffSSurfEnds}{\textit{diff-side-area-ends}}
\newcommand{\sameSide}{\textit{same-side}}
\newcommand{\sameSideFirst}{\textit{same-side-first}}
\newcommand{\sameSideLast}{\textit{same-side-last}}
\newcommand{\sameSideEnds}{\textit{same-side-ends}}
\newcommand{\breakIB}{\textit{break-in-bounds}}
\newcommand{\firstB}{\textit{first-break}}
\newcommand{\lastB}{\textit{last-break}}
\newcommand{\pieceIB}{\textit{piece-in-bounds}}
\newcommand{\samePiece}{\textit{same-piece}}
\newcommand{\integrateT}{\mathsf{integrate}}

We thought it would be instructive to
prove the following special case separately in detail,
since the proof is elementary and easy to follow.

\begin{lemma}[Repetition of Lemma~\ref{lemintm} for $m=1$ and $\ell = 2$]
  There exists an $\wal$ term $t$ over $\Upsnet(1,2)$ with two
  additional weight constant symbols $\mathit{left}$ and
  $\mathit{right}$ such that for
  any network $\Net$ in $\K(1,2)$ and values $a$ and $b$ for
  $\mathit{left}$ and $\mathit{right}$,
  we have $t^{\Net,a,b} = \int_a^b \Func\Net$.
\end{lemma}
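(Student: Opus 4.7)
My plan is to express the integral as a signed sum of trapezoidal areas between successive ``active $x$-coordinates'' in $[\mathit{left},\mathit{right}]$. Since $f := \Func\Net$ is continuous piecewise linear, with possible slope changes only at the quasi-breakpoints $\breakx(u) := -b(u)/w(\inn_1,u)$ for hidden neurons $u$ with $w(\inn_1,u)\neq 0$, the restriction of $f$ to any maximal subinterval containing no such breakpoint in its interior is affine. Hence the integral decomposes as a sum of trapezoids whose corners are $\mathit{left}$, $\mathit{right}$, and the $\breakx(u)$ that lie strictly between them.

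\textbf{Building blocks.} First I would reuse the evaluation term $\eval_1$ of Example~\ref{exeval}: since it uses the weight constant $\val_1$ for the input value, syntactically substituting any weight term $v$ for $\val_1$ yields a new term $\eval_1[v]$ computing $f(v)$. Setting $\breaky(u) := \eval_1[\breakx(u)]$ then gives the $y$-value at the quasi-breakpoint of $u$. Next I would write a formula $\breakIB(u)$ expressing that $u$ is a hidden neuron with $w(\inn_1,u)\neq 0$ and $\mathit{left} < \breakx(u) < \mathit{right}$; a successor formula $\breakNextF(u_1,u_2)$ saying that $\breakIB(u_1)\wedge \breakIB(u_2)\wedge \breakx(u_1)<\breakx(u_2)$ and that no $u_3$ with $\breakIB(u_3)$ satisfies $\breakx(u_1)<\breakx(u_3)<\breakx(u_2)$; and analogous formulas $\firstB(u)$ and $\lastB(u)$ picking out the leftmost and rightmost in-bounds breakpoints.

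\textbf{Assembling the term.} The integral then splits into three $\wal$ summations: an interior sum over $\breakNextF(u_1,u_2)$ contributing $\tfrac12(\breaky(u_1)+\breaky(u_2))(\breakx(u_2)-\breakx(u_1))$; a left-boundary sum over $\firstB(u)$ contributing $\tfrac12(\eval_1[\mathit{left}]+\breaky(u))(\breakx(u)-\mathit{left})$; and a symmetric right-boundary sum over $\lastB(u)$ contributing $\tfrac12(\breaky(u)+\eval_1[\mathit{right}])(\mathit{right}-\breakx(u))$. Because several hidden neurons may encode the same quasi-breakpoint, each summand must be divided by the number of tuples witnessing the same combination of $x$-coordinates, exactly as in the sketch of Lemma~\ref{lemintm}. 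Finally, when $(\mathit{left},\mathit{right})$ contains no breakpoint all three sums are empty and the integral is the single trapezoid $\tfrac12(\eval_1[\mathit{left}]+\eval_1[\mathit{right}])(\mathit{right}-\mathit{left})$; I would handle this with an \textsf{if-then-else} on the formula $\exists u\,\breakIB(u)$ to switch between the single-trapezoid term and the three-part sum, producing the desired closed weight term $\integrateT$.

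\textbf{Expected obstacle.} The hard part is the bookkeeping for the duplicate-correction denominators: every trapezoid must be divided by a count of the tuples witnessing the same $x$-coordinate combination, and this count differs between the interior sum (pairs of neurons sharing the same pair of $\breakx$ values) and the boundary sums (single neurons sharing the first or last in-bounds $\breakx$ value). Edge cases where some $\breakx(u)$ coincides with $\mathit{left}$ or $\mathit{right}$ are handled cleanly by the strict inequalities in $\breakIB$ together with the continuity of $f$: such boundary-coincident neurons are excluded from all three sums, yet $\eval_1[\mathit{left}]$ and $\eval_1[\mathit{right}]$ still return the correct values of $f$ at the endpoints, so no trapezoid gets miscounted.
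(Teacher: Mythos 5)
Your construction matches the paper's own proof in the appendix essentially verbatim: the same quasi-breakpoint terms $\breakx(u)$ and $\breaky(u)$, the same in-bounds, successor, first-break, and last-break formulas, the same three-way sum of trapezoid areas with duplicate-correction denominators, and the same \textsf{if-then-else} fallback to a single trapezoid when no quasi-breakpoint lies strictly inside $(\mathit{left},\mathit{right})$. The only cosmetic difference is that the paper factors the trapezoid formula into an auxiliary area term and keeps $\breakNextF$ defined over all breaks while adding a separate in-bounds predicate for pairs, rather than folding the in-bounds restriction into $\breakNextF$ itself; the content is identical.
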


Every hidden neuron $u$ may represent a ``quasi breakpoint'' in the piecewise linear function (that is, a point where its slope may change). Concretely, we consider the hidden neurons with nonzero input weights, to avoid dividing by zero. Its $x$-coordinate is given by the weight term $\breakx(u) := -b(u)/w(\inn_1,u)$.  The $y$-value at the breakpoint is then given by $\breaky(u) \coloneqq \eval_1(\breakx(u))$, where $\eval_1$ is the weight term from Example~\ref{exeval} and we substitute $\breakx(u)$ for $\val_1$.

The area (positive or negative) contributed to the integral by a subinterval $(x_1,x_2)$ lying underneath a single piece is simple to calculate (see the $\areaCalc$ formula below). For a piece $(u_1,u_2)$ lying with the bounds of the given interval $(a,b)$, the contributed area is not simply $\areaCalc(\breakx(u_1),\breakx(u_2))$. Indeed, since different neurons may represent the same quasi breakpoint, we must divide by the number of duplicates. All in all, our desired term for $\int_{a}^{b}F$ equals

\begin{tabbing}
  \quad\=\,\,\,\=\,\,\,\=\,\,\,\=\,\,\,\=\,\,\,\=\,\,\,\=\,\,\,\=\kill  
  \(\integrateT() := \) \\[\jot]
  \> \(\ifText \not\exists u\, \breakIB(u) \thenText \areaCalc(a,b) \) \(\elseText \)\\[\jot]
  \> \> \(\displaystyle \sum_{u_1,u_2:\pieceIB(u_1,u_2)} \frac{\areaCalc(\breakx(u_1),\breakx(u_2))}{\sum_{u_1',u_2':\samePiece(u_1,u_2,u_1',u_2')}1}\) \\[\jot]
  \> \> + \(\displaystyle \sum_{u: \firstB(u)} \frac{\areaCalc(a,\breakx(u))}{\sum_{u':\firstB(u')}1}\) \\[\jot]
  \> \> + \(\displaystyle \sum_{u: \lastB(u)} \frac{\areaCalc(\breakx(u),b)}{\sum_{u':\lastB(u')}1}\)
\end{tabbing}
with the following auxiliary terms and formulas

\begin{tabbing}
  \quad\=\,\,\,\=\,\,\,\=\,\,\,\=\,\,\,\=\,\,\,\=\,\,\,\=\,\,\,\=\kill
  \(\hiddenF(u) := E(\inn_1,u) \) \\[\jot]
  \(\breakF(u) := \hiddenF(u) \land w(\inn_1,u) \neq 0 \) \\[\jot]
  \(\breakx(u) := \frac{-b(u)}{w(\inn_1,u)} \) \\[\jot]
  \(\breaky(u) := \eval_1(\breakx(u))\) \\[\jot]
  \( \breakIB(u) := \)\\[\jot]
  \> \( \breakF(u) \land \breakx(u) > a \land \breakx(u) < b \) 
\end{tabbing}

\begin{figure}
  \centering
  \begin{tikzpicture}
    \begin{axis}[
        xlabel = \(x\),
        xmin=-0.75, xmax=0.3,
        ylabel = \(y\),
        ymin=-0.1, ymax=0.9,
        axis lines = middle,
        width = 0.75\linewidth,
        axis equal image
      ]
      \addplot [
        color = black
      ] coordinates {
          (-0.8,0.55) (-0.6,0.5) (0.1, 0.8) (0.5,0)
        };
      \addplot [
        color = black,
        only marks
      ] coordinates {
          (-0.6,0.5) (0.1, 0.8)
        };
      \node[anchor=north west] at (-0.6, 0.5) {$(u_x,u_y)$};
      \node[anchor=south west] at (0.1, 0.8) {$(u_x',u_y')$};
      \addplot [
        color = blue,
        dashed,
        line width=0.125em
      ] coordinates {
          (-0.6,0.65) (0.1,0.65) (0.1,0) (-0.6,0)
        } -- cycle;
        \addplot [
        color = red,
        dashed,
        dash phase=0.5em,
        line width=0.125em
      ] coordinates {
          (-0.6,0.5) (0.1,0.8) (0.1,0) (-0.6,0)
        } -- cycle; 
       \draw[<->, color = blue] (-0.6,0.1) -- node[below] {$u_x' - u_x$} (0.1,0.1);
       \draw[<->, color = blue] (0.15,0) -- node[rotate=90, below] {$\frac12(u_y + u'_y)$} (0.15,0.65);
    \end{axis}
  \end{tikzpicture}
  \caption{Area under a piece where both endpoints are positive. Using the additivity of the integration, all other cases can be reduced to this one.}
  \label{fig:PWL_int_same_side}
\end{figure}
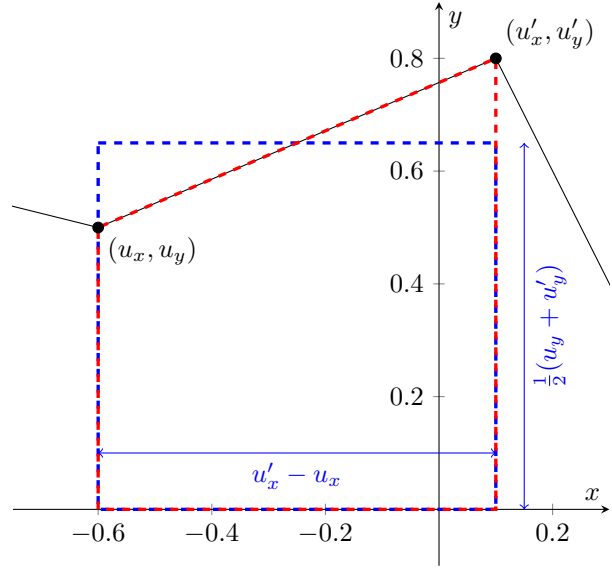

Our task is to sum up the following areas: (1) the area between $a$ and the first quasi breakpoint; (2) the areas between successive quasi breakpoints; and (3) the area between the last quasi breakpoint and $b$.
As noted earlier, we need to divide by the number of duplicates.
Additionally, we also need to catch the case where no quasi
breakpoint lies between $a$ and $b$. All these terms and formulas
are definable in $\wal$:

\begin{tabbing}
  \quad\=\,\,\,\=\,\,\,\=\,\,\,\=\,\,\,\=\,\,\,\=\,\,\,\=\,\,\,\=\kill
  \(\breakNextF(u_1,u_2) := \)\\[\jot]
  \> \( \breakF(u_1) \land \breakF(u_2) \land \breakx(u_1) < \breakx(u_2) \) \\[\jot]
  \> \( {} \land \neg \exists u_3(\breakF(u_3)\)\\[\jot]
  \> \> \( {} \land \breakx(u_1) < \breakx(u_3) < \breakx(u_2)) \)\\[\jot]

  \( \firstB(u) := \)\\[\jot]
  \> \( \breakIB(u) \land \neg \exists u' (\breakNextF(u',u) \land \breakIB(u'))\) \\[\jot]
  \( \lastB(u) := \)\\[\jot]
  \> \( \breakIB(u) \land \neg \exists u' (\breakNextF(u,u') \land \breakIB(u'))\) \\[\jot]
  \( \pieceIB(u_1,u_2) := \)\\[\jot]
  \> \( \breakIB(u_1) \land \breakIB(u_2) \land \breakNextF(u_1,u_2)\) \\[\jot]
  \( \samePiece(u_1,u_2,u_1',u_2') := \) \\[\jot]
  \> \(\breakNextF(u_1,u_2) \land \breakNextF(u_1',u_2') \) \\[\jot]
  \> \({} \land \breakx(u_1') = \breakx(u_1) \land \breakx(u_2') = \breakx(u_2) \)\\[\jot]
  \(\areaCalc(x,y) \coloneqq \frac12(y - x)(\eval_1(x) + \eval_1(y))\) \\[\jot]  
\end{tabbing}

\section{Proof of Theorem~\ref{thm:fully_agnostic}}
\label{sec:proof_fully_agnostic}
\begin{theorem}[Repetition of Theorem~\ref{thm:fully_agnostic}]
  Let $\varphi$ be a fully model agnostic closed $\wal$ formula over $\Upsnet(m,1)$. Then either $\Net\models\varphi$ for all $\Net\in\bigcup_{\ell\ge 1}\K(m,\ell)$ or $\Net\not\models\varphi$ for all $\Net\in\bigcup_{\ell\ge 1}\K(m,\ell)$.
\end{theorem}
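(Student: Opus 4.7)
\let\qed\relax
The plan is to argue by contradiction via a Hanf-locality principle for $\wal$. Suppose $\varphi$, possibly containing weight constants $c_1,\dots,c_k$, is fully model agnostic and nontrivial: fix networks $\Net_1,\Net_2\in\bigcup_{\ell\ge 1}\K(m,\ell)$ and reals $\bvec a=(a_1,\dots,a_k)$ with $\Net_1,\bvec a\models\varphi$ and $\Net_2,\bvec a\not\models\varphi$. I shall construct two networks $\mathcal M_1,\mathcal M_2\in\K(m,\ell')$ for a common $\ell'$ such that $F^{\mathcal M_1}=F^{\Net_1}$ and $F^{\mathcal M_2}=F^{\Net_2}$ but whose multisets of local neighborhood types agree for a suitable radius. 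Full model agnosticity together with Hanf-locality then yield the desired contradiction.

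The Hanf-locality ingredient, in the style of~\cite{libkin_sql,libkin_fmt}, reads: for every closed $\wal$ formula there is a radius $r$ such that two weighted $\Upsnet(m,1)$-structures whose multisets of $r$-ball \emph{weight-types} (the $r$-ball together with the restrictions of all weight functions to it) coincide satisfy the same $\wal$ sentences, even after the weight-constant parameters $\bvec a$ are fixed. Establishing this locality principle is the main technical obstacle: in pure FO, Hanf's theorem follows from a local back-and-forth, but $\wal$ also contains the aggregation operator $\sum_{\bvec x:\psi(\bvec x)}t(\bvec x)$, so the notion of type must be refined to record enough weight information for bounded-depth summation terms to be reconstructible from the multiset of $r$-types.

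To build $\mathcal M_i$, the external inputs $\inn_1,\dots,\inn_m$ first feed a positive/negative splitting stage: for each $j$, two ReLU units compute $\ReLU(\inn_j)$ and $\ReLU(-\inn_j)$, producing non-negative signals. These signals then travel along two parallel ``transparent chains'' $L$ and $R$, each built from many identity ReLU layers of matched width (weight $1$, bias $0$), and therefore transporting non-negative signals unchanged. In the middle of $L$ we splice a copy of $\Net_1$ and in the middle of $R$ a copy of $\Net_2$, with the analogous positive/negative splitting at the gadget's interfaces so that the gadget's true input--output function is faithfully transmitted through the surrounding chain. Only the tail of $L$ is wired to $\out$ (as a linear combination recombining the positive and negative parts); the tail of $R$ is left disconnected and is therefore dead code. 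By padding the shallower gadget with additional identity layers we arrange that both chains have the same depth, so $\mathcal M_i\in\K(m,\ell')$ for a common $\ell'$. Define $\mathcal M_2$ identically but with the two gadgets swapped. By construction $F^{\mathcal M_1}=F^{\Net_1}$ and $F^{\mathcal M_2}=F^{\Net_2}$, so full model agnosticity gives $\mathcal M_1,\bvec a\models\varphi$ and $\mathcal M_2,\bvec a\not\models\varphi$.

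Finally, making both chains strictly longer than $2r$, the $r$-ball around any neuron of either gadget is contained entirely within its own chain and reaches neither the inputs, nor $\out$, nor the chain terminals; since $L$ and $R$ are locally isomorphic in this interior, swapping the two gadgets merely exchanges which chain hosts which local weight-type, leaving the multiset unchanged. Every other $r$-ball (in the shared input stage, in the non-gadget portions of the chains, and around $\out$) occurs identically in both networks. Hanf-locality now forces $\mathcal M_1,\bvec a\models\varphi \Leftrightarrow \mathcal M_2,\bvec a\models\varphi$, contradicting the previous paragraph. Hence no such $\Net_1,\Net_2$ exist, and $\varphi$ is uniformly true or uniformly false on $\bigcup_\ell\K(m,\ell)$.
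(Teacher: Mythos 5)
Your proof follows the same route as the paper's: a contradiction from Hanf-locality of $\wal$ applied to a pair of ``two-chain'' networks that differ only in which chain hosts the gadget $\Net_1$ and which hosts $\Net_2$, with only one chain actually driving the output. The notable refinement you add is the positive/negative split of the transported signals (carrying $\ReLU(\inn_j)$ and $\ReLU(-\inn_j)$ separately, and likewise for the gadget's output, recombining at $\out$ via $z = \ReLU(z)-\ReLU(-z)$). This is not cosmetic: a bias-$0$, weight-$1$ chain of ReLU units computes $\max(0,\cdot)$, not the identity, so without the split the gadget would receive a clipped input and its output would be clipped again on the way to $\out$, and the key claim $F^{\mathcal M_i}=F^{\Net_i}$ would fail. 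The paper's sketch wires the external input and the gadget's output directly through such ReLU chains and leaves $F^{\Net_i'}=F^{\Net_i}$ as an ``observation'' without addressing this; your construction supplies exactly the missing ingredient. You also treat general $m$ directly (the paper does $m=1$ and asserts the generalization is straightforward), and you build the networks outright rather than first arguing about abstract two-chain $\Upsilon$-structures and then converting them --- presentational differences only. Both proofs, appropriately for a sketch, import Hanf-locality of the aggregate logic from the literature rather than re-deriving it; the subtlety you flag (that $r$-types must record enough weight data to reconstruct bounded summation terms) is indeed what makes that cited result nontrivial.
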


\begin{proof}
  For simplicity we only give a proof for the $1$-dimensional
  case $m=1$; the generalisation to arbitrary $m$ is
  straightforward. Suppose for contradiction that there are  $\Net_1,\Net_2\in\bigcup_{\ell\ge 1}\K(1,\ell)$ such that $\Net_1\models\varphi$ and $\Net_2\not\models\varphi$. Since $\varphi$ is full model agnostic, we have $F^{\Net_1}\neq F^{\Net_2}$. We shall construct two networks $\Net_1',\Net_2'$ such that $F^{\Net_i'}=F^{\Net_i}$ and $\Net_1'\models\varphi\Leftrightarrow\Net_2'\models\varphi$. This will be a contradiction.

  Let vocabulary $\Upsilon$ consist of a binary relation symbol $E$ and six constant symbols $\mstart_1,\mstart_2,\mend_1,\mend_2,c_1,c_2$. Consider $\Upsilon$-structures that
  consist of two disjoint chains, using the
  binary relation $E$, with the constants $\mstart_i$ and $\mend_i$ marking the first and last node of the chain $i$ and the constants $c_1,c_2$ marking arbitrary distinct internal nodes.
  We call such structures ``two-chain structures''.

  For any
  natural number $n$, let $A_n$ be the two-chain structure where both
  chains have length $2n$, the middle node of the first chain
  is marked by $c_1$, and the middle node of the second chain is marked by $c_2$. Let $B_n$ be similar but with $c_1$
  marking the middle node of the second chain and $c_2$
  marking the middle node of the first chain.

  Standard locality arguments show that there does not
  exist an $\wal$ formula over $\Upsilon$ that distinguishes
  between $A_n$ and $B_n$ for all $n$.  Indeed, clearly, no
  Hanf-local query can do that, and $\wal$, which is subsumed by
  the relational calculus with aggregates and arithmetic, is
  Hanf-local for queries on abstract structures
  \cite{libkin_sql,libkin_fmt}.

  Now for any two-chain structure $A$, we construct a network
  $\Net(A)$ with one input and one output as follows.  The nodes
  and edges of $A$ serve as hidden neurons and connections
  between them.  We add an input with edges to the starts of both
  chains.  We also add an output with edges from the ends of both
  chains.  The biases of all nodes are $0$, and all edges have
  weight $1$, except for the edge from the end of the first chain
  to the output, which has weight $0$. Then we replace the node $c_i$ by the net $\Net_i$ (introduced at the beginning of the proof). That is, we remove the node $c_i$, add a copy of $\Net_i$, connect the predecessor of $c_i$ on its chain with the input node of $\Net_i$ (by an edge of weight $1$), and connect the output node of $\Net_i$ with the successor of $c_i$ (also by an edge of weight $1$). Then, for a sufficiently large $n$, the formula $\varphi$ cannot distinguish $\Net_1'\coloneqq\Net(A_n)$ and $\Net_2'\coloneqq\Net(B_n)$.

  It remains to observe that indeed we have $F^{\Net_i'}=F^{\Net_i}$.
\end{proof}

\section{Addition of constants}
\newcommand{\fConst}{f^{\mathrm{const}}}

In the coming proofs we will often assume structures interpreting vocabularies consisting of auxiliary constant symbols. These vocabularies will always be denoted by $\Upsconst$ (possibly subscripted). Call any vocabulary $\Upsilon$ without standard function symbols of non-zero arity, a \emph{relational} vocabulary. So $\Upsilon$ can have relational symbols, constant symbols, and weight function symbols. Formally, let $\A$ be any $\Upsilon$ structure with $\Upsilon$ relational, and let $\Upsconst = \{\delta_1,\dots,\delta_n\}$. We define the notation $\A \cup \{\delta_1,\dots,\delta_n\}$ to denote the structure $\A'$ over the vocabulary $\Upsilon \cup \Upsconst$ defined up to isomorphism as follows.
\begin{itemize}
  \item The domain $A'$ of $\A'$ equals $A \cup \{e_1,\dots,e_n\}$, where $e_1,\dots,e_n$ are distinct elements not in $A$.
  \item $\delta_{i}^{\Arr'} = e_i$ for $i \in \{1,\dots, n\}$.
  \item For each relation symbol $R$ and weight function symbol $w$ from $\Upsilon$ and any $\bar{a} \in A^k$, with $k$ the arity, we have $R^{\A'}(\bar{a}) \Leftrightarrow R^{\A}(\bar{a})$, and $w^{\A'}(\bar{a}) = w^{\A}(\bar{a})$.
  \item Moreover, for any $\bar{a} \in {A'}^k$ involving at least one of the new elements $e_i$, we set $R^{\A'}(\bar{a})$ to false, and $w^{\A'}(\bar{a})$ to $\bot$.
\end{itemize}

\begin{lemma}
  \label{lemconsttrans}
  Let $\Upsilon$ be a relational vocabulary including at least two constant symbols, say $c_1$ and $c_2$. Let $\Upsconst$ be as above. There exists an $\wal$ translation $\varphi$ from $\Upsilon$ to $\Upsilon \cup \Upsconst$ with the following behavior. For any $\Upsilon$-structure $\A$ such that $c_{1}^{\A}$ and $c_{2}^{\Arr}$ are distinct, $\varphi$ maps $\A$ to $\A \cup \{\delta_1,\dots,\delta_n\}$.
\end{lemma}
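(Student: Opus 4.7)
My plan is to build a $k$-ary $\wal$ translation (with $k$ depending only on $n$) in which the fresh elements $e_1,\dots,e_n$ of $\A \cup \{\delta_1,\dots,\delta_n\}$ are represented by hand-picked ``code'' tuples over $\{c_1^\A, c_2^\A\}$, while every original element $a \in A$ is represented by the canonical tuple $(a, c_1, \dots, c_1)$. Concretely, I would pick the smallest $k \geq 1$ with $2^{k-1} \geq n+1$, fix the canonical tuple $T_0 := (c_1, \dots, c_1) \in \{c_1, c_2\}^{k-1}$, and choose $n$ pairwise distinct tuples $s_1, \dots, s_n \in \{c_1, c_2\}^{k-1} \setminus \{T_0\}$. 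Every $a \in A$ is then represented by $(a, T_0) \in A^k$, and every fresh $e_i$ by $(c_1, s_i)$. The hypothesis $c_1^\A \neq c_2^\A$ is precisely what makes all of these representative tuples pairwise distinct in $A^k$, regardless of how small $A$ is.

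With the encoding fixed, the rest of the translation writes itself. Let $\varphi_{\mathrm{orig}}(x_0, \dots, x_{k-1})$ abbreviate the quantifier-free condition $\bigwedge_{j=1}^{k-1} x_j = c_1$, and let $\varphi_{\delta_i}$ abbreviate $x_0 = c_1$ together with the literal equalities $x_j = (s_i)_j$ for $j \geq 1$. I take $\domF := \varphi_{\mathrm{orig}} \lor \bigvee_i \varphi_{\delta_i}$, and I take $\eqF$ to be plain equality of $k$-tuples, so that every equivalence class is a singleton and admissibility becomes automatic. For each relation or weight-function symbol of $\Upsilon$, the translated formula (or weight term) conjuncts $\varphi_{\mathrm{orig}}$ on every argument tuple before consulting the original interpretation on the head coordinates, returning false or $\bot$ if some argument is instead a code tuple. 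Old constants $c$ of $\Upsilon$ use $\varphi_c(\bvec x) := \varphi_{\mathrm{orig}}(\bvec x) \land x_0 = c$, and the new constants use the $\varphi_{\delta_i}$ already defined.

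What then remains is a routine check that the map $h$ sending $(a, T_0) \mapsto a$ and $(c_1, s_i) \mapsto e_i$ witnesses that $\varphi$ maps $\A$ to $\A \cup \{\delta_1, \dots, \delta_n\}$ by going through the four conditions in the definition of ``maps'': each matches the target semantics by construction, since no relation holds and no weight function is non-$\bot$ on tuples involving the fresh elements. The only subtlety, and the place where $c_1^\A \neq c_2^\A$ really bites, is in choosing $k$ large enough so that the $n+1$ canonical tuples $T_0, s_1, \dots, s_n$ stay pairwise distinct inside every admissible $\A$; this is a pigeonhole-style bookkeeping step that I expect to be the whole difficulty, with the rest being mechanical.
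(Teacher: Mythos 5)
Your proposal is correct and takes essentially the same approach as the paper: both encode the fresh elements $e_1,\dots,e_n$ by hand-picked tuples over $\{c_1,c_2\}$, use the hypothesis $c_1^\A\neq c_2^\A$ to guarantee these tuples are pairwise distinct in $A^k$, take $\eqF$ to be componentwise equality, and have the translated relations and weight symbols check that all arguments encode original elements before delegating to the input structure (returning false or $\bot$ otherwise). The only cosmetic difference is where the ``old vs.\ new'' discrimination lives: the paper reserves a dedicated first coordinate $t\in\{c_1,c_2\}$ as a mode flag and repeats the payload $a$ in the remaining positions, whereas you place the payload in position $0$ and overload the tag positions, reserving $T_0=(c_1,\dots,c_1)$ for old elements -- which is why your pigeonhole bound is $2^{k-1}\ge n+1$ rather than the paper's $2^m\ge n$.
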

\begin{proof}
  Let $m$ be a number such that $2^m \geq n$. Let $\fConst:\Upsconst \to \{\delta_1, \delta_2\}^m$ be an injective function and let $C$ be its image. If $\bar{y} \in C$, then $\bar{y}_i$ refers to the $i$-th component in $\bar{y}$. We define:
  \begin{tabbing}
    \quad\=\,\,\,\=\,\,\,\=\,\,\,\=\,\,\,\=\,\,\,\=\,\,\,\=\,\,\,\=\kill
    \( \displaystyle \domF(t,x_1, \dots, x_m) := (t = c_1 \land x_1 = \dots = x_m) \lor (t = c_2 \land (\bigvee_{\bar{y} \in C}(\bigwedge_{i \in \{1,\dots,m\}} x_i = \bar{y}_i))) \) \\[\jot]
    \( \varphi_{c} := (c_1,c,\dots,c) \)\\[\jot]
    \> (for $c$ a constant symbol in $\Upsilon$)\\[\jot]
    \( \varphi_{R}(t^{1},x_{1}^{1}, \dots, x_{m}^{1},\dots,t^{n},x_{1}^{n}, \dots, x_{m}^{n}) := t^{1} = \cdots = t^{n} = c_1 \land R(x_{1}^{1},\dots,x_{1}^{m}) \)\\[\jot]
    \> (for $R$ an $n$-ary relation symbol in $\Upsilon$)\\[\jot]
    \( \varphi_{w}(t^{1},x_{1}^{1}, \dots, x_{m}^{1},\dots,t^{n},x_{1}^{n}, \dots, x_{m}^{n}) := \ifText t^{1} = \cdots = t^{n} = c_1 \thenText w(x_{1}^{1},\dots,x_{1}^{m}) \elseText \bot \) \\[\jot]
    \> (for $w$ an $n$-ary weight symbol in $\Upsilon$)\\[\jot]
    \( \varphi_{\delta_{i}} := (c_2,\bar{y}_1,\dots,\bar{y}_m) \)\\[\jot]
    \> (for $i \in \{1,\dots,n\}$ and $ \fConst(\delta_i)= \bar{y}$)\\[\jot]
    \( \eqF(t,x_1, \dots, x_m,t', x_1', \dots, x_m') := \displaystyle t = t' \land (\bigwedge_{i \in \{1,\dots,m\}} x_i = x_i') \)
  \end{tabbing}
\end{proof}

\section{Proof of Lemma~\ref{lemcd}}

\newcommand{\Upsconstcell}{\Upsconst_{\mathrm{cell}}}
\newcommand{\hpArrVSet}[2]{\hpArrV{#1} \uplus \dots \uplus \hpArrV{#2}}

\newcommand{\cellR}{\mathrm{cell}}
\newcommand{\baseR}{\mathrm{base}}
\newcommand{\hpR}{\mathrm{hp}^{\mathrm{cell}}}
\newcommand{\stackEqR}{\textrm{stack-eq}}
\newcommand{\stackEqiR}[1]{\textit{stack-eq-{#1}}}
\newcommand{\stackLtR}{\textrm{stack-lt}}
\newcommand{\stackLtiR}[1]{\textit{stack-lt-{#1}}}

\newcommand{\coefPWLW}[1]{\mathrm{a}^{\mathrm{pwl}}_{#1}}
\newcommand{\coefCellW}[1]{\mathrm{a}^{\mathrm{cell}}_{#1}}
\newcommand{\coefArrW}[2]{\mathrm{a}^{\mathrm{arr},{#1}}_{#2}}

\newcommand{\minInfC}{\delta_{-\infty}}
\newcommand{\plusInfC}{\delta_{+\infty}}
\newcommand{\oldC}{\delta_{\mathrm{old}}}
\newcommand{\newC}{\delta_{\mathrm{new}}}
\newcommand{\dimC}[1]{\delta_{#1\textrm{-dim}}}
\newcommand{\noneDimC}{\delta_{\textrm{none-dim}}}
\newcommand{\extraHpC}[1]{\delta_{\textrm{extra-hp-}{#1}}}
\newcommand{\originCellCC}{\delta_{o}}

\newcommand{\originCellC}{o}
\newcommand{\trueC}{\mathrm{true}}
\newcommand{\falseC}{\mathrm{false}}

\label{seccdproof}

We follow the method of vertical decomposition \cite{halperin_arr}. There is a projection phase, followed by a buildup phase.
For the projection phase, let $\Arr_d:=\A$. For $i=d,\dots,1$, take all intersections between hyperplanes in $\Arr_i$, and project one dimension down, i.e., project on the first $i-1$ components. The result is a hyperplane arrangement $\A_{i-1}$ in $\R^{i-1}$. 
For the buildup phase, let $\D_0 := \{\R^0\}$. For $i=0,\dots,d-1$, build a stack above every cell $C$ in $\D_i$ formed by intersecting $C \times \R$ with all hyperplanes in $\Arr_{i+1}$. The result is a partition $\D_{i+1}$ such that $\D_0,\dots,\D_{i+1}$ is a CD of $\R^{i+1}$ compatible with $\Arr_{i+1}$. In the following sublemmas, we work out $\wal$ translations implementing the different steps.

Before giving the translations, we will define the vocabulary $\hpArrV{i}$ of a $i$-dimensional hyperplane arrangement in $\R^{i+1}$ slightly differently to how it was previously defined to have unique relation names with the final vocabulary:
\begin{itemize}
  \item the unary weight functions $\coefArrW{i}{j}(h)$ indicating the $j$-th coefficient, $a_j$, of the $i$-dimensional hyperplane $h$ with the function $a_0 + a_1 x_1 + \dots + a_j x_j + \dots + a_i x_i$
\end{itemize}

\subsection{Projection of hyperplane arrangement}
\newcommand{\isVertical}{\mathit{vertical}}
\newcommand{\parallelF}{\mathit{parallel}}
\newcommand{\verticalHpF}{\textit{vertical-hp}}

We call a hyperplane in $i$ dimensions $a_0 + a_1 x_1 + \cdots + a_i x_i$ vertial if $a_i = 0$.
\begin{lemma}
  \label{lemarrlow}
  Let $i \geq 2$ be a natural number. There exists an $\wal$ translation from $\Upsarr_i$ to $\Upsarr_{i-1}$ that maps any hyperplane arrangement $\Arr$ in $i+1$ dimensions to a structure $\Arr'$ representing the hyperplane arrangement consisting of all intersections of all pairs of hyperplanes in $\Arr$, projected onto the first $i-1$ dimensions plus the projection of all vertical hyperplanes in $\Arr$.
\end{lemma}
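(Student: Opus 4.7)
\let\qed\relax
The plan is to construct a $2$-ary $\wal$ translation whose elements are pairs $(h_1,h_2)$ of hyperplanes from $\Arr$, with the diagonal pair $(h,h)$ earmarked for the projection of a vertical hyperplane $h$ (i.e.\ one with $\coefArrW{i}{i}(h)=0$), and an off-diagonal pair $(h_1,h_2)$ earmarked for the projection of the intersection $h_1\cap h_2$. Eliminating $x_i$ between the two defining equations motivates defining, for each $j\in\{0,\dots,i-1\}$, the cross-difference
\[
c_j(h_1,h_2) \coloneqq \coefArrW{i}{i}(h_2)\,\coefArrW{i}{j}(h_1) - \coefArrW{i}{i}(h_1)\,\coefArrW{i}{j}(h_2),
\]
which will serve as the $j$-th coefficient of the projected hyperplane.

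First I would verify by a short case analysis that the $c_j$'s deliver exactly what is required: when $h_1$ and $h_2$ are both non-vertical and non-parallel they give the projection of $h_1\cap h_2$; when exactly one of them is vertical the $c_j$'s reduce to a nonzero scalar multiple of that vertical hyperplane's coefficients, a duplicate of what the diagonal clause already produces (harmless thanks to Remark~\ref{remhpdup}); and the two truly degenerate combinations---two distinct vertical hyperplanes, and two parallel non-vertical hyperplanes---yield $c_j=0$ for every $j\geq 1$. Accordingly I would set
\[
\domF(x_1,x_2) \coloneqq \bigl(x_1=x_2 \wedge \coefArrW{i}{i}(x_1)=0\bigr) \ \vee \ \bigvee_{j=1}^{i-1} c_j(x_1,x_2)\neq 0,
\]
take $\eqF$ to be literal equality of pairs (duplicates in the image arrangement being permitted by Remark~\ref{remhpdup}), and let $\varphi_{\coefArrW{i-1}{j}}(x_1,x_2)$ be an \textsf{if-then-else} weight term returning $\coefArrW{i}{j}(x_1)$ in the diagonal-vertical case and $c_j(x_1,x_2)$ otherwise. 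All of these formulas and weight terms are plainly expressible in $\wal$, and admissibility is immediate since $\eqF$ is plain equality on pairs.

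The main obstacle I anticipate is not in the expressibility but in pinning down the domain formula so that it excludes exactly the pairs yielding an all-zero coefficient vector while still capturing every intended vertical projection via the diagonal clause. Once this bookkeeping is in place, checking that every vertical hyperplane of $\Arr$ is hit by its diagonal pair and that every projection of a pairwise intersection in $\Arr$ is hit by the corresponding off-diagonal pair, with no spurious hyperplane introduced, is a routine unwinding of the case analysis above.
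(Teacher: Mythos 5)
Your proposal is correct and takes essentially the same approach as the paper: elements are pairs $(h_1,h_2)$ from $\Arr$, with diagonal pairs $(h,h)$ reserved for projections of vertical $h$ and off-diagonal pairs for projected intersections. The small implementation differences---your cross-difference $c_j$ avoids the paper's division by $\coefArrW{i}{i}(h_2)$, your domain detects degeneracy directly via $\bigvee_{j\geq 1} c_j(h_1,h_2)\neq 0$ rather than through explicit non-parallel/non-vertical predicates, and you use literal pair equality for $\eqF$ rather than coefficient equality---are all harmless variants whose extra duplicates are licensed by Remark~\ref{remhpdup}, exactly as you observe.
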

\begin{proof}
  We encode the domain of $\Arr'$ by pairs $(h_1, h_2)$ of elements from $\Arr$, where $h_1$ and $h_2$ are not vertical and not parallel to each other. The projections of vertical hyperplanes $h$ in $\Arr$ is encoded by identical pairs $(h,h)$.
  \begin{tabbing}
    \quad\=\,\,\,\=\,\,\,\=\,\,\,\=\,\,\,\=\,\,\,\=\,\,\,\=\,\,\,\=\kill
    \( \parallelF(h_1, h_2) := \)\\[\jot]
    \> \( \displaystyle \coefArrW{i}{0}(h_1) \neq \coefArrW{i}{0}(h_2) \land (\bigvee_{j \in \{1,\dots,i\}} \coefArrW{i}{j}(h_1) \neq 0) \)\\[\jot]
    \> \( \displaystyle {} \land (\bigwedge_{j \in \{1,\dots,i\}} \coefArrW{i}{j}(h_1) \neq 0 \implies (\bigwedge_{k \in \{1,\dots,i\}} \frac{\coefArrW{i}{j}(h_2)\cdot\coefArrW{i}{k}(h_1)}{\coefArrW{i}{j}(h_1)} = \coefArrW{i}{k}(h_2))) \) \\[\jot]
    \( \isVertical(h):= \coefArrW{i}{j}(h) = 0 \) \\[\jot]
    \( \domF(h_1,h_2) := \) \\[\jot]
    \> \( (h_1 = h_2 \land \isVertical(h_1)) \) \\[\jot]
    \> \( {} \lor (\neg \parallelF(h_1,h_2) \land \neg \isVertical(h_1) \land \neg \isVertical(h_2)) \) \\[\jot]
  \end{tabbing}
  
  The new coefficients are now readily defined as follows:
  \begin{tabbing}
    \quad\=\,\,\,\=\,\,\,\=\,\,\,\=\,\,\,\=\,\,\,\=\,\,\,\=\,\,\,\=\kill
    \( \verticalHpF(h_1, h_2) := h_1 = h_2 \) \\[\jot]
    \( \varphi_{\coefArrW{i-1}{j}}(h_1, h_2) := \) \\[\jot]
    \> \( \ifText \verticalHpF(h_1, h_2) \thenText \coefArrW{i}{j}(h_1) \) \\[\jot]
    \> \( \elseTextNS \coefArrW{i}{j}(h_1)-\frac{\coefArrW{i}{j}(h_2)\coefArrW{i}{i}(h_1)}{\coefArrW{i}{i}(h_2)}\) \\[\jot]
    \> (for \(j\in\{0,\dots,i-1\}\)) \\[\jot]
    \( \eqF(h_1, h_2, h_1', h_2') := \) \\[\jot]
    \> \(\displaystyle (\bigwedge_{j \in \{0,\dots,d\}} \varphi_{\coefArrW{i}{k}}(h_1, h_2) = \varphi_{\coefArrW{i}{k}}(h_1', h_2')) \)
  \end{tabbing}
\end{proof}

\subsection{Vocabularies for cylindrical decompositions}

We will be representing cylindrical decompositions of $\R^i$ by structures over the following vocabulary denoted by $\cellV{i}$:
\begin{itemize}
  \item A unary relation $\cellR$ indicates the cells.
  \item The constant $\originCellC$ which represents the $0$-dimensional cell on top of which all other cells are built.
  \item A binary predicate $\baseR(c, c_b)$ that is true if the cell $c_b$ is the base cell of $c$.
  \item A unary relation $\hpR$ indicating the hyperplanes representing section mappings. (A section mapping $\xi: \R^j \to \R$ is viewed as the hyperplane $x_{j+1} = \xi(x_1,\dots,x_j)$ in $\R^{j+1}$).
  \item The unary weight functions $\coefCellW{j}$ indicate the coefficients of the hyperplanes. When a hyperplane lives in $\R^k$, all its values for $\coefCellW{l}$ with $l > k$ are set to $\bot$.
  \item A binary predicate $\startR(c, h)$ that is true if the the hyperplane $h$ represents the section mapping that forms the lower bound of the cell $c$.
  \item A binary predicate $\EndR(c, h)$ that is true if the the hyperplane $h$ represents the section mapping that forms the upper bound of the cell $c$.
  \item A ternary predicate $\stackEqR(c, h_1,h_2)$ that is true if $c$ is a $j$-cell with $j < i$ and $h_1$ and $h_2$ are hyperplanes representing the same section mapping above $c$. 
  \item A ternary predicate $\stackLtR(c, h_1,h_2)$, similar to $\stackEqR(c, h_1,h_2)$, but now representing that $\xi_1 < \xi_2$ above $c$ where $\xi_1$ ($\xi_2$) is the section mapping above $c$ represented by $h_1$ ($h_2$).
\end{itemize}
As in Remark~\ref{remhpdup}, duplicated hyperplanes are allowed. Moreover, even non-duplicate hyperplanes may represent the same section mapping when they intersect above an $(i-1)$-cell of dimension $< i-1$. See Figure~\ref{figsamesecmap} for a simple illustration.

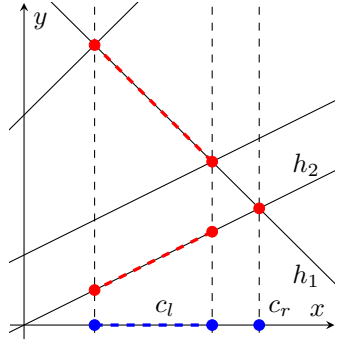
\begin{figure}
  \centering
  \begin{tikzpicture}
      \begin{axis}[
        xlabel = \(x\),
        xmin=-0.1, xmax=2.2,
        ylabel = \(y\),
        ymin=-0.1, ymax=2.3,
        ticks=none,
        axis lines = center,
        width = 0.5\linewidth,
        axis equal image
      ]
      \addplot [
        color = black
      ] coordinates {
          (-0.2,-0.1) (3.5,1.75)
        };
      \addplot [
        color = black
      ] coordinates {
          (-0.2,0.4) (3.5,2.25)
        };
      \addplot [
        color = black
      ] coordinates {
          (0,2.5) (3.5,-1)
        };
      \addplot [
        color = black
      ] coordinates {
          (-0.5,1) (1.5,3)
        };
      \addplot [
        color = black,
        dashed
      ] coordinates {
          (0.5,2.5) (0.5, -0.5)
        };
      \addplot [
        color = black,
        dashed
      ] coordinates {
          (5/3,2.5) (5/3, -0.5)
        };
      \addplot [
        color = black,
        dashed
      ] coordinates {
          (4/3,2.5) (4/3,-0.5)
        };
      \addplot [
        color = red,
        only marks
      ] coordinates {
          (0.5,0.25) (0.5, 2) (4/3, 7/6) (4/3, 4/6) (5/3, 5/6)
        };
      \addplot [
        color = blue,
        only marks
      ] coordinates {
          (0.5,0) (4/3, 0) (5/3, 0)
        };
      \addplot[
        color = red,
        dashed,
        line width=0.125em
      ] coordinates {
          (4/3, 7/6) (0.5, 2) 
        };
      \addplot[
        color = red,
        dashed,
        line width=0.125em
      ] coordinates {
          (4/3, 4/6) (0.5,0.25)
        };
      \addplot[
        color = blue,
        dashed,
        line width=0.125em
      ] coordinates {
          (0.5, 0) (4/3, 0) 
        };
      \node[anchor=north] at (2,0.5) {$h_1$};
      \node[anchor=south] at (2,1) {$h_2$};
      \node[anchor=south] at (1,0) {$c_l$};
      \node[anchor=south west] at (5/3,0) {$c_r$};   
    \end{axis}
    \end{tikzpicture}
    \caption{Hyperplanes $h_1$ and $h_2$ define different section mappings (in red) above the cell $c_l$ in blue but define the same section mapping above the cells $c_r$}
    \label{figsamesecmap}
\end{figure}

In the coming proofs we will assume structures interpreting the vocabulary $\Upsconstcell$ consisting of the following constant symbols:
\begin{itemize}
  \item $\minInfC$ used for encoding the concept of $-\infty$
  \item $\plusInfC$ used for encoding the concept of $+\infty$
  \item $\oldC$ used for encoding something previously defined
  \item $\newC$ used for encoding something not yet previously defined
  \item $\originCellCC$ used for encoding the encoding of the $0$-D cell in $0$ dimensional space
\end{itemize}
We will assume that structures interpret these constants by pairwise distinct elements that are also disjoint from the other elements in the structure. This assumption is without loss of generality (Lemma~\ref{lemconsttrans}).

\subsection{Buildup 0-D cylindrical decomposition}

\newcommand{\followsF}{\varphi_{\text{follows}}}
\newcommand{\intersectT}{\text{intersect}}

\begin{lemma}
  \label{lemcd0d}
  There exists an $\wal$ translation $\varphi$ from $\Upsconstcell$ to
  $\Upscell_0$ with the following behavior. For each $\Upsconstcell$-structure $\A$, $\varphi$ maps $\A$ to an affine CD of $\R^0$ that only contains the $0$-dimensional origin cell $o$.
\end{lemma}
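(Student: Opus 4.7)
The target structure is essentially fixed: it is the trivial CD of $\R^0$ consisting of a single $0$-dimensional cell $o$, with no hyperplanes and with all the non-domain relations empty. Thus the plan is to construct a unary $\wal$ translation $\varphi$ that, on any $\Upsconstcell$-structure $\A$, carves out exactly one element of $A$ to represent $o$, and declares every other component of the target vocabulary trivial.

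Concretely, use $\originCellCC$ (which is guaranteed to be interpreted as a distinguished element of $A$) as the single representative. Set
\begin{itemize}
\item $\domF(x) := x = \originCellCC$, so that the image structure has exactly one element;
\item $\eqF(x,y) := x = y$, which is trivially an equivalence relation on $\domF(\A)$;
\item $\varphi_{\cellR}(x) := \top$ and $\varphi_{\hpR}(x) := \bot$, marking the single element as a cell and leaving no hyperplane element;
\item $\varphi_{\baseR}(x,y) := \bot$, $\varphi_{\startR}(x,y) := \bot$, $\varphi_{\EndR}(x,y) := \bot$, $\varphi_{\stackEqR}(x,y,z) := \bot$, $\varphi_{\stackLtR}(x,y,z) := \bot$;
\item $\varphi_{\originCellC} := \originCellCC$ for the constant symbol of the target vocabulary;
\item $\varphi_{\coefCellW{j}}(x) := \bot$ for every coefficient weight function symbol.
\end{itemize}

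Admissibility is immediate: $\eqF$ is an equivalence relation on the singleton $\domF(\A) = \{\originCellCC^{\A}\}$, and every defined relation and weight function is trivially invariant. The surjection $h$ from $\domF(\A)$ to the target domain is the unique map onto $\{o\}$; checking the four clauses in the definition of ``$\varphi$ maps $\A$ to $\B$'' amounts to observing that both sides of each equivalence are true (for $\cellR(o)$ and $h(\originCellCC^{\A}) = o^{\B}$) or false (for $\hpR$, $\baseR$, $\startR$, $\EndR$, $\stackEqR$, $\stackLtR$) and that each weight function evaluates to $\bot$ on both sides. There is no real obstacle here; the entire point is that the $0$-dimensional case of the CD buildup is the base case, and the translation simply needs to produce a canonical singleton structure, which $\wal$ (in fact, quantifier-free first-order logic) can obviously do.
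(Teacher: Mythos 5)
Your proposal is correct and takes essentially the same approach as the paper's own proof: encode the single $0$-cell by the constant $\originCellCC$, set $\domF(x) := x = \originCellCC$, translate $\originCellC$ to $\originCellCC$, declare the single element a cell, and make all hyperplane-related predicates, $\baseR$, $\startR$, $\EndR$, $\stackEqR$, $\stackLtR$ uniformly false and $\coefCellW{0}$ uniformly $\bot$. The only cosmetic differences are that you write $\varphi_{\cellR}(x) := \top$ where the paper writes $h = \originCellCC$ (equivalent once restricted to $\domF$), and you spell out the admissibility check explicitly, which the paper leaves implicit.
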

\begin{proof}
  Since we only need the vocabulary to contain a single $0$-cell, we encode it with a single constant in the form $(c)$ where $c$ is the constant $\originCellCC$ if the encoding represents the origin cell $\originCellC$. Since the only thing encoded is the origin cell, the formulas $\varphi_{\baseR}(h, h')$ cannot be true on any value as the origin cell does not have a base cell. Additionally, the formulas $\varphi_{\startR}(h, h')$, $\varphi_{\EndR}(h, h')$, and $\varphi_{\hpR}(h)$ cannot be true for any value because there are no hyperplanes encoded and thus there are also no start or ending hyperplanes of a cell. For the same reason $\varphi_{\coefCellW{0}}(h)$ simply returns $\bot$. Finally, elements of this domain are equivalent if they encode the same constant, namely the origin cell. All this can be written in $\wal$ as follows:
  \begin{tabbing}
    \quad\=\,\,\,\=\,\,\,\=\,\,\,\=\,\,\,\=\,\,\,\=\,\,\,\=\,\,\,\=\kill
    \( \domF(h) := h=\originCellCC \) \\[\jot]
    
    \( \varphi_{\originCellC} := (\originCellCC) \) \\[\jot]
    \( \varphi_{\cellR}(h) := h = \originCellCC \) \\[\jot]
    \( \varphi_{\baseR}(h,h') := \falseC \) \\[\jot]
    \( \varphi_{\startR}(h,h') := \falseC \) \\[\jot]
    \( \varphi_{\EndR}(h,h') := \falseC \) \\[\jot]
    \( \varphi_{\hpR}(h) := \falseC \)\\[\jot]
    \( \varphi_{\stackLtR}(h,h',h'') := \falseC \)\\[\jot]
    \( \varphi_{\stackEqR}(h,h',h'') := \falseC \)\\[\jot]
    \( \varphi_{\coefCellW{0}}(h) := \bot \)\\[\jot]
    
    \( \eqF(h,h') := h=h\)
  \end{tabbing}
\end{proof}

\subsection{Build-up of higher dimensional cylindrical decomposition}
\label{secidimcd}

The following lemma verifies that the projection phase is a correct basis for building up a cylindrical decomposition.
\begin{lemma}
  \label{lemcdhporder}
  Let $\Arr$ be a hyperplane arrangement in $\R^i$, and let $\Arr'$ be a hyperplane arrangement in $\R^{i-1}$ containing at least the projections of all the intersections of hyperplanes in $\Arr$. Let $\D$ be a CD of $\R^{i-1}$ compatible with $\Arr'$, and let $c$ be an $(i-1)$-cell in $\D$. Then the set of all non-vertical hyperplanes in $\Arr$ represent a valid total order $\xi_1 < \cdots < \xi_r$ of section mappings above $c$.
\end{lemma}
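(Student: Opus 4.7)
The plan is to reduce the statement to the claim that for any two non-vertical hyperplanes $h_1,h_2\in\Arr$ the pointwise comparison of their induced section mappings $\xi_{h_1}$ and $\xi_{h_2}$ is constant on $c$, i.e.\ either $\xi_{h_1}|_c=\xi_{h_2}|_c$, or $\xi_{h_1}<\xi_{h_2}$ everywhere on $c$, or $\xi_{h_1}>\xi_{h_2}$ everywhere on $c$. Once this is established, declaring $h_1\preceq h_2$ iff $\xi_{h_1}|_c\le\xi_{h_2}|_c$ gives a total preorder on the non-vertical hyperplanes in $\Arr$ whose quotient by $\xi_{h_1}|_c=\xi_{h_2}|_c$ is precisely the desired strict total order $\xi_1<\cdots<\xi_r$ of section mappings above $c$.

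To prove the pointwise-constant-comparison claim, I would first recall that a non-vertical hyperplane $h\in\Arr$ defined by $a_0+a_1x_1+\cdots+a_ix_i=0$ with $a_i\neq 0$ is exactly the graph of the affine function $\xi_h(x_1,\dots,x_{i-1}):=-(a_0+a_1x_1+\cdots+a_{i-1}x_{i-1})/a_i$ on $\R^{i-1}$. Then, for two such $h_1,h_2$, I would set $g:=\xi_{h_1}-\xi_{h_2}$ and split into three cases depending on whether $g$ is identically zero, a nonzero constant, or non-constant affine. The first case trivially gives equality of the section mappings on $c$. In the second case, $h_1$ and $h_2$ are parallel and disjoint in $\R^i$, but $g$ has constant (nonzero) sign on all of $\R^{i-1}$, in particular on $c$. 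In the third case, $\{g=0\}$ is an affine hyperplane in $\R^{i-1}$ that coincides with the projection of $h_1\cap h_2$ onto the first $i-1$ coordinates; by hypothesis this projection is represented in $\Arr'$, so compatibility of $\D$ with $\Arr'$ forces $c$ to lie entirely on, strictly above, or strictly below it, yielding constant sign of $g$ on $c$ in all three sub-cases.

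The main subtlety I foresee is in the third case, where one must verify that $\{g=0\}\subseteq\R^{i-1}$ really equals the projection of $h_1\cap h_2$. This is a direct calculation: a point $(\bar x,x_i)$ lies in $h_1\cap h_2$ iff $x_i=\xi_{h_1}(\bar x)=\xi_{h_2}(\bar x)$, so the projection is exactly $\{\bar x\in\R^{i-1}: g(\bar x)=0\}$. A further minor complication is that $\Arr'$ may contain only a nonzero scalar multiple of the defining equation of $\{g=0\}$ (cf.\ Remark~\ref{remhpdup}), but this does not affect compatibility of $\D$ with the underlying hyperplane as a set, so the argument goes through without further work.
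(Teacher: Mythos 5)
Your proof is correct and uses essentially the same idea as the paper: both reduce the lemma to showing that for any two non-vertical $h_1,h_2\in\Arr$ the affine function $g=\xi_{h_1}-\xi_{h_2}$ has constant sign (or is identically zero) on $c$, by observing that $\{g=0\}$ is exactly the projection of $h_1\cap h_2$, hence belongs to $\Arr'$, so compatibility of $\D$ with $\Arr'$ forces $c$ to lie entirely on, above, or below it. The difference is only presentational: the paper argues by contradiction and invokes the intermediate value theorem to collapse its two failure cases, whereas you argue directly by a three-way split on $g$ and additionally spell out the corner cases (parallel hyperplanes giving nonzero constant $g$, and scalar-multiple duplicates per Remark~\ref{remhpdup}) that the paper leaves implicit.
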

\begin{proof}
  Assume to the contrary that there exist two different hyperplanes $x_i = \xi_1(x_1,\dots,x_{i-1})$ and $x_i = \xi_2(x_1,\dots, x_{i-1})$, such that either 
  \begin{enumerate}
    \item $\xi_1(\mathbf{b_1}) = \xi_2(\mathbf{b_1})$ and $\xi_1(\mathbf{b_2}) \neq \xi_2(\mathbf{b_2})$  for some $\mathbf{b_1}, \mathbf{b_2} \in c$, or
    \item $\xi_1(\mathbf{b_1}) < \xi_2(\mathbf{b_1})$ and $\xi_1(\mathbf{b_2}) > \xi_2(\mathbf{b_2})$ for some $\mathbf{b_1}, \mathbf{b_2} \in c$
  \end{enumerate}
  By the intermediate value theorem, in the second case there also exists $\mathbf{b}$ as $\mathbf{b_1}$ in the first case.
  Hence $c$ intersects with the projection of the intersection of the two hyperplanes. This contradicts the assumption that $\D$ is compatible with $\Arr'$.
\end{proof}

\newcommand{\newHp}{\textit{new-hp}}
\newcommand{\oldHp}{\textit{old-hp}}
\newcommand{\inCF}[1]{\mathit{in}_{#1}}
\newcommand{\oldCell}{\textit{old-cell}}
\newcommand{\StartC}{\mathord\vdash}
\newcommand{\EndC}{\mathord\dashv}
\newcommand{\coordT}{\mathit{coord}}
\newcommand{\projT}{\mathit{proj}}
\newcommand{\coplanarF}{\textit{eq-hp-above-cell}}
\newcommand{\smallerF}[1]{\textit{lower-in-}{#1}}
We next show that the total order from Lemma~\ref{lemcdhporder} is in fact definable in $\wal$:
\begin{lemma}
  \label{lemcdstackord}
  There exists an $\wal$-formula $\stackEqiR{i}(c,h_1,h_2)$ over $\hpArrV{i} \uplus \cellV{i-1}$ with the following properties. Let $\Arr$ and $\D$ be as in Lemma~\ref{lemcdhporder}. Then on $\Arr \uplus \D$, $\stackEqiR{i}$ defines all triples $(c,h_1,h_2)$ such that $c$ is an $(i-1)$-cell from $\D$ and $h_1$ and $h_2$ are hyperplanes from $\Arr$ that represent the same section mapping above $c$. Similarly, $\stackLtiR{i}$ defines all triples $(c,h_1,h_2)$ such that $c$ is an $(i-1)$-cell from $\D$ and that $\xi_1 < \xi_2$ above $c$ where $\xi_1$ ($\xi_2$) is the section mapping above $c$ represented by $h_1$ ($h_2$) from $\Arr$.
\end{lemma}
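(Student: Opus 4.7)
The strategy is to construct a witness point $\bvec p_c \in c$ and then just compare the values of the two section mappings at $\bvec p_c$. By Lemma~\ref{lemcdhporder}, any two non-vertical hyperplanes $h_1, h_2 \in \Arr$ either define the same section mapping everywhere on $c$ or are strictly ordered everywhere on $c$, so single-point evaluation is sufficient. The value of a non-vertical $h \in \Arr$ as a section mapping at a tuple $(x_1,\dots,x_{i-1})$ is
\[
  -\frac{\coefArrW{i}{0}(h)}{\coefArrW{i}{i}(h)} - \sum_{k=1}^{i-1}\frac{\coefArrW{i}{k}(h)}{\coefArrW{i}{i}(h)}\,x_k,
\]
which is an $\wal$ weight term as soon as the $x_k$ are.

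To build $\bvec p_c = (p^1,\dots,p^{i-1})$ I would walk the unique base chain $c = c^{i-1}, c^{i-2},\dots,c^{0} = \originCellC$ of $c$. The chain is quantified over explicitly: $\exists c^0,\dots,c^{i-1}$ with $c^0 = \originCellC$, $c^{i-1} = c$, and $\baseR(c^l,c^{l-1})$ for every $1 \le l \le i-1$; its existence also witnesses that $c$ is an $(i-1)$-cell. Coordinates are defined inductively, with $p^l$ depending on the start and end hyperplanes of $c^l$ and on $p^1,\dots,p^{l-1}$, via a case split encoded by nested if-then-else. If $c^l$ is a section (meaning $\startR(c^l,h)$ and $\EndR(c^l,h')$ yield hyperplanes related by $\stackEqR$ above $c^{l-1}$), take $p^l$ to be the value of that section mapping at $(p^1,\dots,p^{l-1})$; if $c^l$ is a bounded sector, take the midpoint of the two bounding values; if $c^l$ is unbounded above or below, shift by $\pm 1$ from the existing finite bound; if fully unbounded, set $p^l = 0$. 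The value of a cell-side hyperplane $h$ at a tuple is a rational expression in $\coefCellW{0}(h),\dots,\coefCellW{l}(h)$ and the previously computed coordinates, hence an $\wal$ weight term, and one verifies $\bvec p_{c^l} \in c^l$ by a straightforward induction on $l$.

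Putting the pieces together, $\stackEqiR{i}(c,h_1,h_2)$ is defined as the existential formula
\[
  \exists c^0,\dots,c^{i-1}\bigl[c^0=\originCellC \land c^{i-1}=c \land \bigwedge_{l=1}^{i-1}\baseR(c^l,c^{l-1}) \land \coefArrW{i}{i}(h_1)\neq 0 \land \coefArrW{i}{i}(h_2)\neq 0 \land \xi(h_1,\bvec p)=\xi(h_2,\bvec p)\bigr],
\]
where $\xi(h,\bvec p)$ is the weight term evaluating the $\Arr$-side section mapping of $h$ at $\bvec p = (p^1,\dots,p^{i-1})$; the formula for $\stackLtiR{i}$ is identical with strict inequality replacing equality. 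The main obstacle is purely notational: weight values cannot be bound to names in $\wal$, so the term for each $p^l$ has to be textually inlined within that of $p^{l+1}$ and ultimately within the evaluations $\xi(h_j,\bvec p)$. Since $i$ is a fixed parameter of the lemma, however, this recursion unrolls to a concrete $\wal$ formula, and the restriction to non-vertical hyperplanes together with the treatment of unbounded sectors add only mechanical bookkeeping.
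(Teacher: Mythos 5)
Your proof is correct, but it takes a genuinely different route from the paper's. The paper evaluates the two candidate section mappings at \emph{all corners} of $c$ (defined inductively through the base chain by projecting corners of $c_b$ onto the start/end hyperplanes of $c$, indexed by strings in $\{\vdash,\dashv\}^{i-1}$), and then defines $\stackLtiR{i}$ as ``strict inequality holds at some corner'' and $\stackEqiR{i}$ as ``equality holds at all corners''. You instead build a single \emph{interior} witness point $\bvec p_c \in c$ by walking the base chain and taking section values, sector midpoints, or $\pm 1$ shifts, then compare the two section mappings once at $\bvec p_c$. Your observation that Lemma~\ref{lemcdhporder} makes single-point interior evaluation sufficient is cleaner, and it sidesteps a subtlety the paper glosses over: for unbounded cells (where a start or end hyperplane is missing), the paper's $\coordT$ terms evaluate to $\bot$ and the resulting $\bot$-arithmetic behaviour of $<$ and $=$ requires care, whereas your case split explicitly produces a genuine point in $c$ in every case. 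The trade-off is that the paper's corner machinery ($\coordT_{j,r}$ and the $\{\vdash,\dashv\}^{j}$ indexing) is not wasted effort: it is reused later for the simplex triangulation and volume computation in the general-$m$ proof of Lemma~\ref{lemintm}, so the paper amortises the bookkeeping across several lemmas, while your construction is purpose-built for just this one.
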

\begin{proof}
  Since $\stackEqiR{i}$ and $\stackLtiR{i}$ are only true for stacks above $(i-1)$-cells, we will also define the formula $\inCF{i-1}(c_{i-1})$ expressing that $c_{i-1}$ is an $(i-1)$-cell. We implement this by expressing that the length of the chain of base cells from $c$ to the origin cell $\originCellC$ is exactly $i-1$ steps long.
  \begin{tabbing}
    \quad\=\,\,\,\=\,\,\,\=\,\,\,\=\,\,\,\=\,\,\,\=\,\,\,\=\,\,\,\=\kill
    \( \displaystyle \inCF{i-1}(c_{i-1}) := \exists c_1, \dots, c_{i-2} (\baseR(c_1,\originCellC) \land (\bigwedge_{j \in \{1,\dots,i-2\}}\baseR(c_{j+1}, c_{j}))) \)
  \end{tabbing}

  For any $(i-1)$-cell $c$, any non-vertical hyperplane $h$ of $\Arr$ and any corner $r$ of $c$, denote by $\projT_{i,r}(c,h)$ the $i$-th coordinate of $r$ projected onto $h$. By Lemma~\ref{lemcdhporder} we know that for any $c$, $h_1$, and $h_2$ we have either:
  \begin{itemize}
    \item $\projT_{i,r}(c,h_1) \leq \projT_{i,r}(c,h_2)$ for all corners $r$ of $c$; or
    \item $\projT_{i,r}(c,h_1) \geq \projT_{i,r}(c,h_2)$ for all corners of $r$ of $c$.
  \end{itemize}
  Hence, we can express $\stackLtiR{i}(c, h_1, h_2)$ by expressing that $\projT_{i,r}(c,h_1) < \projT_{i,r}(c,h_2)$ for at least one corner $r$ of $c$. We can express $\stackEqiR{i}(c, h_1, h_2)$ similarly.

  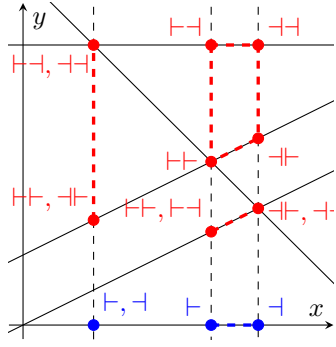
\begin{figure}
    \centering
    \begin{tikzpicture}
        \begin{axis}[
          xlabel = \(x\),
          xmin=-0.1, xmax=2.2,
          ylabel = \(y\),
          ymin=-0.1, ymax=2.3,
          ticks = none,
          axis lines = center,
          width = 0.5\linewidth,
          axis equal image
        ]
        \addplot [
          color = black
        ] coordinates {
            (-0.2,2) (3.5,2)
          };
        \addplot [
          color = black
        ] coordinates {
            (-0.2,-0.1) (3.5,1.75)
          };
        \addplot [
          color = black
        ] coordinates {
            (-0.2,0.4) (3.5,2.25)
          };
        \addplot [
          color = black
        ] coordinates {
            (0,2.5) (3.5,-1)
          };
        \addplot [
          color = black,
          dashed
        ] coordinates {
            (0.5,2.5) (0.5,2)
          };
        \addplot [
          color = black,
          dashed
        ] coordinates {
            (0.5,0.75) (0.5,-1)
          };
        \addplot [
          color = black,
          dashed
        ] coordinates {
            (5/3,2.5) (5/3, 2)
          };
        \addplot [
          color = black,
          dashed
        ] coordinates {
            (5/3, 8/6) (5/3,-1)
          };
        \addplot [
          color = black,
          dashed
        ] coordinates {
            (4/3,2.5) (4/3,2)
          };
        \addplot [
          color = black,
          dashed
        ] coordinates {
            (4/3, 7/6) (4/3,-1)
          };
        \addplot [
          color = red,
          only marks
        ] coordinates {
            (0.5,0.75) (0.5, 2) (4/3,2) (4/3, 7/6) (4/3, 4/6) (5/3, 5/6) (5/3, 8/6) (5/3, 2)
          };
        \addplot[
          color = red,
          dashed,
          line width=0.125em
        ] coordinates {
            (0.5,0.75) (0.5, 2)
          };
        \addplot[
          color = red,
          dashed,
          line width=0.125em
        ] coordinates {
            (4/3,2) (4/3, 7/6) (5/3, 8/6) (5/3, 2) 
          } -- cycle;
        \addplot[
          color = red,
          dashed,
          line width=0.125em
        ] coordinates {
            (4/3, 4/6) (5/3, 5/6)
          };
        \addplot [
          color = blue,
          only marks
        ] coordinates {
            (0.5,0) (4/3,0) (5/3,0)
          };
        \addplot[
          color = blue,
          dashed,
          line width=0.125em
        ] coordinates {
            (4/3, 0) (5/3, 0)
          };
        \node[anchor=south east, color=red] at (0.535, 0.75) {$\StartC\StartC, \EndC\StartC$};
        \node[anchor=north east, color=red] at (0.535, 2) {$\StartC\EndC, \EndC\EndC$};
        \node[anchor=south east, color=red] at (4/3,2) {$\StartC\EndC$};
        \node[anchor=east, color=red] at (1.31, 1.175) {$\StartC\StartC$};
        \node[anchor=south east, color=red] at (4/3, 4/6) {$\StartC\StartC, \StartC\EndC$};
        \node[anchor=west, color=red] at (5/3, 0.8) {$\EndC\StartC, \EndC\EndC$};
        \node[anchor=north west, color=red] at (5/3, 8/6)  {$\EndC\StartC$};
        \node[anchor=south west, color=red] at (5/3, 2) {$\EndC\EndC$};    
        \node[anchor=south west, color=blue] at (0.5, 0) {$\StartC, \EndC$};    
        \node[anchor=south east, color=blue] at (4/3, 0) {$\StartC$};    
        \node[anchor=south west, color=blue] at (5/3, 0) {$\EndC$};    
      \end{axis}
      \end{tikzpicture}
      \caption{Cylindrical decomposition of a $2$-D hyperplane arrangement with 2-cells highlighted in red and their base cells in blue on the $x$-axis, all with the identifiers of their corners given.}
      \label{figcornerenc}
  \end{figure}

  It remains to show how to identify the corners of a $j$-cell. There can be at most $2^j$ such corners; we will represent them by length $j$-strings over the alphabet $\{\StartC,\EndC\}$. We do this by induction on $j$. The only corner of the origin cell $\originCellC$ is $\originCellC$ itself; we identify this corner by the empty string. Now assume we know the corners of the base cell $c_b$ of a $j$-cell $c$. Recall that $c$ is delineated by two section mappings $\xi_1$, $\xi_2$ which are either equal (in case $c$ is a section) or consecutive in the order above $c_b$ (in case $c$ is a sector). By Lemma~\ref{lemcdhporder}, $\xi_1$ and $\xi_2$ are represented by non-vertical hyperplanes $h_1$ and $h_2$. Assuming we know the coordinates $(y_1,\dots, y_{j-1})$ of a corner $r$ of $c_b$, we can calculate the $j$-th coordinate of corners $r\StartC$ and $r\EndC$ of $c$, denoted by $\coordT_{j,r\StartC}(c)$ and $\coordT_{j,r\EndC}(c)$ respectively. These are obtained by $\projT_{j,r}(c,h_1)$ and $\projT_{j,r}(c,h_2)$ by solving the equations for $h_1$ and $h_2$ respectively, for $x_j$, after filling in $(y_1,\dots,y_{j-1})$ for $(x_1,\dots,x_{j-1})$.
  Note that the first $j-1$ coordinates of $r\StartC$ and $r\EndC$ are the same as those for $r$. We can thus construct terms $\coordT_{k,r\StartC}(c)$ and $\coordT_{r\EndC}(c)$ with $1 \leq k \leq j$ by induction on $j$.

  \begin{tabbing}
    \quad\=\,\,\,\=\,\,\,\=\,\,\,\=\,\,\,\=\,\,\,\=\,\,\,\=\,\,\,\=\kill
    \( \displaystyle \projT_{j,r}(c,h) := (\sum_{l = 1}^{j-1} \frac{- \coefCellW{l}(h)\cdot \coordT_{l,r}(c)}{\coefCellW{j}(h)}) - \frac{\coefCellW{0}(h)}{\coefCellW{j}(h)} \) \\[\jot]
    \> (for \(1 \leq j < i\) and \(r \in \{\StartC,\EndC\}^{j-1}\)) \\[\jot]
    \( \displaystyle \projT_{i,r}(c,h) := (\sum_{l = 1}^{i-1} \frac{- \coefArrW{i}{l}(h)\cdot \coordT_{l,r}(c)}{\coefArrW{i}{i}(h)}) - \frac{\coefArrW{i}{0}(h)}{\coefArrW{i}{i}(h)} \) \\[\jot]
    \> (for \(r \in \{\StartC,\EndC\}^{i-1}\)) \\[\jot]
    
    \( \displaystyle \coordT_{j,r\StartC}(c) := \frac{\displaystyle \sum_{h: \startR(c,h)} \sum_{c_b: \baseR(c,c_b)} \projT_{j,r}(c_b, h)}{\sum_{h': \startR(c,h')} 1} \) \\[\jot]
    \> (for \(1 \leq j < i\) and \(r \in \{\StartC,\EndC\}^{j-1}\)) \\[\jot]
    \( \displaystyle \coordT_{j,r\EndC}(c) := \frac{\displaystyle \sum_{h: \EndR(c,h)} \sum_{c_b: \baseR(c,c_b)} \projT_{j,r}(c_b, h)}{\sum_{h': \EndR(c,h')} 1} \) \\[\jot]
    \> (for \(1 \leq j < i\) and \(r \in \{\StartC,\EndC\}^{j-1}\)) \\[\jot]
    \( \displaystyle \coordT_{k,r\StartC}(c) := \sum_{c_b: \baseR(c,c_b)} \coordT_{k,r}(c_b) \) \\[\jot]
    \> (for \(1 \leq k < j < i\) and \(r \in \{\StartC,\EndC\}^{j-1}\)) \\[\jot]
    \( \displaystyle \coordT_{k,r\EndC}(c) := \sum_{c_b: \baseR(c,c_b)} \coordT_{k,r}(c_b) \) \\[\jot]
    \> (for \(1 \leq k < j < i\) and \(r \in \{\StartC,\EndC\}^{j-1}\)) \\[\jot]

    \( \stackLtiR{i}(c,h_1,h_2) := \) \\[\jot] 
    \> \( \displaystyle \domR{\hpArrV{i}}(h_1) \land \domR{\hpArrV{i}}(h_2) \land \domR{\cellV{i-1}}(c) \land \inCF{i-1}(c) \land (\bigvee_{r \in R}\projT_{i,r}(c,h_1) < \projT_{i,r}(c,h_2)) \) \\[\jot]
    \> (with \( R = \{\StartC,\EndC\}^{i-1}\))\\[\jot]
    \( \stackEqiR{i}(c,h_1,h_2) := \) \\[\jot] 
    \> \( \displaystyle \domR{\hpArrV{i}}(h_1) \land \domR{\hpArrV{i}}(h_2) \land \domR{\cellV{i-1}}(c) \land \inCF{i-1}(c) \land (\bigwedge_{r \in R}\projT_{i,r}(c,h_1) = \projT_{i,r}(c,h_2)) \) \\[\jot]
    \> (with \( R = \{\StartC,\EndC\}^{i-1}\))
  \end{tabbing}
\end{proof}

\begin{lemma}
  \label{lemcdbuilup}
  Let $i \geq 1$ be a natural number. There exists an $\wal$ translation $\varphi$ from $\Upsconstcell \cup (\Upsarr_i \uplus \Upscell_{i-1})$ to $\Upscell_i$ with the following behavior. For any $\Arr$, $\Arr'$, and $\D$ as in Lemma~\ref{lemcdhporder}, $\varphi$ maps $\Arr' \uplus \D$ to an affine CD of $\R^i$ that is compatible with $\Arr$.
\end{lemma}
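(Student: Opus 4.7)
The plan is to construct $\varphi$ explicitly by encoding the target $\cellV{i}$-structure as tuples over the source. The output CD of $\R^i$ preserves all cells and section-mapping hyperplanes of $\D$ and adds, above every $(i-1)$-cell $c$ of $\D$, a stack of $i$-cells formed by the non-vertical hyperplanes of $\Arr$, which by Lemma~\ref{lemcdhporder} induce a well-defined total order of section mappings above $c$. I would use $4$-ary tuples $(t, x, h_1, h_2)$ with $t \in \{\oldC, \newC\}$ distinguishing inherited elements from freshly introduced ones. Preserved cells and hyperplanes of $\D$ are encoded as $(\oldC, z, z, z)$. A non-vertical hyperplane $h$ of $\Arr$, now reinterpreted as a section-mapping hyperplane in $\R^i$, is encoded as $(\newC, h, h, h)$. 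A new $i$-cell above $c$ is encoded as $(\newC, c, h_1, h_2)$, where $h_1, h_2$ are its lower and upper delineating hyperplanes, with $\minInfC$ (resp.\ $\plusInfC$) reserved for the bottom (resp.\ top) sector, the pair $(\minInfC, \plusInfC)$ handling the case where no non-vertical hyperplane of $\Arr$ has a section mapping above $c$, and the degenerate case $h_1 = h_2$ representing a section.

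With the encoding fixed, the remaining formulas and weight terms of the translation are largely bookkeeping and case analysis on the type marker. In $\domF$, one additionally demands that new $i$-cell tuples use consecutive section mappings above $c$, a property readily written using $\stackLtiR{i}$ and $\stackEqiR{i}$ from Lemma~\ref{lemcdstackord}. The formulas $\varphi_{\cellR}$, $\varphi_{\hpR}$, $\varphi_{\baseR}$, $\varphi_{\startR}$, $\varphi_{\EndR}$, $\varphi_{\stackEqR}$, $\varphi_{\stackLtR}$ are obtained by case-splitting on the type marker: pairs of $\oldC$-tuples inherit the corresponding relations from $\D$; a new $i$-cell tuple $(\newC, c, h_1, h_2)$ has base cell encoded by $(\oldC, c, c, c)$, start hyperplane $(\newC, h_1, h_1, h_1)$ when $h_1 \ne \minInfC$, and end hyperplane $(\newC, h_2, h_2, h_2)$ when $h_2 \ne \plusInfC$; stack relations between a new hyperplane tuple and an $(i-1)$-cell tuple are given directly by $\stackLtiR{i}$ and $\stackEqiR{i}$. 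The coefficient terms $\varphi_{\coefCellW{j}}$ return $\coefCellW{j}$ on old hyperplanes and $\coefArrW{i}{j}$ on new hyperplanes, with $\bot$ for indices out of range.

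The main obstacle is the equivalence relation $\eqF$, since multiple tuples may represent the same target element. Two new $i$-cell tuples $(\newC, c, h_1, h_2)$ and $(\newC, c', h_1', h_2')$ represent the same cell iff $c = c'$ as $(i-1)$-cells of $\D$ and the pairs $(h_1, h_2)$, $(h_1', h_2')$ induce the same ordered pair of section mappings above $c$. Literal equality on hyperplanes is insufficient: $\Arr$ may contain duplicates, and, as in Figure~\ref{figsamesecmap}, distinct hyperplanes may induce the same section mapping above a low-dimensional cell. The required identification is nonetheless expressible using $\stackEqiR{i}$ above $c$, with the $\minInfC$ and $\plusInfC$ markers matched literally; new hyperplane tuples $(\newC, h, h, h)$ and $(\newC, h', h', h')$ are identified when the hyperplanes $h$ and $h'$ agree up to a nonzero constant factor. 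Once $\eqF$ is in place, admissibility (invariance of the defined relations and weight terms under $\eqF$) and the fact that the resulting structure forms an affine CD of $\R^i$ compatible with $\Arr$ follow from Lemma~\ref{lemcdhporder} together with the construction of the stacks from consecutive section mappings.
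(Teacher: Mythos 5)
Your encoding is essentially the paper's, up to a cosmetic change: the paper uses triples $(h_s,h_e,c_b)$ and disambiguates by tuple shape (all three equal means an inherited element, etc.), while you use a fourth explicit type component. Both preserve old cells and hyperplanes, add $\Arr$'s hyperplanes as section mappings, build new $i$-cells above each $(i-1)$-cell delimited by consecutive (under $\stackLtiR{i}$/$\stackEqiR{i}$) section mappings with $\minInfC/\plusInfC$ markers for the unbounded sectors and the all-vertical degenerate case, and deduplicate new cells via $\stackEqiR{i}$ above the common base cell. So the route is the same.

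There is one genuine, if local, bug: you propose to identify new hyperplane tuples ``when the hyperplanes agree up to a nonzero constant factor,'' while also defining $\varphi_{\coefCellW{j}}$ to return $\coefArrW{i}{j}$ directly. If $h'$ is a scalar multiple of $h$, those coefficient terms take different values on the two tuples, so the weight functions are not invariant under $\eqF$ and the translation fails admissibility. You would either have to normalize the coefficients (e.g.\ divide by the largest nonzero one) before exposing them, or — as the paper does, invoking Remark~\ref{remhpdup} which explicitly tolerates duplicate hyperplanes in $\hpArrV{d}$- and $\Upscell_i$-structures — compare coefficients for \emph{exact} equality in $\eqF$ and simply allow scalar-multiple duplicates to persist as distinct elements. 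The rest of your claimed admissibility check (invariance of the new stack relations and of cell equality, and that the output is a CD compatible with $\Arr$) does indeed follow from Lemma~\ref{lemcdhporder} and Lemma~\ref{lemcdstackord} as you say.
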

\begin{proof}
  In general, if we are encoding a new cell,
  we have a starting hyperplane $h_s$,
  an ending hyperplane $h_e$, and a base cell $c_b$.
  If we are encoding anything else that already exists in one of
  the input vocabularies, we simply set $h_s$, $h_e$, and $c_b$
  equal to that. In more detail, if we wish to encode one
  of the hyperplanes $h$ of $\Arr'$ or $\Arr$, we can do
  so simply by setting $h_s=h_e=c_b=h$. Next, if we wish to
  encode a cell $c$ from $\D$, we can do this by setting
  $h_s=h_e=c_b=c$. If we want to encode a section determined by
  some non-vertical hyperplane $h \in \Arr$ in the stack above some $(i-1)$-cell $c\in \D$, we do so by setting $h_s=h_e=h$ and $c_b=c$. Should we wish to encode a sector above $c$, we first make sure that $h_1$ and $h_2$ are a pair of successive hyperplanes above $c$ and then set $h_s=h_1$, $h_e=h_2$, and $c_b=c$. Now for the first cell in the stack above $c$, we set $h_s=\minInfC$ to indicate that there is no start, we set $h_e=h$ where $h$ is the first hyperplane in the stack above $c$, and we set $c_b=c$. Similarly, for the last cell in the stack, we set $h_s=h$ where $h$ is the last hyperplane in the stack above $c$, we set $h=\plusInfC$ to indicate the cell has no end, and we set $c_b=c$. Finally, if there is only a single cell in the stack above $c$, which means that $\Arr$ only contains vertical hyperplanes, we set $h_s=\minInfC$, $h_e=\plusInfC$ and $c_b=c$.

  Since we often need to identify $(i-1)$-cells, we use
  a formula $\inCF{i-1}(c_{i-1})$ expressing that $c_{i-1}$ is an $(i-1)$-cell.
  We can implement this formula by expressing that the length of the chain of base cells from $c$ to the origin cell $\originCellC$ is exactly $i-1$ steps long.

  We use the formula $\stackLtiR{i}$ from Lemma~\ref{lemcdstackord}. The formulas and the encoding can be written in $\wal$ as follows: 
  \begin{tabbing}
    \quad\=\,\,\,\=\,\,\,\=\,\,\,\=\,\,\,\=\,\,\,\=\,\,\,\=\,\,\,\=\kill
    \( \newHp(h) := \domR{\hpArrV{i}}(h) \) \\[\jot]

    \( \followsF(h_l,h_u,c_b) := \) \\[\jot]
    \> \( \cellR(c_b) \land \newHp(h_l) \land \newHp(h_u) \land \stackLtiR{i}(c_b,h_l,h_u) \) \\[\jot]
    \> \( {} \land \neg \exists h' (\newHp(h') \land \stackLtiR{i}(c_b,h_l,h') \land \stackLtiR{i}(c_b,h',h_u))\) \\[\jot]
    \( \displaystyle \inCF{i-1}(c_{i-1}) := \exists c_1, \dots, c_{i-2} (\baseR(c_1,\originCellC) \land (\bigwedge_{j \in \{1,\dots,i-2\}}\baseR(c_{j+1}, c_{j}))) \) \\[\jot]

    \( \domF(h_s,h_e,c_b) := \) \\[\jot]
    \> \( (h_s = h_e = c_b \land (\hpR(c_b) \lor \newHp(c_b))) \) \\[\jot]
    \> \({} \lor (\cellR(c_b) \) \\[\jot]
    \> \> \( {} \land (h_s = h_e = c_b \) \\[\jot]
    \> \> \> \( {} \lor (\inCF{i-1}(c_b) \) \\[\jot]
    \> \> \> \> \( {} \land ((h_s=h_e \land \newHp(h_s) \land \neg \isVertical(h_s)) \)\\[\jot]
    \> \> \> \> \> \( {} \lor \followsF(h_s,h_e,c_b) \) \\[\jot]
    \> \> \> \> \> \( {} \lor (h_s=\minInfC \land \newHp(h_e) \land \neg \isVertical(h_e) \land \neg \exists h'\, \followsF(h',h_e,c_b)) \) \\[\jot]
    \> \> \> \> \> \( {} \lor (h_e=\plusInfC \land \newHp(h_s) \land \neg \isVertical(h_s) \land \neg \exists h'\, \followsF(h_s,h',c_b)) \)\\[\jot]
    \> \> \> \> \> \( {} \lor (h_s=\minInfC \land h_e=\plusInfC \land \neg \exists h' (\newHp(h') \land \neg \isVertical(h'))))))) \)
  \end{tabbing}

  Next we can start translating the constant $\originCellC$ and the relations $\hpR$ and $\cellR$ in the obvious manner based on the previously described encoding. Additionally, we add the formulas $\oldHp(h_s, h_e, c_b)$ and $\oldCell(h_s, h_e, c_b)$ which is only true if $(h_s, h_e, c_b)$ encodes a hyperplane or cell from $\Upscell_{i-1}$ respectively. 

  \begin{tabbing}
    \quad\=\,\,\,\=\,\,\,\=\,\,\,\=\,\,\,\=\,\,\,\=\,\,\,\=\,\,\,\=\kill
    \( \varphi_{\originCellC} := (\originCellC, \originCellC, \originCellC) \) \\[\jot]
    \( \varphi_{\hpR}(h_s,h_e,c_b) := (\hpR(c_b) \lor \newHp(c_b)) \) \\[\jot]
    \( \oldHp(h_s,h_e,c_b) := \varphi_{\hpR}(h_s,h_e,c_b) \land \hpR(c_b) \) \\[\jot]
    \( \varphi_{\cellR}(h_s,h_e,c_b) := \cellR(c_b) \) \\[\jot]
    \( \oldCell(h_s,h_e,c_b) := \varphi_{\cellR}(h_s,h_e,c_b) \land h_s = h_e = c_b \)
  \end{tabbing}
  
  The formula $\varphi_{\baseR}(h_s, h_e, c_b, {h_s}', {h_e}', {c_b}')$ is true if $(h_s, h_e, c_b)$ encodes a cell which has as its base cell the cell encoded by $({h_s}', {h_e}', {c_b}')$. When checking this, there are two cases: either $(h_s, h_e, c_b)$ encodes a new cell, in which case $({h_s}', {h_e}', {c_b}')$ must encode the cell $c_b$, or $(h_s, h_e, c_b)$ encodes an old cell, in which case we refer to $\baseR$ of $\Upscell_i$. 
  
  \begin{tabbing}
    \quad\=\,\,\,\=\,\,\,\=\,\,\,\=\,\,\,\=\,\,\,\=\,\,\,\=\,\,\,\=\kill
    \( \varphi_{\baseR}(h_s,h_e,c_b,{h_s}',{h_e}',{c_b}') := \) \\[\jot]
    \> \( \varphi_{\cellR}(h_s,h_e,c_b) \land \oldCell({h_s}',{h_e}',{c_b}') \) \\[\jot]
    \> \( {} \land ((\neg \oldCell(h_s,h_e,c_b) \land c_b = {c_b}' ) \lor (\oldCell(h_s,h_e,c_b) \land \baseR(c_b, {c_b}'))) \)
 \end{tabbing}

  The formula $\varphi_{\startR}(h_s, h_e, c_b, {h_s}', {h_e}', {c_b}')$
  is true only if $(h_s, h_e, c_b)$ encodes a cell which has the
  hyperplane encoded by $({h_s}', {h_e}', {c_b}')$ as one of its starting
  hyperplanes. Again, there are two cases: either $(h_s, h_e, c_b)$
  encodes a new cell, in which case we check that the hyperplane
  encoded by $({h_s}', {h_e}', {c_b}')$ is equal to $h_s$ in the stack
  above $c_b$, or $(h_s, h_e, c_b)$ encodes an old cell, in which
  case we refer to $\startR$ of $\cellV{i-1}$. Here equality in the stack above $c_b$ is 
  expressed using the formula $\stackEqiR{i}$ from Lemma~\ref{lemcdstackord}.
  The formula $\varphi_{\EndR}(h_s, h_e, c_b, {h_s}', {h_e}', {c_b}')$ is similar to
  the previous one but for the ending hyperplane instead, and is
  similarly implemented substituting $h_e$ for $h_s$ and
  $\EndR(c, h)$ for $\startR(c, h)$.

  \begin{tabbing}
    \quad\=\,\,\,\=\,\,\,\=\,\,\,\=\,\,\,\=\,\,\,\=\,\,\,\=\,\,\,\=\kill
    \( \varphi_{\startR}(h_s,h_e,c_b,{h_s}',{h_e}',{c_b}') := \) \\[\jot]
    \> \( \varphi_{\cellR}(h_s,h_e,c_b) \land \varphi_{\hpR}({h_s}',{h_e}',{c_b}') \) \\[\jot]
    \> \( {} \land (\stackEqiR{i}(c_b,h_s,{h_s}') \lor (\oldCell(h_s,h_e,c_b) \land \oldHp({h_s}',{h_e}',{c_b}') \land \startR(c_b,{h_s}')))\) \\[\jot]

    \( \varphi_{\EndR}(h_s,h_e,c_b,{h_s}',{h_e}',{c_b}') := \) \\[\jot]
    \> \( \varphi_{\cellR}(h_s,h_e,c_b) \land \varphi_{\hpR}({h_s}',{h_e}',{c_b}') \) \\[\jot]
    \> \( {} \land (\stackEqiR{i}(c_b,h_e,{h_e}') \lor (\oldCell(h_s,h_e,c_b) \land \oldHp({h_s}',{h_e}',{c_b}') \land \EndR'(c_b,{h_e}'))) \)
  \end{tabbing}

  The formula $\varphi_{\stackLtR}(h_s, h_e, c_b, {h_s}', {h_e}', {c_b}', {h_s}'', {h_e}'', {c_b}'')$
  if true only if $(h_s, h_e, c_b)$ encodes a cell in which the stack above it
  the hyperplane encoded by $({h_s}', {h_e}', {c_b}')$ represents a section mapping
  that is lower than the section mapping of the hyperplane encoded by $({h_s}'', {h_e}'', {c_b}'')$.
  Here we have two cases, either $(h_s, h_e, c_b)$ encodes an $(i-1)$-cell,
  in which case we use $\stackLtiR{i}$ from Lemma~\ref{lemcdstackord} to check
  the order of the two hyperplanes above it, or $(h_s, h_e, c_b)$ encodes a $j$-cell
  with $j<i-1$ in which case we refer to $\stackLtR$ of $\cellV{i-1}$.
  The formula $\varphi_{\stackEqR}(h_s, h_e, c_b, {h_s}', {h_e}', {c_b}', {h_s}'', {h_e}'', {c_b}'')$
  is similar to the previous one but checking that the two section mappings are equal instead,
  and is similarly implemented by substituting $\stackEqiR{i}$ and $\stackEqR$ for $\stackLtiR{i}$
  and $\stackLtR$ respectively.

  \begin{tabbing}
    \quad\=\,\,\,\=\,\,\,\=\,\,\,\=\,\,\,\=\,\,\,\=\,\,\,\=\,\,\,\=\kill
    \( \varphi_{\stackLtR}(h_s, h_e, c_b, {h_s}', {h_e}', {c_b}', {h_s}'', {h_e}'', {c_b}'') := \) \\[\jot]
    \> \( \varphi_{\cellR}(h_s,h_e,c_b) \land \oldCell(h_s,h_e,c_b) \land \varphi_{\hpR}({h_s}',{h_e}',{c_b}') \land \varphi_{\hpR}({h_s}'',{h_e}'',{c_b}'') \) \\[\jot]
    \> \( {} \land (\inCF{i-1}(c_b) \land \neg \oldHp({h_s}',{h_e}',{c_b}') \land \neg \oldHp({h_s}'',{h_e}'',{c_b}'') \land \stackLtiR{i}(c_b,h_s,{h_s}') \)\\[\jot]
    \> \>\({} \lor (\neg \inCF{i-1}(c_b) \land \oldHp({h_s}',{h_e}',{c_b}') \land \oldHp({h_s}'',{h_e}'',{c_b}'') \land \stackLtR(c_b, {h_s}', {h_s}'')))\) \\[\jot]

    \( \varphi_{\stackEqR}(h_s, h_e, c_b, {h_s}', {h_e}', {c_b}', {h_s}'', {h_e}'', {c_b}'') := \) \\[\jot]
    \> \( \varphi_{\cellR}(h_s,h_e,c_b) \land \oldCell(h_s,h_e,c_b) \land \varphi_{\hpR}({h_s}',{h_e}',{c_b}') \land \varphi_{\hpR}({h_s}'',{h_e}'',{c_b}'') \) \\[\jot]
    \> \( {} \land (\inCF{i-1}(c_b) \land \neg \oldHp({h_s}',{h_e}',{c_b}') \land \neg \oldHp({h_s}'',{h_e}'',{c_b}'') \land \stackEqiR{i}(c_b,h_s,{h_s}') \)\\[\jot]
    \> \>\({} \lor (\neg \inCF{i-1}(c_b) \land \oldHp({h_s}',{h_e}',{c_b}') \land \oldHp({h_s}'',{h_e}'',{c_b}'') \land \stackEqR(c_b, {h_s}', {h_s}'')))\)
  \end{tabbing}

  Finally, the weight functions $\varphi_{\coefCellW{j}}(h_s, h_e,
  c_b)$ return the coefficients of the hyperplane encoded by $(h_s, h_e,
  c_b)$ for $j \in \{1, \dots, i\}$. Here we again have two cases:
  either $(h_s, h_e, c_b)$ encodes a hyperplane from $\Arr$, in
  which case we simply return $\coefArrW{i}{j}$ of that hyperplane,
  or it encodes a hyperplane is from $\Arr'$, in which case we
  return $\coefCellW{j}$ of the hyperplane it encodes, unless
  $j=i$. In this case we return $\bot$ for
  $\varphi_{\coefCellW{j}}(h_s, h_e, c_b)$.
  \begin{tabbing}
    \quad\=\,\,\,\=\,\,\,\=\,\,\,\=\,\,\,\=\,\,\,\=\,\,\,\=\,\,\,\=\kill
    \( \varphi_{\coefCellW{j}}(h_s,h_e,c_b) := \ifText \newHp(c_b) \thenText \coefArrW{i}{j}(h_s) \elseText \coefCellW{j}(h_s) \) \\[\jot]
    \> (for  \(j\in\{0,\dots,i-1\}\)) \\[\jot]
    \( \varphi_{\coefCellW{i}}(h_s,h_e,c_b) := \ifText \newHp(c_b) \thenText \coefArrW{i}{i}(h_s) \elseText \bot\)
  \end{tabbing}

  Next, we need to implement $\eqF(h_s,h_e,c_b,{h_s}',{h_e}',{c_b}')$ which is true if $(h_s, h_e, c_b)$ and $({h_s}', {h_e}', {c_b}')$ encode the same hyperplane or cell. If both encode a hyperplane, we can simply check that they agree on all coefficients. If $(h_s, h_e, c_b)$ and $({h_s}', {h_e}', {c_b}')$ encode cells, we again have two cases: If they both encode old cells, we simply check $h_s = {h_s}'$, $h_e = {h_e}'$, and $c_b = {c_b}'$ which implies $h_s=h_e=c_b={h_s}'={h_e}'={c_b}'$ due to how we encode old cells. If they are both new cells, we check that they have the same base cell $c_b$ and that their starting and ending hyperplanes are both equal in the stack above $c_b$.
  This can all be written in $\wal$ as follows:
  \begin{tabbing}
    \quad\=\,\,\,\=\,\,\,\=\,\,\,\=\,\,\,\=\,\,\,\=\,\,\,\=\,\,\,\=\kill
    \( \eqF(h_s,h_e,c_b,{h_s}',{h_e}',{c_b}') :=  \) \\[\jot]
    \> \( \displaystyle (\varphi_{\hpR}(h_s,h_e,c_b) \land \varphi_{\hpR({h_s}',{h_e}',{c_b}')} \land (\bigwedge_{j \in \{1,\dots,i\}} \varphi_{\coefCellW{j}}(h_s,h_e,c_b) = \varphi_{\coefCellW{j}}({h_s}',{h_e}',{c_b}'))) \) \\[\jot]
    \> \( {} \lor (\varphi_{\cellR}(h_s,h_e,c_b) \land \varphi_{\cellR}({h_s}',{h_e}',{c_b}')\) \\[\jot]
    \> \> \( {} \land ((h_s = {h_s}' \land h_e = {h_e}' \land c_b = {c_b}') \) \\[\jot]
    \> \> \> \( {} \lor (\neg \oldCell(h_s,h_e,c_b) \land \neg \oldCell({h_s}',{h_e}',{c_b}') \land c_b={c_b}' \) \\[\jot]
    \> \> \> \> \( {} \land \stackEqiR{i}(c_b,h_s, {h_s}') \land \stackEqiR{i}(c_b,h_e, {h_e}')))) \)
  \end{tabbing}
\end{proof}

\begin{remark}
  \label{remhpcoef}
  In the proof for Lemma~\ref{lemcdbuilup} it is guaranteed that every hyperplane in $\Arr_i$ is included as a hyperplane of the CD with the same coefficients as it had in $\Upsarr_i$.
  This is because every hyperplane in $\Arr_i$ is encoded in the translation and because for the encoded hyperplanes of $\Arr_i$ the term $\varphi_{\coefCellW{j}}(h_s,h_e,c_b)$ simply returns the coefficients they had in $\Upsarr_i$.
\end{remark}
\begin{remark}
  \label{remverthpproj}
  Inspecting the proof we can verify that following property of the construction of $\D_d$: if the $1 \leq j \leq d$ largest coefficients of a hyperplane $h \in \Arr_d$ are zero, there is a hyperplane $h'$ in $\D_d$ with $\coefCellW{k}(h') = \bot$ for $k \in \{d-j, \dots, d\}$ and $\coefCellW{\ell}(h') = \coefArrW{d}{\ell}(h)$ for $\ell \in \{0, \dots, d-j-1\}$. Indeed, this follows from how we handle vertical hyperplanes in the projection phase, simply removing the largest coefficient, followed by the buildup phase, where Remark~\ref{remhpcoef} shows we keep the coefficients. Additionally, in the buildup phase, when a hyperplane that lives in some $\R^j$ with $j<i$ is encoded in $\D^i$, its $i$-th coefficient is set to $\bot$.
\end{remark}

\section{Proof of Lemma~\ref{lempwl} for general $m$}

\newcommand{\hpPosCR}{\textit{pos-cells-bp}}
\newcommand{\hpNegCR}{\textit{neg-cells-bp}}
\newcommand{\hpOnCR}{\textit{on-hp-cells-bp}}

\newcommand{\Upspwlnode}{\Upsilon^{\textrm{pwl-node}}}
\newcommand{\Upspwledge}{\Upsilon^{\textrm{pwl-edge}}}
\newcommand{\PWLNodeV}[1]{(\Upsnet \uplus \PWLV{#1}) \cup \Upspwlnode}
\newcommand{\PWLEdgeV}[1]{(\Upsnet \uplus \PWLV{#1}) \cup \Upspwledge}
\newcommand{\Upsconstpwl}{\Upsconst_{\mathrm{pwl}}}

\newcommand{\pwlC}{\delta_{\mathrm{pwl}}}
\newcommand{\netC}{\delta_{\mathrm{net}}}
\newcommand{\cellC}{\delta_{\mathrm{cell}}}

\newcommand{\breakplaneR}{\mathrm{hp}^{\mathrm{pwl}}}
\newcommand{\ptopeR}{\mathrm{ptope}}
\newcommand{\posSR}{\textrm{pos-side}}
\newcommand{\negSR}{\textrm{neg-side}}
\newcommand{\onSR}{\textrm{on-bp}}
\newcommand{\genNodeR}{\textrm{gen-node}}
\newcommand{\genEdgeR}{\textrm{gen-edge}}

\newcommand{\layerIF}[1]{\textit{layer}_{#1}}
\newcommand{\bpToHpF}{\textit{bp-to-hp}}
\newcommand{\cellToPtopeF}{\textit{cell-in-old-ptope}}
\newcommand{\posPtopeF}{\textit{pos-ptope}}
\newcommand{\falseT}{\mathit{false}}

\newcommand{\eqCellF}{\textit{eq-cell}}
\newcommand{\eqHpF}{\textit{eq-hyperplane}}
\newcommand{\hpPosCCHpR}{\textit{pos-cells-in-CD}}
\newcommand{\hpNegCCHpR}{\textit{neg-cells-in-CD}}
\newcommand{\hpOnCCHpR}{\textit{on-hp-cells-in-CD}}

In this Appendix we provide full details for the construction sketched out in Section~\ref{submultiple}.

Recall that a $f:\R^m \to \R$ is called PWL if there is a partition of $\R^m$ in polytopes, such that the result of $f$ on each polytope is an affine function. We call all these functions given on the polytopes the \emph{component functions} of $f$. We introduce a vocabulary $\PWLV{m}$ so that piecewise linear functions $\R^m \to \R$ can be represented by $\PWLV{m}$-structures. Such structures have two kinds of elements: breakplanes, which are affine hyperplanes in $\R^m$; and polytopes bounded by these hyperplanes. The vocabulary $\PWLV{m}$ contains the following:
\begin{itemize}
  \item The unary relation $\breakplaneR$ indicates the breakplanes.
  \item The unary relation $\ptopeR$ indicates the polytopes.
  \item The unary weight functions $\coefPWLW{i}$ for $i \in \{0,\dots,m\}$, applied to a hyperplane $h$, indicate the coefficients of $h$. When applied to a polytope $p$ they indicate the coefficients of its component function.
  \item $\posSR$, $\negSR$, $\onSR$ are binary relations relating polytopes to breakplanes.
\end{itemize}
\begin{definition}
  \label{defproperpwl}
  A $\PWLV{m}$-structure $\mathcal{P}$ is called proper if
  \begin{itemize}
    \item Every element satisfies exactly one of the relations $\breakplaneR$ and $\ptopeR$.
    \item Relations $\posSR$, $\negSR$, $\onSR$ are pairwise disjoint, and their union equals the set of all pairs $(c,h)$ with $c \in \ptopeR$ and $h \in \breakplaneR$.
    \item Let us call a ``position vector'' any function $v: \breakplaneR \to \{<,>,=\}$. Every polytope $c$ has an associated position vector $v_c$ where
    \begin{equation*}
      v_c(h)=
      \begin{cases}
        + \text{ if } \posSR(c,h)\\
        - \text{ if } \negSR(c,h)\\
        = \text{ if } \onSR(c,h)
      \end{cases}
    \end{equation*}
    Now in $\mathcal{P}$ no two different polytopes can have the same position vector.
    \item We call a position vector \emph{realizable} if the
      following system of inequalities has a solution in $\R^m$. For each hyperplane $h$, $\coefPWLW{i}(h)$ as $a_i$ and writing $v(h)$ as $\theta$, we include the inequality
    \begin{equation*}
      a_0+a_1 x_1 + \cdots + a_n x_n \mathrel{\theta} 0.
    \end{equation*}
    Now in $\mathcal{P}$, for every realizable position vector, $v$ there must exist a polytope $c$ such that $v_c = v$
  \end{itemize}
\end{definition}

In every proper $\PWLV{m}$-structure, the polytopes form a partition of $\R^m$, so the structure represents a unique PWL function $\R^m \to \R$.

Uniformly over any $\ell$-layer network $\Net$, our goal is to generate for every node $u$ a structure for $\Funcu{\Net}{u}$. We collect all these structures, which are kept disjoint, in a single $\PWLV{m} \cup \Upspwlnode$-structure. Here, the vocabulary $\Upspwlnode$ provides a single extra relation $\genNodeR$ that associates elements from the structure representing $\Funcu{\Net}{u}$ with $u$.
We start with the input neurons and then proceed through the layers. 

\subsection{Creating initial functions}

Recall, for an input neuron $u$, that $\Funcu{\Net}{u} = x_i$, if $u$ is the $i$-th input. In the following, we involve an auxiliary vocabulary $\Upsconstpwl$, consisting of two standard constant symbols $\pwlC$ and $\netC$.

\begin{lemma}
  \label{lempwlinit}
  Let $m$ and $\ell$ be natural numbers. There exists an $\wal$-translation $\varphi$ from $\Upsnet \uplus \Upsconstpwl$ to $\PWLNodeV{m}$ with the following behavior. For any $\Net \in \K(m,\ell)$, let $\Net'$ be the disjoint union of $\Net$ with two distinct constants. Then $\varphi$ maps $\Net'$ to the disjoint union of $\Net$ with a proper structure that associates every input neuron $u$ with $\Funcu{\Net}{u}$.
\end{lemma}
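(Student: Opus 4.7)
The plan is to use a binary $\wal$-translation whose domain elements are pairs $(t,v)$ in which $t \in \{\netC, \pwlC\}$ serves as a sort-tag. Pairs $(\netC, v)$ encode the original network element $v$, thereby copying the $\Upsnet$-part of the disjoint-union target structure verbatim, while pairs $(\pwlC, u)$ with $u$ an input neuron encode the unique polytope representing $\Funcu{\Net}{u}$. The key observation is that for the $i$-th input neuron $\inn_i$, the function $\Funcu{\Net}{\inn_i}$ is simply the affine map $\xx \mapsto x_i$ on all of $\R^m$, so its PWL representation consists of zero breakplanes and a single polytope whose component-function coefficients are $a_i = 1$ with all other $a_j = 0$. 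Since there are no breakplanes, the only position vector (the empty one) is vacuously realizable, so the one-polytope structure is automatically proper per Definition~\ref{defproperpwl}.

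Concretely I would set
\[ \domF(t,v) \coloneqq (t = \netC \land v \neq \netC \land v \neq \pwlC) \lor \Bigl(t = \pwlC \land \bigvee_{i=1}^m v = \inn_i \Bigr), \]
with $\eqF$ being componentwise equality. The formulas and weight terms for the $\Upsnet$-part relativize to the $(\netC,\cdot)$-copies, for instance $\varphi_E((t_1,v_1),(t_2,v_2)) \coloneqq t_1 = \netC \land t_2 = \netC \land E(v_1,v_2)$, $\varphi_{\inn_i} \coloneqq (\netC, \inn_i)$, $\varphi_{\out_j} \coloneqq (\netC, \out_j)$, with $\varphi_w$ and $\varphi_b$ returning the original weights on $(\netC,\cdot)$-pairs via an \textsf{if}-\textsf{then}-\textsf{else} and $\bot$ otherwise. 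The two distinguishing unary relations of the disjoint union split the domain via the sort-tag $t$. For the $\PWLV{m}$-part, declare $\varphi_{\breakplaneR}$, $\varphi_{\posSR}$, $\varphi_{\negSR}$, and $\varphi_{\onSR}$ identically false, $\varphi_{\ptopeR}(t,v) \coloneqq (t = \pwlC)$, and
\[ \varphi_{\coefPWLW{i}}(t,v) \coloneqq \textsf{if } t = \pwlC \land v = \inn_i \textsf{ then } 1 \textsf{ else } 0 \]
for $i \in \{1,\dots,m\}$, with $\varphi_{\coefPWLW{0}}(t,v) \coloneqq 0$. Finally the linking relation of $\Upspwlnode$ is $\varphi_{\genNodeR}((t_1,v_1),(t_2,v_2)) \coloneqq t_1 = \pwlC \land t_2 = \netC \land v_1 = v_2 \land \bigvee_{i=1}^m v_1 = \inn_i$.

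What remains is to verify admissibility and correctness. Admissibility is immediate because $\eqF$ is literal equality of pairs, so every defined relation and weight term is trivially invariant. For correctness, the resulting $\PWLV{m}$-substructure has exactly one polytope per input neuron and an empty breakplane set, hence is proper by Definition~\ref{defproperpwl}; moreover the coefficients assigned to the polytope of $\inn_i$ are exactly those of the affine map $\xx \mapsto x_i$, so that polytope indeed represents $\Funcu{\Net}{\inn_i}$. I do not anticipate any genuine obstacle here: the real challenge of the overall construction is the inductive step that applies scaling, summing, and ReLU layer by layer (as sketched in Section~\ref{submultiple}). This base case is essentially bookkeeping, fixing the encoding convention that the subsequent inductive steps will reuse.
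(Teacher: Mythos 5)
Your construction is essentially identical to the paper's: the same two-sort encoding by pairs $(t,v)$ with sort tag $t \in \{\netC,\pwlC\}$, the same observation that each input neuron yields a single full-space polytope with coefficients $e_i$ and no breakplanes (hence vacuously proper), and the same translation formulas for the $\Upsnet$-part, the empty breakplane relations, and $\genNodeR$. The paper's domain formula tests $\layerIF{0}(n) := \neg\exists n'\,E(n',n)$ rather than your explicit disjunction $\bigvee_i v = \inn_i$, but for $\K(m,\ell)$ these coincide. The one small deviation is your $\varphi_{\coefPWLW{j}}$, which returns $0$ on elements of the $\Upsnet$-sort; the paper instead nests the conditionals so that it returns $\bot$ there, consistent with how you yourself handle $\varphi_b$ and $\varphi_w$ and with the paper's convention that weight functions evaluate to $\bot$ outside their intended sort. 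This does not affect correctness of the downstream construction (which only inspects $\coefPWLW{j}$ on $\PWLV{m}$-elements), but you should adopt $\bot$ for uniformity.
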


\begin{proof}
  We will encode our domain with tuples of the form $(t,n)$, where $t$ is a constant from $\Upsconstpwl$ and $n$ is a node from $\Net$. If $t=\pwlC$, the tuple $(t,n)$ encodes the single polytope of the function $f_n$. If $t=\netC$, the tuple $(t,n)$ encodes the node $n$. The translations of relations that take a breakplane can all be translated to a formula that is false, since there are no breakplanes. The relations, constants and weight functions of $\Upsnet$ are equally easily translated since they simply check the domain of their arguments and then pass these to their original relations in the input. The coefficients of the functions above the polytopes are given by the terms $\varphi_{\coefPWLW{i}}(t,n)$ which return $1$ if $n$ is the $i$-th input node and otherwise returns $0$.
  \begin{tabbing}
    \quad\=\,\,\,\=\,\,\,\=\,\,\,\=\,\,\,\=\,\,\,\=\,\,\,\=\,\,\,\=\kill
    \( \layerIF{0}(n) := \neg \exists n' \, E(n',n) \) \\[\jot]
    \( \domF(t,n) :=\)\\[\jot]
    \> \( (\domR{\Upsnet} (n) \land t = \netC) \lor (\land \layerIF{0}(n) \land t=\pwlC) \) \\[\jot]
    \( \varphi_{\domR{\PWLV{m}}}(t,n) := t=\pwlC \)\\[\jot]
    \( \varphi_{\domR{\Upsnet}}(t,n) := t=\netC \)\\[\jot]
    \( \varphi_{\breakplaneR}(t,n) := \falseT \)\\[\jot]
    \( \varphi_{\ptopeR}(t,n) := \varphi_{\domR{\PWLV{m}}}(t,n) \)\\[\jot]
    
    \( \varphi_{\coefPWLW{i}}(t,n) := \)\\[\jot]
    \> \(\ifText \varphi_{\domR{\PWLV{m}}}(t,n) \land n=\inn_i \thenText 1 \) \\[\jot]
    \> \( \elseTextNS (\ifText \varphi_{\domR{\PWLV{m}}}(t,n) \thenText 0 \elseText \bot )\) \\[\jot]
    \> (for $i \in \{1,\dots,m\}$)\\[\jot]
    \( \varphi_{\coefPWLW{0}}(t,n) := \ifText \varphi_{\domR{\PWLV{m}}}(t,n) \thenText 0 \elseText \bot\) \\[\jot]
    
    \( \varphi_{\posSR}(t,n,t',n') := \falseT \)\\[\jot]
    \( \varphi_{\negSR}(t,n,t',n') := \falseT \)\\[\jot]
    \( \varphi_{\onSR}(t,n,t',n') := \falseT \)\\[\jot]
    
    \( \varphi_{\inn_i} := (\netC, \inn_i) \) \\[\jot]
    \> (for $i \in \{1,\dots,m\}$) \\[\jot]
    \( \varphi_{\out_1} := (\netC, \out_1) \) \\[\jot]
    \( \varphi_{E}(t,n,t',n') := \varphi_{\domR{\Upsnet}}(t,n) \land \varphi_{\domR{\Upsnet}}(t',n') \land E(n,n') \) \\[\jot]
    \( \varphi_{b}(t,n) := \ifText \varphi_{\domR{\Upsnet}}(t,n) \thenText b(n) \elseText \bot \)\\[\jot]
    \( \varphi_{w}(t,n,t',n') := \ifText
    \varphi_{\domR{\Upsnet}}(t,n) \land
    \varphi_{\domR{\Upsnet}}(t',n') \) \\
\>\>\>\>\>\>\>\>
    \( \thenText w(n,n') \elseText \bot \)\\[\jot]
    
    \( \varphi_{\genNodeR}(t,n,t',n') := \)\\[\jot]
    \> \( \varphi_{\domR{\PWLV{m}}}(t,n) \land \varphi_{\domR{\Upsnet}}(t',n') \land n = n' \)\\[\jot]
    \( \eqF(t,n,t',n,) := t=t' \land n=n' \)
  \end{tabbing}
\end{proof}

\subsection{Scaling for $i \in \{0,\dots,\ell-1\}$}

In the next step, we assume that $\Funcu{\Net}{u}$ is already represented for all neurons in layer $i$ (layer 0 is the input layer). In order to proceed to layer $i+1$ we must first scale each function $\Funcu{\Net}{u}$ to $w(u,v)\Funcu{\Net}{u}$ for every edge $u \to v$ in $\Net$. The vocabulary $\Upspwledge$ provides an extra relation $\genEdgeR$ that associates $w(u,v)\Funcu{\Net}{u}$ to the edge $(u,v)$.

\begin{lemma}
  \label{lempwlscale}
  Let $m$ and $\ell$ be natural numbers and let $i \in \{0,dots,\ell-1\}$. There exists an $\wal$-translation $\varphi$ from $\PWLNodeV{m}$ to $\PWLEdgeV{m}$ with the following behavior. For any $\Net \in \K(m,\ell)$, let $\Net'$ be the extension of $\Net$ associating PWL functions $f_u$ to all neurons $u$ in the $i$-th layer. Then $\varphi$ maps $\Net'$ to the disjoint union of $\Net$ with a proper structure that associates every edge $u \to v$ with $u$ in the $i$-the layer to $w(u,v)\cdot f_u$.
\end{lemma}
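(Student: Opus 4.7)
The plan is to construct a ternary $\wal$ translation whose domain consists of triples $(e,u,v)$ of elements of the input structure. Network elements $n$ are to be encoded as $(n,n,n)$ when $\domR{\Upsnet}(n)$ holds; a PWL element $e$ associated via $\genNodeR$ to some neuron $u$ in layer $i$ is encoded once per outgoing edge $u\to v$ of $u$, namely as $(e,u,v)$ with $E(u,v)$ holding. I will use an auxiliary formula $\layerIF{i}(u)$ stating that $u$ is a neuron in layer $i$, definable in $\wal$ by asserting the existence of a length-$i$ chain in $E$ starting from an input node, in the same style as the $\inCF{i-1}$ formula used in the proof of Lemma~\ref{lemcdstackord}. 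The domain formula $\domF$ is then a disjunction of these two cases and $\eqF$ is pointwise equality of triples.

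All symbols of $\Upsnet$ are copied over from the input: for example, $\varphi_{E}((n_1,n_1,n_1),(n_2,n_2,n_2))$ asserts that both triples encode network elements and that $E(n_1,n_2)$ holds, and similarly for $\varphi_b$, $\varphi_w$, and the constants $\inn_i,\out_1$. The PWL sort symbols $\breakplaneR$ and $\ptopeR$ delegate to their input counterparts applied to the first component of the triple, while the side relations $\varphi_{\posSR},\varphi_{\negSR},\varphi_{\onSR}$ relate two PWL elements only when they share the same pair $(u,v)$, then delegating to the corresponding input relations. Finally, $\varphi_{\genEdgeR}$ is defined so that the PWL element encoded by $(e,u,v)$ is related to the network nodes encoded by $(u,u,u)$ and $(v,v,v)$, which records that $e$ belongs to the copy indexed by the edge $u\to v$.

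The scaling itself takes place entirely in the coefficient weight functions: for $j\in\{0,\dots,m\}$ I will set
\[
\varphi_{\coefPWLW{j}}(e,u,v) := \textsf{if } \breakplaneR(e) \textsf{ then } \coefPWLW{j}(e) \textsf{ else } w(u,v)\cdot \coefPWLW{j}(e),
\]
returning $\bot$ outside the PWL subdomain. This leaves the defining equations of the breakplanes untouched, reflecting that scaling does not alter the regions of linearity, and multiplies every component function on every polytope by $w(u,v)$, as required.

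I expect no serious obstacle. The most delicate point is verifying that each edge-copy is a proper $\PWLV{m}$-structure in the sense of Definition~\ref{defproperpwl}: since the partition of $\R^m$ into polytopes and all their position vectors are unchanged, the realizability condition is inherited from the input, and continuity of the scaled function follows from that of $f_u$. The one edge case worth checking is $w(u,v)=0$, where every scaled component function vanishes but the polytope-and-breakplane structure is preserved; the resulting representation is redundant but still proper, since properness requires every realizable position vector to be realized by some polytope, which still holds.
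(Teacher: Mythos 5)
Your proposal is correct and matches the paper's construction in essence: both create one copy of the input $\PWLV{m}$-structure per outgoing edge $u\to v$, leave the breakplanes' coefficients untouched, and multiply the polytope component functions by $w(u,v)$. The only difference is cosmetic: you use a ternary encoding $(e,u,v)$ that carries the source neuron $u$ explicitly, whereas the paper encodes by pairs $(c,n)$ with $n=v$ and recovers $u$ from $\genNodeR$ inside a singleton sum when scaling the coefficients; since $u$ is functionally determined by $e$, the two encodings are interchangeable.
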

\begin{proof}
  We will encode any element of our translation as a pair of the
  form $(c,n)$. If it encodes a node $n'$ of $\Net$, we simply
  set $c=n=n'$. If $(c,n)$ encodes a breakplane or a polytope $y$ of
  $w(u,v)f_u$ to be generated, we set $c$ to $y$ and $n$ to $u$.
  \begin{tabbing}
    \qquad\=\quad\=\quad\=\quad\=\quad\=\quad\=\quad\=\quad\=\kill
    \( \layerIF{0}(n) := \neg \exists n' \, E(n',n) \) \\[\jot]
    \( \layerIF{j}(n) := \exists n' (E(n',n) \land \layerIF{j-1}(n')) \)\\[\jot]
    \> (for $1 \leq j \leq i+1$) \\[\jot]
    \( \domF(c,n) :=\)\\[\jot]
    \> \( (\domR{\Upsnet} (c) \land c = n) \lor (\domR{\PWLV{m}}(c) \land \layerIF{i+1}(n)) \) \\[\jot]
    \( \varphi_{\domR{\PWLV{m}}}(c,n) := \domR{\PWLV{m}}(c) \)\\[\jot]
    \( \varphi_{\domR{\Upsnet}}(c,n) := \domR{\Upsnet}(c) \)\\[\jot]
    \( \varphi_{\breakplaneR}(c,n) := \varphi_{\domR{\PWLV{m}}}(c,n) \land \breakplaneR(c) \)\\[\jot]
    \( \varphi_{\ptopeR}(c,n) := \varphi_{\domR{\PWLV{m}}}(c,n) \land \ptopeR(c) \)\\[\jot]
    \( \varphi_{\coefPWLW{j}}(c,n) := \)\\[\jot]
    \> \(\ifText \varphi_{\breakplaneR}(c,n) \thenText \coefPWLW{j}(c)\) \\[\jot]
    \> \(\displaystyle \elseTextNS (\ifText \varphi_{\ptopeR}(c,n) \thenText \sum_{n_i: \genNodeR(c,n_i)} \coefPWLW{j}(c) \cdot w(n_i,n)\) \\[\jot]
    \> \> \(\elseTextNS \bot) \)\\[\jot]
    \> (for $j \in \{0,\dots,m\}$)\\[\jot]
    \( \varphi_{\posSR}(c,n,c',n') := \posSR(c,c') \)\\[\jot]
    \( \varphi_{\negSR}(c,n,c',n') := \negSR(c,c') \)\\[\jot]
    \( \varphi_{\onSR}(c,n,c',n') := \onSR(c,c') \)\\[\jot]
    \( \varphi_{\inn_i} := (\inn_i, \inn_i) \) \\[\jot]
    \> (for $i \in \{1,\dots,m\}$) \\[\jot]
    \( \varphi_{\out_1} := (\out_1, \out_1) \) \\[\jot]
    \( \varphi_{E}(c,n,c',n') := E(c,c') \) \\[\jot]
    \( \varphi_{b}(c,n) := b(c)\)\\[\jot]
    \( \varphi_{w}(c,n,c',n') := w(c,c')\)\\[\jot]
    \( \varphi_{\genEdgeR}(c,n,c',n',c'',n'') := \)\\[\jot]
    \> \( \varphi_{\domR{\PWLV{m}}}(c,n) \land \varphi_{\domR{\Upsnet}}(c',n') \land \varphi_{\domR{\Upsnet}}(c',n') \)\\[\jot]
    \> \( {} \land \genNodeR(c,c') \land n = n'' \)
  \end{tabbing}
\end{proof}

\subsection{Summing}

Next, we assume we have represented the functions $f_{uv} = w(u,v) \cdot \Funcu{\Net}{u}$ from layer $i$ to layer $i+1$, in the previous step. For each neuron in layer $i+1$, we want to represent the function $f_v = b(v) + \sum_{u \to v}f_{uv}$.

\begin{lemma}
  \label{lempwlsum}
  Let $m$ and $\ell$ be natural numbers and let $i \in \{0,dots,\ell-1\}$. There exists an $\wal$-translation $\varphi$ from $\PWLEdgeV{m}$ to $\PWLNodeV{m}$ with the following behavior. For any $\Net \in \K(m,\ell)$, let $\Net'$ be the extension of $\Net$ associating PWL functions $f_uv$ to all edges $u \to v$ from layer $i$ to layer $i+1$. Then $\varphi$ maps $\Net'$ to the disjoint union of $\Net$ with a proper structure that associates every neuron $v$ in the $i$-the layer with the function $f_v \coloneqq b(v) + \sum_{u \to v}f_{uv}$.
\end{lemma}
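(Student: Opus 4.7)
The plan is to construct, for each neuron $v$ in layer $i{+}1$, a proper $\PWLV{m}$-representation of $f_v = b(v) + \sum_{u\to v} f_{uv}$, and to assemble all these representations, together with the unchanged $\Upsnet$ part of the input, into one output $\PWLNodeV{m}$-structure whose $\genNodeR$ associates each new PWL element to its target neuron.

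The first step is to collect all breakplanes appearing in any $f_{uv}$ (for $v$ in layer $i{+}1$) into one hyperplane arrangement $\Arr$, presented as an $\Upsarr_m$-structure by a simple translation that drops polytopes and reinterprets the $\coefPWLW{j}$ weights as arrangement coefficients. Composing with Lemma~\ref{lemcd} I obtain a cylindrical decomposition $\D$ of $\R^m$ compatible with $\Arr$, and hence with every sub-arrangement $\Arr_v \subseteq \Arr$ formed by the breakplanes of the $f_{uv}$'s feeding into a fixed $v$. Since $\D$ refines $\Arr_v$, every $m$-cell $C$ of $\D$ lies inside a unique polytope $p_{uv}^C$ of each $f_{uv}$ with $u \to v$; the relation $C \mapsto p_{uv}^C$ is $\wal$-definable by matching the position of $C$ with respect to the hyperplanes of $\Arr$ to the position vector of the candidate polytope, which is encoded in the input via $\posSR, \negSR, \onSR$.

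To produce $f_v$ I call two $m$-cells $C, C'$ of $\D$ \emph{$v$-equivalent} iff $p_{uv}^C = p_{uv}^{C'}$ for every $u\to v$. The polytopes of $f_v$ are encoded in the output domain as tuples pairing a representative $m$-cell with the target neuron $v$, and the translation's $\eqF$ collapses all $v$-equivalent cells (for the same $v$) into a single element. The breakplanes of $f_v$ are the hyperplanes of $\Arr_v$ tagged with $v$; no such hyperplane crosses the interior of a $v$-equivalence class, because $v$-equivalent cells lie in the same polytope of every $f_{uv}$ and therefore on the same side of every breakplane of every $f_{uv}$. Consequently the position relations $\posSR, \negSR, \onSR$ for a polytope $[C]$ against a breakplane $h$ can be read off unambiguously from the position of $C$ against $h$ in $\D$. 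The component function on $[C]$ has coefficients $b(v) + \sum_{u \to v} \coefPWLW{0}(p_{uv}^C)$ in the constant slot and $\sum_{u\to v} \coefPWLW{j}(p_{uv}^C)$ in the $j$-th slot for $j \ge 1$, both expressible as $\wal$ summations over the incoming edges.

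The main obstacle is verifying that the output is \emph{proper} in the sense of Definition~\ref{defproperpwl}. Uniqueness of polytope position vectors is immediate: two distinct $v$-equivalence classes differ in some $p_{uv}^C$, which forces a sign difference across at least one hyperplane in $\Arr_v$. Realizability is also direct: every realizable sign pattern over $\Arr_v$ is realized by some $m$-cell of $\D$ (by compatibility of $\D$ with $\Arr_v$), and that cell generates a polytope of $f_v$. The remaining bookkeeping — the tuple encoding of the output domain, the definition of $\domF$, the relativisation of the $\Upsnet$-symbols to the preserved input part, and the handling of any auxiliary constants — is directly analogous to Lemmas~\ref{lempwlinit} and~\ref{lempwlscale} and requires no new ideas.
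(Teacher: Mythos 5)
Your proposal is correct and matches the paper's proof step for step: collect all breakplanes of the $f_{uv}$ into a single $\hpArrV{m}$-structure, apply Lemma~\ref{lemcd} to obtain a compatible cylindrical decomposition, then encode the polytopes of $f_v$ as pairs (an $m$-cell, the target neuron $v$) and collapse $v$-equivalent cells via $\eqF$, with breakplanes and $\Upsnet$-elements carried through unchanged. Your explicit check of properness (uniqueness of position vectors via a sign difference on some breakplane, realizability via compatibility of the CD) is a worthwhile addition that the paper leaves implicit after writing out the formulas.
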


The proof of this lemma proceeds in three steps. The first step collects all breakplanes from all functions $f_{uv}$, as described in the following sublemma:
\begin{lemma}
  \label{lempwlsumarr}
  Let $m$ and $\ell$ be natural numbers. There exists an $\wal$-translation $\varphi$ from $\PWLEdgeV{m}$ to $\hpArrV{m}$ with the following behavior. Let $\mathcal{B}$ be the union of disjoint structures representing functions in $\PL(m)$.
  Then $\varphi$ maps $\mathcal{B}$ to the arrangement consisting of all breakplanes of all functions of $\mathcal{B}$.
\end{lemma}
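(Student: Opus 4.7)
The plan is to define a $1$-ary $\wal$-translation $\varphi$ that restricts the input domain to the elements satisfying $\breakplaneR$ and carries over their coefficients verbatim as the coefficients of the output $\hpArrV{m}$-structure. Since $\hpArrV{m}$ contains only the unary weight functions $\coefArrW{m}{j}$ for $j \in \{0,\dots,m\}$, together with equality, no further ingredients are needed; the translation is essentially a syntactic projection.

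Concretely, I would set $\domF(x) := \breakplaneR(x)$, $\eqF(x,x') := (x = x')$, and $\varphi_{\coefArrW{m}{j}}(x) := \coefPWLW{j}(x)$ for each $j \in \{0, \dots, m\}$. Because the input vocabulary $\PWLEdgeV{m}$ is a disjoint union that explicitly carries the marker relation $\domR{\PWLV{m}}$, the atomic formula $\breakplaneR(x)$ picks out exactly the breakplane elements living in the $\PWLV{m}$ portion of $\mathcal{B}$; the $\Upsnet$ and $\Upspwledge$ portions contribute nothing. Admissibility is immediate since $\eqF$ is literal equality, and every translated weight function is trivially invariant under it.

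To verify correctness, observe that the image of $\varphi$ on $\mathcal{B}$ has as its elements precisely the breakplanes appearing in any of the disjoint PWL-substructures of $\mathcal{B}$, each retaining its original affine coefficients. Appealing to Remark~\ref{remhpdup}, the result need not be deduplicated: if two breakplanes in different (or even the same) PWL substructures happen to define the same affine hyperplane up to a scalar factor, they may appear as distinct elements in the output arrangement without issue. I do not anticipate any real obstacle here; the only point worth flagging is that the hyperplane-arrangement framework of $\hpArrV{m}$ explicitly tolerates duplicates, so no collapsing step is required and the translation can be kept syntactically trivial.
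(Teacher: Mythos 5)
Your translation is essentially identical to the paper's: same domain formula $\breakplaneR(x)$, same coefficient terms $\varphi_{\coefArrW{m}{j}}(x) := \coefPWLW{j}(x)$, and the same reliance on Remark~\ref{remhpdup} to tolerate duplicates. The only cosmetic difference is $\eqF$ — you use literal equality $x = x'$ while the paper identifies elements with coefficient-wise equal images — but both choices are admissible and produce valid representations of the same arrangement.
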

\begin{proof}
  \begin{tabbing}
    \quad\=\,\,\,\=\,\,\,\=\,\,\,\=\,\,\,\=\,\,\,\=\,\,\,\=\,\,\,\=\kill
    \\
    \( \domF(h) := \breakplaneR(h) \) \\[\jot]
    \( \varphi_{\coefArrW{m}{i}}(h) := \coefPWLW{i}(h) \) \\[\jot]
    \> (for $i \in \{0,\dots,m\}$) \\[\jot]
    \( \displaystyle \eqF(h,h') := \bigwedge_{i \in \{0,\dots,m\}} \varphi_{\coefArrW{m}{i}}(h) = \varphi_{\coefArrW{m}{i}}(h') \)
  \end{tabbing}
\end{proof}

The second step is to invoke Lemma~\ref{lemcd} to create a CD for the arrangement from the first step.
In the third step, using this CD, we can do the actual summing. Thus, the following sublemma is a specialization of the Lemma~\ref{lempwlsum} we need to prove.

\begin{lemma}
  \label{lempwlsumcd}
  Let $m$ and $\ell$ be natural numbers and let $i \in \{0,\dots,\ell-1\}$. There exists an $\wal$-translation $\varphi$ from $(\Upsnet \uplus \PWLV{m} \uplus \cellV{m}) \cup \Upspwledge$ to $\PWLNodeV{m}$ with the following behavior. For any $\Net \in \K(m,\ell)$, let $\Net'$ be the extension of $\Net$ associating PWL functions $f_uv$ to all edges $u \to v$ from layer $i$ to layer $i+1$. Let $\Net''$ be an extension of $\Net'$ by a CD $\D$ compatible with the union of all breakplanes of all functions $f_{uv}$. Then $\varphi$ maps $\Net''$ to the disjoint union of $\Net$ with a proper structure that associates every neuron $v$ in the $i$-the layer with the function $f_v \coloneqq b(v) + \sum_{u \to v}f_{uv}$.
\end{lemma}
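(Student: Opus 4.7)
\emph{Encoding.} I plan to encode the domain of the target $\PWLNodeV{m}$-structure by pairs $(x, v)$, exploiting the disjoint-union structure of the input: a pair $(c, n)$ with $c = n$ an $\Upsnet$-element represents the old neuron; a pair $(C, v)$ with $C$ an $m$-cell of $\D$ and $v$ a node in layer $i+1$ represents the polytope of $f_v$ containing $C$; and a pair $(h, v)$ with $h$ a CD-hyperplane inherited from some $\Arr_{uv}$ (with $E(u,v)$) represents the corresponding breakplane of $f_v$. The equivalence formula $\eqF$ will then glue $(C, v)$ and $(C', v)$ whenever $C$ and $C'$ are $v$-equivalent, so each polytope of $f_v$ is represented exactly once.

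\emph{Auxiliary predicates.} The crucial new formula is $\mathit{pos}_\theta(C, h)$ for $\theta \in \{<, >, =\}$, expressing the sign of the affine function carried by the CD-hyperplane $h$ on the $m$-cell $C$. I build it by recursion on stack depth: if $\coefCellW{m}(h) \neq \bot$, then $h$ lies in the top stack above $C$'s base cell, and I compare $h$ with $C$'s start and end hyperplanes using $\stackLtR$ and $\stackEqR$ relative to $\baseR(C, \cdot)$; if $\coefCellW{m}(h) = \bot$, then $h$ is vertical in $\R^m$, and I recurse to the base cell of $C$ (an $(m-1)$-cell), treating $h$ as a hyperplane in $\R^{m-1}$. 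Since the recursion depth is bounded by $m$, the result is a finite $\wal$ formula. From $\mathit{pos}$ I then derive (i) a predicate $\mathit{bp}_{uv}(h)$ saying that the CD-hyperplane $h$ matches, up to scaling of coefficients, some breakplane of $f_{uv}$ (located via $\genEdgeR$); (ii) a predicate $\mathit{contains}_{uv}(p, C)$ saying that the $\PWLV{m}$-polytope $p$ of $f_{uv}$ contains the $m$-cell $C$, by checking that $\mathit{pos}(C, h)$ matches $p$'s $\posSR/\negSR/\onSR$ position for every breakplane $h$ of $f_{uv}$; and (iii) $v$-equivalence of two $m$-cells, namely agreement of $\mathit{pos}$ on every $h$ with $\mathit{bp}_{uv}(h)$ for some $u$ with $E(u, v)$. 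The hardest technical point of the whole construction lies here, in defining $\mathit{pos}_\theta$ faithfully from the purely combinatorial CD vocabulary, since vertical hyperplanes with several top coefficients equal to $\bot$ (cf.~Remark~\ref{remverthpproj}) force a careful recursive case split down the base-cell chain, in the spirit of Lemma~\ref{lemcdstackord}.

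\emph{Translation and properness.} Given these predicates, the remaining translations are routine. $\breakplaneR$ selects pairs $(h, v)$ with $h \in \hpR$ satisfying $\mathit{bp}_{uv}(h) \land E(u, v)$ for some $u$; $\ptopeR$ selects pairs $(C, v)$ with $C$ an $m$-cell (detected via $\inCF{m}$); the relations $\posSR, \negSR, \onSR$ between $(C, v)$ and $(h, v')$ require $v = v'$ and the corresponding $\mathit{pos}_\theta(C, h)$. The coefficient $\varphi_{\coefPWLW{j}}(C, v)$ on a polytope pair equals $\sum_{u : E(u, v)} \coefPWLW{j}(p_{uv}^C)$, with $b(v)$ added when $j = 0$, where $p_{uv}^C$ is the unique polytope picked out by $\mathit{contains}_{uv}$; on a breakplane pair it just copies $\coefCellW{j}$. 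The relation $\eqF$ identifies two polytope pairs when they share $v$ and have $v$-equivalent cells, and identifies two breakplane pairs when their $v$'s agree and their CD-hyperplanes are proportional; $\genNodeR$ and the $\Upsnet$-reducts are immediate. Properness of the output then follows: $\eqF$ ensures distinct polytopes carry distinct position vectors, every realizable strict position vector with respect to $\Arr_v$ is witnessed by some $m$-cell of $\D$ since $\D$ covers $\R^m$ and is compatible with $\Arr \supseteq \Arr_v$, and the coefficient sum gives exactly $b(v) + \sum_{u \to v} f_{uv}$ on each polytope, as required.
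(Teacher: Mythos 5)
Your proposal follows essentially the same high-level strategy as the paper's proof: encode the polytopes of the output function $f_v$ by $m$-cells of $\D$ paired with $v$, encode breakplanes by pairing with $v$, determine position of cells relative to breakplanes, use position-agreement as the equivalence for gluing cells into polytopes, and sum component functions plus the bias. The remaining translations ($\eqF$, $\genNodeR$, the $\Upsnet$-reducts) and the properness argument are the same as in the paper. The cosmetic differences (you encode breakplanes via CD-hyperplanes while the paper uses the source $\PWLV{m}$-breakplanes from the $f_{uv}$'s) are immaterial.

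However, there is a genuine error in the one place you flag as the hardest: the recursion defining $\mathit{pos}_\theta(C,h)$. You case-split on whether $\coefCellW{m}(h)$ is $\bot$ or not, claiming that $\coefCellW{m}(h)=\bot$ means ``$h$ is vertical in $\R^m$.'' This conflates two different things. In the $\cellV{}$ vocabulary, $\coefCellW{m}(h)=\bot$ means $h$ was introduced during the projection phase and formally lives in some $\R^k$ with $k<m$; whereas a breakplane of $f_{uv}$ with no $x_m$-term is a genuine hyperplane in $\R^m$ that is \emph{vertical}, i.e.\ it has $\coefCellW{m}(h)=0$ (not $\bot$, by Remark~\ref{remhpcoef} the coefficients of original arrangement hyperplanes are preserved). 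Such an $h$ falls into your first branch ($\coefCellW{m}(h)\neq\bot$), but a vertical hyperplane does not define a section mapping above the base cell of $C$, so the comparison via $\stackLtR/\stackEqR$ there is meaningless and the recursion never reaches the base cell. The paper's Lemma~\ref{lemselrelhp} avoids this by conditioning on the largest index $i$ with $\coefCellW{i}(h)\neq 0$ (the $\firstNonVerticalF{i}$ predicate), locating the \emph{projected} version $h'$ of $h$ living in $\R^i$ with $\bot$ entries above (the $\projectVertF{i}$ predicate, whose existence is guaranteed by Remark~\ref{remverthpproj}), and only then comparing $h'$ against the $i$-cell in $C$'s base chain. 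Your recursion needs the same structure: the case split must be on the top coefficient being \emph{zero}, with a projection step before recursing, and the $\bot$-variant only enters as the projected hyperplane, not as the starting one.
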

\begin{proof}
  We will encode any element of the domain as pairs of the form $(c,n)$. If it encodes a node $n' \in \Net$, we simply set $c=n=n'$. If $(c,n)$ encodes a breakplane of a function $f_{v}$ to be generated, $c$ will be a breakplane of a function $f_{uv}$ in the input and $n=v$. If $(c,n)$ encodes a polytope of a function $f_v$, $c$ will be a cell in $\D$ and $n=v$. Thus, the formulas defining the domains and sorts are as follows:
  \begin{tabbing}
    \quad\=\,\,\,\=\,\,\,\=\,\,\,\=\,\,\,\=\,\,\,\=\,\,\,\=\,\,\,\=\kill
    \( \domF(c,n) :=\)\\[\jot]
    \> \( (\domR{\Upsnet} (c) \land c = n) \)\\[\jot]
    \> \( {} \lor (\domR{\PWLV{m}}(c) \land \breakplaneR(c) \land \exists u\,\genEdgeR(c,u,n)) \) \\[\jot]
    \> \( {} \lor (\domR{\cellV{m}}(c) \land \cellR(c) \land \layerIF{i+1}(n)) \) \\[\jot]
    \( \varphi_{\domR{\PWLV{m}}}(c,n) := \domR{\PWLV{m}}(c) \lor \domR{\cellV{m}}(c) \)\\[\jot]
    \( \varphi_{\domR{\Upsnet}}(c,n) := \domR{\Upsnet}(c) \)\\[\jot]
    \( \varphi_{\breakplaneR}(c,n) := \varphi_{\domR{\PWLV{m}}}(c,n) \land \breakplaneR(c) \)\\[\jot]
    \( \varphi_{\ptopeR}(c,n) := \varphi_{\domR{\PWLV{m}}}(c,n) \land \domR{\cellV{m}}(c) \)
  \end{tabbing}

  Next, we need to find a way to map the breakplanes in the PWL
  function to a hyperplane in $\D$. We do this with the formula 
  $\bpToHpF(b,h)$ simply by finding the hyperplane $h$ in $\D$ 
  that has the same coefficients as the breakplane $b$. We know
  that such a hyperplane is guaranteed to exist by the definition
  of $\D$. This then allows us to define the formulas
  $\hpPosCR(c,b)$, $\hpNegCR(c,b)$, and $\hpOnCR(c,b)$ that
  select all $m$-cells $c$ that are on the positive side,
  negative side, or on top of the breakplane $b$ respectively. We
  do this by mapping the breakplanes to their
  corresponding hyperplane in $\D$ and then select the cells on
  the correct side of this hyperplane using $\hpPosCCHpR(c,h)$,
  $\hpNegCCHpR(c,h)$, and $\hpOnCCHpR(c,h)$ from Lemma~\ref{lemselrelhp}. 
  These formulas select the cells on the positive side, negative side and on the hyperplane in the cell decomposition respectively.
  We can also use these formulas in the obvious manner to define the formulas $\varphi_{\posSR}(c,n,c',n')$, $\varphi_{\negSR}(c,n,c',n')$, and $\varphi_{\onSR}(c,n,c',n')$.
  The described $\wal$ formulas are written as follows:
  \begin{tabbing}
    \quad\=\,\,\,\=\,\,\,\=\,\,\,\=\,\,\,\=\,\,\,\=\,\,\,\=\,\,\,\=\kill
    \( \bpToHpF(b,h) := \) \\[\jot]
    \> \( \displaystyle (\bigwedge_{i \in \{0,\dots,m\}} \coefPWLW{i}(b) = \coefCellW{i}(h)) \land \domR{\cellV{d}}(h) \land \hpR(h) \)\\[\jot]
  
    \( \hpPosCR(c,b) :=\) \\[\jot]
    \> \( \exists h (\bpToHpF(b,h) \land \hpPosCCHpR(c,h))\) \\[\jot]
    \( \hpNegCR(c,b) :=\) \\[\jot]
    \> \( \exists h (\bpToHpF(b,h) \land \hpNegCCHpR(c,h))\) \\[\jot]
    \( \hpOnCR(c,b) :=\) \\[\jot]
    \> \( \exists h (\bpToHpF(b,h) \land \hpOnCCHpR(c,h))\) \\[\jot]

    \( \varphi_{\posSR}(c,n,c',n') := \)\\[\jot]
    \> \( \varphi_{\ptopeR}(c,n) \land \varphi_{\breakplaneR}(c',n') \land n = n \land \hpPosCR(c,c') \)\\[\jot]
    \( \varphi_{\negSR}(c,n,c',n') := \)\\[\jot]
    \> \( \varphi_{\ptopeR}(c,n) \land \varphi_{\breakplaneR}(c',n') \land n = n \land \hpNegCR(c,c') \)\\[\jot]
    \( \varphi_{\onSR}(c,n,c',n') := \)\\[\jot]
    \> \( \varphi_{\ptopeR}(c,n) \land \varphi_{\breakplaneR}(c',n') \land n = n \land \hpOnCR(c,c') \)
  \end{tabbing}

  Recall that the cells from $\D$ will serve as the polytopes for $f_v$. Let us refer to the polytopes of all functions $f_{uv}$ as the ``old polytopes''. For each cell $c$ we must know the old polytopes $p$, containing $c$. Thereto we introduce the formula $\cellToPtopeF(c,n,p)$. The coefficients $\coefPWLW{j}(c,n)$ of the component functions of $f_v$ can then be defined by summation. The coefficients of the breakplanes remain unchanged.

  \begin{tabbing}
    \quad\=\,\,\,\=\,\,\,\=\,\,\,\=\,\,\,\=\,\,\,\=\,\,\,\=\,\,\,\=\kill
    \( \cellToPtopeF(c,n,p) :=  \) \\[\jot]
    \> \( \varphi_{\ptopeR}(c,n) \land \domR{\PWLV{m}}(p) \land \ptopeR(p)\)\\[\jot]
    \> \( {} \land \exists u (\genEdgeR(p,u,n) \) \\[\jot]
    \> \> \( {} \land \forall b ((\breakplaneR(b) \land \genEdgeR(b,u,n))\)\\[\jot]
    \> \> \> \( {} \implies ((\posSR(p,b) \land \hpPosCR(c,b))\)\\[\jot]
    \> \> \> \> \( {} \lor (\negSR(p,b) \land \hpNegCR(c,b)) \) \\[\jot]
    \> \> \> \> \( {} \lor (\onSR(p,b) \land \hpOnCR(c,b))))) \) \\[\jot]
    
    \( \varphi_{\coefPWLW{j}}(c,n) := \)\\[\jot]
    \> \(\ifText \varphi_{\breakplaneR}(c,n) \thenText \coefPWLW{j}(c)\) \\[\jot]
    \> \( \displaystyle \elseTextNS (\ifText \varphi_{\ptopeR}(c,n) \thenText \sum_{p: \cellToPtopeF(c,n,p)} \coefPWLW{j}(p)\) \\[\jot]
    \> \> \(\elseTextNS \bot ) \)\\[\jot]
    \> (for $j \in \{1,\dots,m\}$)\\[\jot]
    \( \varphi_{\coefPWLW{0}}(c,n) := \)\\[\jot]
    \> \(\ifText \varphi_{\breakplaneR}(c,n) \thenText \coefPWLW{0}(c)\) \\[\jot]
    \> \( \elseTextNS (\ifText \varphi_{\ptopeR}(c,n) \)\\[\jot] 
    \> \> \( \displaystyle \thenTextNS b(n) + (\sum_{p: \cellToPtopeF(c,n,p)} \coefPWLW{0}(p))\) \\[\jot]
    \> \> \( \elseTextNS \bot ) \)\\[\jot]

    \( \varphi_{E}(c,n,c',n') := E(c,c') \) \\[\jot]
    \( \varphi_{b}(c,n) := b(c)\)\\[\jot]
    \( \varphi_{w}(c,n,c',n') := w(c,c')\)\\[\jot]
    \( \varphi_{\inn_i} := (\inn_i, \inn_i) \) \\[\jot]
    \> (for $i \in \{1,\dots,m\}$) \\[\jot]
    \( \varphi_{\out_1} := (\out_1, \out_1) \) \\[\jot]

    \( \varphi_{\genNodeR}(c,n,c',n') := \varphi_{\domR{\PWLV{m}}}(c,n) \land \varphi_{\domR{\Upsnet}}(c',n') \land n=n' \)\\[\jot]
  \end{tabbing}

  Different cells representing the same polytope for some $f_v$ must be equated by the equality formula. Thereto we use the auxiliary formula $\eqCellF$. The duplicate breakplanes must also be equated.
  \begin{tabbing}
    \quad\=\,\,\,\=\,\,\,\=\,\,\,\=\,\,\,\=\,\,\,\=\,\,\,\=\,\,\,\=\kill
    \( \eqHpF(c,n,c',n') := \)\\[\jot]
    \> \( \varphi_{\breakplaneR}(c,n) \land \varphi_{\breakplaneR}(c',n') \land n=n' \)\\[\jot]
    \> \( \displaystyle {} \land ( \bigwedge_{i \in \{0,\dots,m\}} \varphi_{\coefPWLW{i}}(c,n) = \varphi_{\coefPWLW{i}}(c',n') )\) \\[\jot]
    \( \eqCellF(c,n,c',n') := \)\\[\jot]
    \> \( \varphi_{\ptopeR}(c,n) \land \varphi_{\ptopeR}(c',n') \land n=n' \)\\[\jot]
    \> \( {} \land \forall p ((\breakplaneR(p) \land \genNodeR(p,n)) \)\\[\jot]
    \> \> \( {} \implies ((\hpPosCR(c,p) \land \hpPosCR(c',p)) \)\\[\jot]
    \> \> \> \( {} \lor (\hpNegCR(c,p) \land \hpNegCR(c',p))\)\\[\jot]
    \> \> \> \( {} \lor (\hpOnCR(c,p) \land \hpOnCR(c',p)))) \)\\[\jot]
    \( \eqF(c,n,c',n') := \) \\[\jot]
    \> \( n=n' \)\\[\jot]
    \> \( {} \land ((\varphi_{\domR{\Upsnet}}(c,n) \land c=c') \)\\[\jot]
    \> \> \( {} \lor (\varphi_{\breakplaneR}(c,n) \land \varphi_{\breakplaneR}(c',n') \)\\[\jot]
    \> \> \> \( {} \land \eqHpF(c,n,c',n')) \)\\[\jot]
    \> \> \( {} \lor (\varphi_{\ptopeR}(c,n) \land \varphi_{\ptopeR}(c',n') \land \eqCellF(c,n,c',n'))) \)
  \end{tabbing}
\end{proof}

\subsection{ReLU}

At this point in the proof we have PWL functions $f_v$ for every neuron in the $i+1$-th layer, given together in a $\PWLV{m} \cup \Upspwlnode$-structure. In order to obtain the desired function $\Funcu{\Net}{v}$, it remains to apply $\ReLU$ to each function.

\begin{lemma}
  \label{lempwlrelu}
  Let $m$ be a natural number. There exists an $\wal$-translation $\varphi$ from $\PWLV{m} \cup \Upspwlnode$ to $\PWLV{m} \cup \Upspwlnode$ with the following behavior.  Let $\B$ be a structure representing a family $(f_v)_v$ of functions $f_v \in \PL(m)$ linked to the identifier $v$ by the relation $\genNodeR$ as before. Then $\varphi$ maps $\B$ to a structure representing the family $({f_v}')_v$ with ${f_v}' = \ReLU(f_v)$.
\end{lemma}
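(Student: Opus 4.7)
The plan is to realize the construction outlined in Section~\ref{submultiple} for the ReLU step. Conceptually, for each node identifier $v$, the function $\ReLU(f_v)$ is obtained by refining the polytope partition of $f_v$ along the zero-level hyperplane of each component function, then keeping the component function unchanged on the pieces where $f_v$ is positive and setting it to $0$ elsewhere. I would realize this uniformly by building a cylindrical decomposition compatible with the refined arrangement via Lemma~\ref{lemcd}, and then rebuilding a $\PWLV{m}$-structure on top of it.

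First I would apply a translation in the style of Lemma~\ref{lempwlsumarr} to produce, for each $v$, a hyperplane arrangement $\B_v$ containing two kinds of hyperplanes, tagged apart by an auxiliary relation: the existing breakplanes of $f_v$ (read off via $\breakplaneR$ and $\coefPWLW{0},\dots,\coefPWLW{m}$), and, for each polytope $p$ of $f_v$, the zero-level hyperplane of $p$'s component function, whose coefficients are again $\coefPWLW{0}(p),\dots,\coefPWLW{m}(p)$. Each hyperplane is tagged with the identifier $v$, the disjoint union over all $v$ is fed to Lemma~\ref{lemcd}, yielding a cylindrical decomposition $\E$ compatible with every $\B_v$, whose $m$-cells will serve as the new polytopes.

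The output structure then has two sorts of elements: pairs $(h,v)$ encoding breakplanes (one per hyperplane of $\B_v$, tagged by $v$), and pairs $(C,v)$ encoding new polytopes, where $C$ is an $m$-cell of $\E$. The relations $\varphi_{\posSR}$, $\varphi_{\negSR}$, $\varphi_{\onSR}$ between $(C,v)$ and $(h,v)$ are defined by comparing $C$ with the corresponding hyperplane of $\E$, exactly as in the proof of Lemma~\ref{lempwlsumcd}, and $\genNodeR$ is inherited in the obvious way. An auxiliary formula identifies, for each cell $C$, the unique original polytope $p = p_v^C$ of $f_v$ containing $C$: it is the $p$ whose position vector with respect to the original breakplanes matches that of $C$ (which we can read from the $\posSR/\negSR/\onSR$ relations between $(C,v)$ and the old-kind breakplanes of $\B_v$). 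The new component function on $(C,v)$ is then given by a single \textsf{if-then-else}: when $C$ lies on the positive side of the zero-level hyperplane of $p$'s component function, set $\varphi_{\coefPWLW{j}}(C,v) := \coefPWLW{j}(p)$ for every $j \in \{0,\dots,m\}$; otherwise set all $\varphi_{\coefPWLW{j}}(C,v) := 0$. The equality formula $\eqF$ merges cells lying in the same strong $v$-equivalence class (identical position vector with respect to all of $\B_v$) and merges duplicate hyperplanes by coefficient equality.

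The main obstacle, as in Lemma~\ref{lempwlsumcd}, is not any single formula but the bookkeeping: keeping the two kinds of hyperplanes in $\B_v$ distinguishable in the vocabulary, matching each cell to its unique original polytope, and ensuring properness of the quotient structure in the sense of Definition~\ref{defproperpwl}. Uniqueness of the original polytope per cell, and the fact that strong $v$-equivalence classes exhaust all realizable position vectors of the refined arrangement, both follow from the compatibility guarantees of Lemma~\ref{lemcd}, so the construction is well defined and yields a proper $\PWLV{m}$-structure representing $\ReLU(f_v)$ for every $v$.
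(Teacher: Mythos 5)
Your proposal takes essentially the same route as the paper's proof. The paper also proceeds in three steps: first a translation (Lemma~\ref{lempwlreluarr}) that pools every breakplane of every $f_v$ together with the zero-level hyperplane of every component function into one $\hpArrV{m}$-structure (with duplicates, which Remark~\ref{remhpdup} permits); then Lemma~\ref{lemcd} to build a compatible CD $\D$; and finally a translation (Lemma~\ref{lempwlrelucd}) from $(\PWLV{m}\uplus\cellV{m})\cup\Upspwlnode$ back to $\PWLV{m}\cup\Upspwlnode$ that encodes the new breakplanes of $\ReLU(f_v)$ as pairs $(c,v)$ with $c$ an old breakplane or polytope (a polytope stands for its zero-level hyperplane), encodes the new polytopes as pairs $(C,v)$ with $C$ an $m$-cell of $\D$, matches each $(C,v)$ to its unique containing polytope $p$ by comparing position vectors (the auxiliary formula $\cellToPtopeF$, now using $\genNodeR$ rather than $\genEdgeR$), tests positivity via $\posPtopeF$, and sets the coefficients to those of $p$ when positive and to $0$ otherwise, exactly as you do with your \textsf{if-then-else}. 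Your $\eqF$ also matches: cells are merged when they share a position vector with respect to all hyperplanes of $\B_v$ (the paper's strong $v$-equivalence), and hyperplanes are merged by coefficient equality. The only cosmetic difference is that you speak of tagging hyperplanes in the arrangement by $v$ before calling Lemma~\ref{lemcd}; the paper instead builds a single untagged arrangement, and the association with $v$ is recovered afterwards through $\genNodeR$ and the pair encoding — but since $\Upsarr_m$ has no tag relation and the CD is shared across all $v$, this does not change the substance of the argument.
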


As like Lemma~\ref{lempwlsum}, the proof of this Lemma proceeds in three steps. The first step is similar to Lemma~\ref{lempwlsumarr}. This time it collects not only all breakplanes from all functions $f_v$ but also from each component function defined for the polytopes, the hyperplane where that function is zero. It is important to include these zero-hyperplanes, so that an arrangement is obtained that induces a partition of $\R^m$ in polytopes that is fine enough to represent $\ReLU(f_v)$.

\begin{lemma}
  \label{lempwlreluarr}
  Let $m$ be a natural number. There exists an $\wal$-translation $\varphi$ from $\PWLV{m} \cup \Upspwlnode$ to $\hpArrV{m}$ with the following behavior. Let $\B$ be the union of disjoint structures representing functions in $\PL(m)$.
  Then $\varphi$ maps $\B$ to the arrangement consisting of all breakplanes of all functions of $\B$ plus all zero hyperplanes of all the component functions.
\end{lemma}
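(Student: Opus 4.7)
The plan is to mimic the translation of Lemma~\ref{lempwlsumarr} with a slightly enlarged domain: a $1$-ary $\wal$ translation whose output elements are exactly the hyperplanes of the required arrangement.

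First I would define the domain to consist of two kinds of elements from $\B$: every breakplane, and every polytope whose component function is a \emph{non-constant} affine map. Concretely,
\[
\domF(h) := \breakplaneR(h) \lor \Bigl(\ptopeR(h) \land \bigvee_{i=1}^m \coefPWLW{i}(h) \neq 0\Bigr).
\]
The guard $\bigvee_{i=1}^m \coefPWLW{i}(h) \neq 0$ is necessary because a polytope whose component function is a constant map (all $a_i = 0$ for $i\ge 1$) does not give rise to a valid affine hyperplane and must be excluded.

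Next, the coefficients are inherited directly. Because $\PWLV{m}$ already stores both the coefficients of a breakplane and the coefficients of the component function of a polytope under the same weight symbols $\coefPWLW{i}$, I can simply set $\varphi_{\coefArrW{m}{i}}(h) := \coefPWLW{i}(h)$ for every $i \in \{0,\dots,m\}$. This uniformly produces the coefficients of the breakplane $h$ when $h$ is a breakplane and those of the zero hyperplane of its component function when $h$ is a polytope. For equality I take componentwise equality of coefficient tuples, exactly as in Lemma~\ref{lempwlsumarr}:
\[
\eqF(h,h') := \bigwedge_{i=0}^m \varphi_{\coefArrW{m}{i}}(h) = \varphi_{\coefArrW{m}{i}}(h').
\]
I do not need to detect scalar multiples, since $\hpArrV{m}$-structures are permitted to contain duplicates representing the same hyperplane (Remark~\ref{remhpdup}).

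The verification is routine: under the translation, the image of $\B$ contains exactly those affine hyperplanes that either occur as a breakplane of some function in $\B$ or occur as the zero set of a non-constant component function of some function in $\B$, which is precisely the required arrangement. I do not expect any real obstacle; the only subtlety is excluding constant component functions from the domain, which is already handled by the disjunction in $\domF$.
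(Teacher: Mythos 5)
Your construction matches the paper's own proof: a unary translation whose domain consists of breakplanes together with polytopes, inheriting the coefficients directly via $\varphi_{\coefArrW{m}{i}}(h) := \coefPWLW{i}(h)$ and taking componentwise coefficient equality for $\eqF$, with duplicates permitted by Remark~\ref{remhpdup}. The one refinement you add --- the guard $\bigvee_{i=1}^m \coefPWLW{i}(h) \neq 0$ excluding polytopes whose component function is constant and hence has no well-defined zero hyperplane --- is a sensible precaution the paper's proof omits (it simply sets $\domF(c) := \breakplaneR(c) \lor \ptopeR(c)$), so you are, if anything, slightly more careful than the original.
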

\begin{proof}
  We use each breakplane and each cell to encode their respective functions as the hyperplanes to be produced.
  \begin{tabbing}
		\qquad\=\quad\=\quad\=\quad\=\quad\=\quad\=\quad\=\quad\=\kill
		\( \domF(c) := \breakplaneR(c) \lor \cellR(c) \) \\[\jot]
		\( \varphi_{\coefArrW{m}{i}}(t,c) := \coefPWLW{i}(c) \) \\[\jot]
		\> (for $i \in \{0,\dots,m\}$) \\[\jot]
    \( \displaystyle \eqF(h,h'') := \bigwedge_{i \in \{0,\dots,m\}} \varphi_{\coefArrW{m}{i}}(h) = \varphi_{\coefArrW{m}{i}}(h') \)
	\end{tabbing}
\end{proof}

Again, the second step is to invoke Lemma~\ref{lemcd} to create a CD for the arrangement from the first step.

In the third step, using this CD, we do the actual application of $\ReLU$. Thus, the following sublemma is a specialization of the Lemma~\ref{lempwlrelu} we need to prove.
\begin{lemma}
  \label{lempwlrelucd}
  Let $m$ be a natural number. There exists an $\wal$-translation $\varphi$ from $(\PWLV{m} \uplus \cellV{m}) \cup \Upspwlnode$ to $\PWLV{m} \cup \Upspwlnode$ with the following behavior.  Let $\B$ be a structure representing a family $(f_v)_v$ of functions $f_v \in \PL(m)$ linked to the identifier $v$ by the relation $\genNodeR$ as before, and let $\B'$ be an extension of $\B$ by a CD $\D$ compatible with the union of all breakplanes and all zero hyperplanes of all component functions of all function $f_v$. Then $\varphi$ maps $\B'$ to a structure representing the family $({f_v}')_v$ with ${f_v}' = \ReLU(f_v)$.
\end{lemma}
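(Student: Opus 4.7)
The plan is to follow the template of the proof of Lemma~\ref{lempwlsumcd}. I will encode elements of the output as pairs $(c, n)$, where $n$ is a neuron identifier linked through $\genNodeR$ and $c$ is one of three types: (i) an input breakplane of $f_n$, retained as a breakplane of ${f_n}'$; (ii) an input polytope of $f_n$ whose component function has a nontrivial linear part, encoding as a new breakplane of ${f_n}'$ the zero-hyperplane of that component function; or (iii) a cell of $\D$, encoding a polytope of ${f_n}'$. The formula $\domF$ will dispatch on the sorts $\breakplaneR(c)$, $\ptopeR(c)$, and $\cellR(c)$.

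For the coefficients $\varphi_{\coefPWLW{i}}$ I would simply copy $\coefPWLW{i}(c)$ in both breakplane cases, exploiting the fact that the zero-set of $a_0 + a_1 x_1 + \cdots + a_m x_m$ is the hyperplane with identical coefficients. For a cell-encoding $(C, n)$, I would first locate the unique input polytope $p$ of $f_n$ containing $C$ via a formula analogous to $\cellToPtopeF$, and then set the coefficients to those of $p$'s component function if $C$ lies on the non-negative side of the zero-hyperplane of that function, and to zero otherwise. Positions of $C$ relative to any breakplane-encoding will be determined by mapping the breakplane-encoding to its corresponding hyperplane in $\D$ via a $\bpToHpF$-analogue based on coefficient equality (existence of that hyperplane being guaranteed by the compatibility assumption on $\D$), and then reading off the position from $\hpPosCCHpR$, $\hpNegCCHpR$, $\hpOnCCHpR$ of Lemma~\ref{lemselrelhp}. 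The same machinery supplies $\varphi_{\posSR}$, $\varphi_{\negSR}$, and $\varphi_{\onSR}$.

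The main subtlety lies in the equality formula. Two breakplane-encodings should be equated exactly when their coefficient tuples coincide. Two cell-encodings $(C, n)$ and $(C', n)$ must be equated precisely when they yield the same position vector with respect to all breakplanes of ${f_n}'$, that is, all input breakplanes of $f_n$ together with all zero-hyperplanes of the component functions of $f_n$. Compatibility of $\D$ with exactly these hyperplanes ensures that the position of each cell relative to each such hyperplane is well-defined, so the condition is expressible in $\wal$. Properness of the output then follows: distinct polytopes have distinct position vectors by construction, and every realizable position vector is witnessed by some cell of $\D$ since $\D$ is a cylindrical decomposition of the full arrangement. The hard part will be organizational, matching the pattern already established in Lemma~\ref{lempwlsumcd} while juggling the two flavours of breakplane-encodings and verifying that detecting the sign of the relevant component function on each cell is correctly implemented in $\wal$.
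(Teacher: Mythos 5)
Your proposal follows essentially the same approach as the paper's proof: the same $(c,n)$-pair encoding with three sorts of first component (retained breakplanes, polytopes standing for zero-hyperplanes of component functions, and cells of $\D$ standing for new polytopes), the same use of a $\bpToHpF$-style coefficient match combined with the $\hpPosCCHpR$/$\hpNegCCHpR$/$\hpOnCCHpR$ formulas of Lemma~\ref{lemselrelhp} to determine positions and define the coefficient and side relations, and the same equality formula equating cells by their position vectors against all breakplanes of ${f_v}'$. The only differences are cosmetic (you filter out polytopes with constant component functions before introducing zero-hyperplanes, and you do not spell out the $c=n$ encoding of the neuron itself), neither of which changes the substance.
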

\begin{proof}
  The proof largely follows the proof of Lemma~\ref{lempwlsumcd}.
  The breakplanes of the functions $\ReLU(f_v)$ to be generated are now encoded by pairs $(c,n)$ where $n=v$ and $c$ is a breakplane or polytope of $f_v$. (If $c$ is a polytope, then $(c,n)$ represents the corresponding zero hyperplane.) The polytopes of the new functions are again encoded by pairs $(c,n)$ where $c$ is a cell of $D$.
  In the auxiliary formula $\cellToPtopeF$, we now use $\genNodeR$ instead of $\genEdgeR$.

  \begin{tabbing}
    \quad\=\,\,\,\=\,\,\,\=\,\,\,\=\,\,\,\=\,\,\,\=\,\,\,\=\,\,\,\=\kill
    \( \domF(c,n) :=\)\\[\jot]
    \> \( (\breakplaneR(c) \land \genNodeR(c,n))\)\\[\jot]
    \> \( {} \lor (\ptopeR(c) \land \genNodeR(c,n)) \)\\[\jot]
    \> \( {} \lor (\domR{\cellV{m}}(c) \land \cellR(c) \land \exists c'(\genNodeR(c',n))) \)\\[\jot]
    \> \( {} \lor (c = n \land \exists c'(\genNodeR(c',n))) \)\\[\jot]
    \( \varphi_{\breakplaneR}(c,n) := \ptopeR(c) \lor \breakplaneR(c) \)\\[\jot]
    \( \varphi_{\ptopeR}(c,n) := \domR{\cellV{m}}(c) \)\\[\jot]

    \( \bpToHpF(b,h) := \) \\[\jot]
    \> \( \displaystyle (\bigwedge_{i \in \{0,\dots,m\}} \coefPWLW{i}(b) = \coefCellW{i}(h)) \land \domR{\cellV{d}}(h) \land \hpR(h) \)\\[\jot]
  
    \( \hpPosCR(c,b) :=\) \\[\jot]
    \> \( \exists h (\bpToHpF(b,h) \land \hpPosCCHpR(c,h))\) \\[\jot]
    \( \hpNegCR(c,b) :=\) \\[\jot]
    \> \( \exists h (\bpToHpF(b,h) \land \hpNegCCHpR(c,h))\) \\[\jot]
    \( \hpOnCR(c,b) :=\) \\[\jot]
    \> \( \exists h (\bpToHpF(b,h) \land \hpOnCCHpR(c,h))\) \\[\jot]

    \( \varphi_{\posSR}(c,n,c',n') := \)\\[\jot]
    \> \( \varphi_{\ptopeR}(c,n) \land \varphi_{\breakplaneR}(c',n') \land n = n \land \hpPosCR(c,c') \)\\[\jot]
    \( \varphi_{\negSR}(c,n,c',n') := \)\\[\jot]
    \> \( \varphi_{\ptopeR}(c,n) \land \varphi_{\breakplaneR}(c',n') \land n = n \land \hpNegCR(c,c') \)\\[\jot]
    \( \varphi_{\onSR}(c,n,c',n') := \)\\[\jot]
    \> \( \varphi_{\ptopeR}(c,n) \land \varphi_{\breakplaneR}(c',n') \land n = n \land \hpOnCR(c,c') \)\\[\jot]

    \( \cellToPtopeF(c,n,p) :=  \) \\[\jot]
    \> \( \varphi_{\ptopeR}(c,n) \land \domR{\PWLV{m}}(p) \land \ptopeR(p) \land \genNodeR(p,n) \) \\[\jot]
    \> \( {} \land \forall b ((\breakplaneR(b) \land \genNodeR(b,n))\)\\[\jot]
    \> \> \( {} \implies ((\posSR(p,b) \land \hpPosCR(c,b))\)\\[\jot]
    \> \> \> \( {} \lor (\negSR(p,b) \land \hpNegCR(c,b)) \) \\[\jot]
    \> \> \> \( {} \lor (\onSR(p,b) \land \hpOnCR(c,b)))) \)
  \end{tabbing}

  The main difference in the proof is the definition of the coefficients of the component functions of the functions $\ReLU(f_v)$ to be generated. The auxiliary formula $\posPtopeF$ checks whether the component function above a given polytope is entirely positive. If this is not the case, the coefficients of the component function above this polytope will all become zero.
  \begin{tabbing}
    \quad\=\,\,\,\=\,\,\,\=\,\,\,\=\,\,\,\=\,\,\,\=\,\,\,\=\,\,\,\=\kill
    \( \posPtopeF(c,n) := \)\\[\jot]
    \> \(\exists p( \cellToPtopeF(c,n,p) \land \genNodeR(p,n) \land \hpPosCR(c,p)) \) \\[\jot]
    \( \varphi_{\coefPWLW{i}}(c,n) := \)\\[\jot]
    \> \( \ifText \varphi_{\breakplaneR}(c,n) \thenText \coefPWLW{i}(c)\) \\[\jot]
    \> \( \elseTextNS (\ifText \varphi_{\ptopeR}(c,n) \land \posPtopeF(c,n) \)\\[\jot]
      \> \> \> \> \( \displaystyle \thenTextNS \sum_{p: \cellToPtopeF(c,n,p)}\coefPWLW{i}(p)\) \\[\jot]
      \> \> \> \( \elseTextNS (\ifText \domR{\cellV{m}}(c) \land \neg \posPtopeF(c,n) \thenText 0\) \\[\jot]
        \> \> \> \> \( \elseTextNS \bot )) \)\\[\jot]
    \> (for $i \in \{0,\dots,m\}$)\\[\jot]

    \( \varphi_{\genNodeR}(c,n,c',n') := \)\\[\jot]
    \> \( \varphi_{\domR{\PWLV{m}}}(c,n) \land \genNodeR(c,c') \land \genNodeR(c,n') \land c' = n' \)\\[\jot]
    \( \eqHpF(c,n,c',n') := \)\\[\jot]
    \> \( \varphi_{\breakplaneR}(c,n) \land \varphi_{\breakplaneR}(c',n') \land n=n' \)\\[\jot]
    \> \( \displaystyle {} \land ( \bigwedge_{i \in \{0,\dots,m\}} \varphi_{\coefPWLW{i}}(c,n) = \varphi_{\coefPWLW{i}}(c',n') )\) \\[\jot]
    \( \eqCellF(c,n,c',n') := \)\\[\jot]
    \> \( \varphi_{\ptopeR}(c,n) \land \varphi_{\ptopeR}(c',n') \land n=n' \)\\[\jot]
    \> \( {} \land \forall b (\varphi_{\breakplaneR}(b,n) \)\\[\jot]
    \> \> \( {} \implies ((\varphi_{\posSR}(c,n, b,n) \land \varphi_{\posSR}(c',n', b,n)) \)\\[\jot]
    \> \> \> \( {} \lor (\varphi_{\negSR}(c,n, b,n) \land \varphi_{\negSR}(c',n', b,n))\)\\[\jot]
    \> \> \> \( {} \lor (\varphi_{\onSR}(c,n, b,n) \land \varphi_{\onSR}(c',n', b,n)))) \)\\[\jot]
    \( \eqF(c,n,c',n') := \) \\[\jot]
    \> \( n=n' \)\\[\jot]
    \> \( {} \land ((\land c=c') \)\\[\jot]
    \> \> \( {} \lor (\varphi_{\breakplaneR}(c,n) \eqHpF(c,n,c',n')) \)\\[\jot]
    \> \> \( {} \lor (\varphi_{\ptopeR}(c,n) \land \eqCellF(c,n,c',n'))) \)
  \end{tabbing}
\end{proof}

\section{Lemma~\ref{lemsel} for general $m$}
\label{seclemsel}

Lemma~\ref{lemsel} as stated in Section~\ref{subproofmainthm} uses the notion of the hyperplane arrangement $\Arr_f$ corresponding to a PWL function $f:\R^m \to \R$. However, we only defined this for $m=1$ in Section~\ref{subproofmainthm}. Here we define $\Arr_f$ in general and also show that $\Arr_f$ can be uniformly constructed from $f$ by an $\wal$ translation.

\subsection{Definition and creation of hyperplane arrangement of PWL functions}
\label{secpwlarr}
\newcommand{\Upsconstarr}[2]{\Upsconst_{\mathrm{arr},{#1},{#2}}}

Consider a PWL function $f:\R^m \to R$, given by a set $B$ of breakplanes (hyperplanes in $\R^m$) so that $f$ is an affine function on every cell of the partition of $\R^m$ induced by $B$. Recall that we refer to these cells as polytopes, to avoid confusion with the cells in a cylindrical cell decomposition. Let $d > m$. We define the hyperplane arrangement corresponding to $f$ in $d$ dimensions, to consist of the following hyperplanes:
\begin{itemize}
  \item For each hyperplane $a_0 + a_1 x_1 + \cdots + a_m x_m = 0$ in $B$, and for each combination of indices $0 < g_1 < \cdots < g_m < d$, we include the hyperplane $a_0 + a_1 x_{g_1} + \cdots + a_m x_{g_m} = 0$.
  \item For each polytope $p$, let $f$ restricted to $p$ be the affine function $y = a_0 + a_1 x_1 + \cdots + a_m x_m$. Then for each combination $0 < g_1 < \cdots < g_{m+1} \leq d$ we include the hyperplane $x_{g_{m+1}} = a_0 + a_1 x_{g_1} + \cdots + a_m x_{g_m}$.
\end{itemize}
We denote this hyperplane arrangement by $\Arr_f$ when $d$ is understood.

In the next lemma we assume a $d$-dimensional hyperplane arrangement $\Arr_H$. We make use of the vocabulary $\Upsconstarr{m}{H}$ containing the following auxiliary constant symbols:

\begin{itemize}
  \item $\dimC{j}$, for $j\in\{1,\dots,m+1\}$;
  \item $\noneDimC$;
  \item $\extraHpC{h}$, for each $h \in \Arr_H$
\end{itemize}

\begin{lemma}
  \label{lemfuncHarr}
  Let $m$ and $d$ be natural numbers with $d>m$, and let $\Arr_H$ be a $d$-dimensional hyperplane arrangement.
  There exists an $\wal$ translation $\varphi$ from $\PWLV{m}$ from $\PWLV{m} \cup \Upsconstarr{m}{H}$ to $\hpArrV{d}$ with the following behavior.
  For any $\PWLV{m}$-structure $\F$ representing a function $f \in \PL(m)$, let $\F'$ be $\F$ expanded with fresh constants for $\Upsconstarr{m}{H}$. Then $\varphi$ maps $\F'$ to $\Arr_f \cup \Arr_H$.
\end{lemma}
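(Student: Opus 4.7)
\let\qed\relax
The plan is to build a $(d+1)$-ary translation whose tuples $(c, t_1, \ldots, t_d)$ range over the extended domain and enumerate the hyperplanes of $\Arr_f \cup \Arr_H$. The first slot $c$ will be either a breakplane, a polytope, or one of the constants $\extraHpC{h}$; each remaining slot $t_k$ will be one of $\dimC{1}, \ldots, \dimC{m+1}$ or $\noneDimC$. The intended reading is that $t_k = \dimC{j}$ routes the $j$-th variable of the affine function attached to $c$ to coordinate $x_k$ of $\R^d$, while $t_k = \noneDimC$ means coordinate $k$ does not appear in the resulting hyperplane.

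The domain formula $\domF$ will enumerate three disjoint cases. For a breakplane $c$, we require $t_d = \noneDimC$ (witnessing the strict $g_m < d$) and that each of $\dimC{1}, \ldots, \dimC{m}$ appears exactly once among $t_1, \ldots, t_{d-1}$, with the remaining positions set to $\noneDimC$, and with the ascending condition that $t_k = \dimC{j}$ together with $t_{k'} = \dimC{j'}$ for $k < k'$ forces $j < j'$. For a polytope $c$ the condition is analogous, but now with $\dimC{1}, \ldots, \dimC{m+1}$ spread across the full tuple $t_1, \ldots, t_d$. For each $\extraHpC{h}$ we admit just the canonical encoding $(\extraHpC{h}, \noneDimC, \ldots, \noneDimC)$. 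Since $d$, $m$ and $\Arr_H$ are all fixed, every such condition is a finite boolean combination of equalities between variables and constant symbols.

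The coefficient terms $\varphi_{\coefArrW{d}{k}}$ are then an if-then-else cascade over the same three cases. For $k = 0$, return $\coefPWLW{0}(c)$ on breakplanes and polytopes, and the hardcoded constant term of $h$ when $c = \extraHpC{h}$. For $k \in \{1, \ldots, d\}$, inspect $t_k$: on a breakplane return $\coefPWLW{j}(c)$ when $t_k = \dimC{j}$ with $j \leq m$ and $0$ otherwise; on a polytope return $\coefPWLW{j}(c)$ for $j \leq m$, return $-1$ when $t_k = \dimC{m+1}$ (accounting for moving $x_{g_{m+1}}$ to the other side), and $0$ otherwise; on $\extraHpC{h}$ return the hardcoded $k$-th coefficient of $h$. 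Finally, $\eqF$ is the conjunction of equality of all $d+1$ computed coefficients; by Remark~\ref{remhpdup} we are free to leave any residual duplicates between categories in place.

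The one subtle ingredient is the ``ascending positions'' constraint, which is the only combinatorial aspect of the construction. Rather than quantifying over $m$ (or $m+1$) increasing indices into $\{1, \ldots, d\}$, we bake this choice into the $(d+1)$-ary tuple itself via the constants $\dimC{j}$ and $\noneDimC$, reducing the combinatorics to a finite boolean condition on $t_1, \ldots, t_d$. Once this is in place, admissibility of $\varphi$ on every $\F'$ and correctness of the induced mapping to $\Arr_f \cup \Arr_H$ follow directly from unwinding the definitions.
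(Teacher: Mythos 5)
Your proposal is correct and matches the paper's own construction essentially point for point: a $(d+1)$-ary translation over tuples pairing the carrier element (breakplane/polytope/$\extraHpC{h}$) with a pattern of $\dimC{j}$ and $\noneDimC$ constants that selects and orders the coordinates, a domain formula expressed as a finite boolean condition encoding the strictly increasing index choice (with the strict $g_m<d$ for breakplanes vs.\ $g_{m+1}\le d$ for polytopes), coefficient terms given by an if-then-else cascade routing $\coefPWLW{j}$, $-1$, $0$, or hard-coded constants, and $\eqF$ as equality of all $d+1$ coefficients. The only cosmetic difference is that you phrase the index-selection condition as a local ``each $\dimC{j}$ appears exactly once, in ascending positions'' constraint rather than a disjunction over all increasing sequences, but these are the same finite boolean formula up to rewriting.
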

\begin{proof}
  We refer to the definition of $\Arr_f$ given above. We encode the hyperplanes in $\Arr_f \cup \Arr_{H}$ by tuples $(q_1, \dots, q_d, p)$.
  \begin{itemize}
    \item Each $h \in Arr_H$ is encoded by $q_1 = \cdots = q_d = p = \extraHpC{h}$.
    \item Each hyperplane $a_0 + a_1 x_{g_1} + \cdots + a_m x_{g_m} = 0$ coming from a breakplane $p \in B$ is encoded by the tuple $(q_1, \dots, q_d, p)$ where $q_{g_j} = \dimC{j}$ for $j \in \{1,\dots,m\}$ and $q_i = \noneDimC$ for $i \in \{1,\dots,d\} \setminus \{g_1, \dots, g_m\}$.
    \item Each hyperplane $x_{g_{m+1}} = a_0 + a_1 x_{g_1} + \cdots + a_m x_{g_m}$ coming from the definition of $f$ on a polytope $p$ is encoded by the tuple $(q_1, \dots, q_d, p)$ where $q_{g_j} = \dimC{j}$ for $j \in \{1,\dots,m+1\}$ and $q_i = \noneDimC$ for $i \in \{1,\dots,d\} \setminus \{g_1, \dots, g_{m+1}\}$.
  \end{itemize}
  \begin{tabbing}
    \quad\=\,\,\,\=\,\,\,\=\,\,\,\=\,\,\,\=\,\,\,\=\,\,\,\=\,\,\,\=\kill
    \(\domF(q_1, \dots, q_d, p) :=\) \\[\jot]
    \> \( \displaystyle(q_1 = \dots = q_d = p \land (\bigvee_{i \in \{1,\dots,d\}} p = \extraHpC{i})) \) \\[\jot]
    \> \( {} \lor ((\ptopeR(p)\) \\[\jot]
    \> \> \> \( \displaystyle {} \land \biggl(\bigvee_{g_1,\dots,g_{m+1} \in \{1,\dots,d\} \land g_1 < \cdots < g_{m+1}}\,\, (\bigwedge_{j\in \{1,\dots,{m+1}\}}q_{g_j} = \dimC{j}) \)\\[\jot]
    \> \> \> \> \( \displaystyle {} \land (\bigwedge_{k \in \{1, \dots, d\}\setminus \{g_1,\dots,g_{m+1}\}} q_k = \noneDimC)\biggr))\)\\[\jot]
    \> \> \( {} \lor (\breakplaneR(p)\) \\[\jot]
    \> \> \> \( \displaystyle {} \land \biggl(\bigvee_{g_1,\dots,g_{m} \in \{1,\dots,d-1\} \land g_1 < \cdots < g_{m}}\,\, (\bigwedge_{j\in \{1,\dots,{m}\}}q_{g_j} = \dimC{j}) \)\\[\jot]
    \> \> \> \> \( \displaystyle {} \land (\bigwedge_{k \in \{1, \dots, d\}\setminus \{g_1,\dots,g_{m}\}} q_k = \noneDimC)\biggr)))\)
  \end{tabbing}

  The coefficients are now set accordingly using the following formulas:
  \begin{tabbing}
    \quad\=\,\,\,\=\,\,\,\=\,\,\,\=\,\,\,\=\,\,\,\=\,\,\,\=\,\,\,\=\kill
    \( \varphi_{\coefArrW{d}{i}}(q_1, \dots, q_d, p) := \) \\[\jot]
    \> \( \ifText q_i = \dimC{(m+1)} \thenTextNS -1 \) \\[\jot]
    \> \( \elseTextNS \ifText q_i = \dimC{j} \thenText \coefPWLW{j}(p) \) \\
    \> \vdots \quad \raisebox{0.2em}{(for each $j \in \{1,\dots,m\}$)} \\
    \> \( \elseTextNS \ifText q_i = \noneDimC \thenText 0 \) \\[\jot]
    \> \( \elseTextNS \ifText p = \extraHpC{h} \thenText c_{h,i} \) \\
    \> \vdots \quad \raisebox{0.2em}{(for each $h \in \Arr_H$)} \\
    \> \( \elseTextNS \bot \) \\[\jot]
    \> (for \( i \in \{1, \dots, d\} \)) \\[\jot]
    
    \( \varphi_{\coefArrW{d}{0}}(q_1, \dots, q_d, p) := \) \\[\jot]
    \> \( \ifText \ptopeR(p) \lor \breakplaneR(p) \thenText \coefPWLW{0}(p) \) \\[\jot]
    \> \( \elseTextNS \ifText p = \extraHpC{h} \thenText c_{h,0} \) \\
    \> \vdots \quad \raisebox{0.2em}{(for each $h \in \Arr_H$)} \\
    \> \( \elseTextNS \bot \)
  \end{tabbing}

  Finally, the equality formula simply checks that the coefficients are the same.
  \begin{tabbing}
    \quad\=\,\,\,\=\,\,\,\=\,\,\,\=\,\,\,\=\,\,\,\=\,\,\,\=\,\,\,\=\kill
    \( \eqF(q_1, \dots, q_d,q_1', \dots, q_d') := \)\\[\jot]
    \> \( \displaystyle \bigwedge_{i \in \{0,\dots,d\}} \varphi_{\coefArrW{d}{i}}(q_1, \dots, q_d) = \varphi_{\coefArrW{d}{i}}(q_1', \dots, q_d')\)
  \end{tabbing}
\end{proof}

Recall the hyperplane arrangement $\Arr_\psi$ introduced for any ordered $\bblin$ formula $\psi$ in Section~\ref{subproofmainthm}.
The following corollary follows directly from Lemma~\ref{lemfuncHarr} by substituting $\Arr_\psi$ for $\Arr_H$.
\begin{corollary}
  \label{corformulaarr}
  Let $m$ and $d$ be natural numbers with $d>m$, and let $\psi$ be an ordered $\bblin$ formula on $d$ variables (free or bound).
  There exists an $\wal$ translation $\varphi$ from $\PWLV{m}$ from $\PWLV{m} \cup \Upsconstarr{m}{\psi}$ to $\hpArrV{d}$ with the following behavior.
  For any $\PWLV{m}$-structure $\F$ representing a function $f \in \PL(m)$, let $\F'$ be $\F$ expanded with fresh constants for $\Upsconstarr{m}{\psi}$. Then $\varphi$ maps $\F'$ to $\Arr_f \cup \Arr_\psi$.
\end{corollary}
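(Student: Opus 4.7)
My plan is to derive the corollary as an immediate instantiation of Lemma~\ref{lemfuncHarr}, identifying the external arrangement $\Arr_H$ with the query-induced arrangement $\Arr_\psi$. First, I would unpack $\Arr_\psi$: since $\psi$ is ordered and in $\bblin$, every atomic linear-constraint subformula in $\psi$ has the form $a_0 + a_1 x_1 + \cdots + a_d x_d > 0$ with fixed rational coefficients. Reading each such constraint as the hyperplane $\{(x_1,\dots,x_d) : a_0 + a_1 x_1 + \cdots + a_d x_d = 0\}$ yields a finite, $d$-dimensional hyperplane arrangement, which is precisely $\Arr_\psi$. Because $\psi$ is fixed as part of the translation's design, both the set $\Arr_\psi$ and the numerical coefficients of each $h \in \Arr_\psi$ are known a priori.

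Next, I would set $\Arr_H := \Arr_\psi$ in Lemma~\ref{lemfuncHarr}. Under this choice, the auxiliary vocabulary $\Upsconstarr{m}{H}$ provided by the lemma coincides exactly with the vocabulary $\Upsconstarr{m}{\psi}$ appearing in the corollary: both contain the sort markers $\dimC{j}$ for $j \in \{1,\dots,m+1\}$ and $\noneDimC$, together with one constant $\extraHpC{h}$ per hyperplane $h \in \Arr_\psi$. The rational constants $c_{h,0}, \ldots, c_{h,d}$ that the lemma's proof inserts into the weight terms $\varphi_{\coefArrW{d}{i}}$ become literally the coefficients of the linear constraints of $\psi$, which can be inlined syntactically when compiling the translation.

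The translation produced by Lemma~\ref{lemfuncHarr} under this instantiation then maps any $\PWLV{m}$-structure $\F$ representing $f \in \PL(m)$, expanded with fresh interpretations for the constants in $\Upsconstarr{m}{\psi}$, to a structure over $\hpArrV{d}$ that encodes $\Arr_f \cup \Arr_H = \Arr_f \cup \Arr_\psi$, which is exactly what the corollary demands. I do not expect a substantive obstacle here: the only bookkeeping is to confirm that the linear constraints of $\psi$ live in $\R^d$ for the intended $d$ (the number of variables, free or bound, of $\psi$ in prenex normal form), and to observe that the \emph{ordered} assumption on $\psi$ plays no role in this reduction, since $\Arr_\psi$ is defined from the linear constraints alone, regardless of the ordering of their variables. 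Once these identifications are made, the corollary follows at once.
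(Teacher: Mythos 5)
Your proposal is correct and is essentially identical to the paper's own proof, which is the one-line observation that the corollary follows from Lemma~\ref{lemfuncHarr} by substituting $\Arr_\psi$ for $\Arr_H$. Your added remarks (that the coefficients of the constraints in $\psi$ can be inlined syntactically since $\psi$ is fixed, and that the ordered assumption plays no role in forming $\Arr_\psi$ itself) are accurate and unproblematic.
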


\subsection{Cell selection for general $m$}
\label{subcellselec}
\newcommand{\NNCR}[1]{\textrm{F-cell}_{{#1}}}
\newcommand{\baseChainF}[1]{\textit{base-chain}_{#1}}
\newcommand{\beforeF}[1]{\textit{lower-in-}{#1}\textit{-stack}}
\newcommand{\afterF}[1]{\textit{higher-in-}{#1}\textit{-stack}}
\newcommand{\onF}[1]{\textit{on-hp-in-}{#1}}
\newcommand{\firstNonVerticalF}[1]{\textit{largest-non-vertical-}{#1}}
\newcommand{\projectVertF}[1]{\textit{projected-vertical-hp-}{#1}}
\newcommand{\insertedBpF}[1]{\textit{breakplane-in-context}_{#1}}
\newcommand{\insertedCellF}[1]{\textit{function-in-context}_{#1}}
\newcommand{\hpCR}[1]{\textrm{constraint-cell}_{#1}}

In the proof sketch of Lemma~\ref{lemsel} for $m=1$, we described how to select the cells that lie on the formula $F(x_i) = x_j$ as well as the cells that satisfy a linear constraint of the formula $\psi$. Here we will define the formulas $\NNCR{g_1,\dots,g_{m + 1}}(c)$, that select the cells that satisfy the formula $F(x_{g_1}, \dots x_{g_m}) = x_{g_{m+1}}$, as well as the formulas $\hpCR{h}(c)$ for a hyperplane $h \in \Arr_\psi$ of a linear constraint of the formula $\psi$, which select the cells that satisfy that constraint.

In the proof sketch we explained that the first step to selecting those cells representing a certain piece of the continuous PWL function is to select those cells that contain $x_i$ values between the breakpoints of the piece. 
To do this in a cell decomposition we must be able to select cells based on their positioning relative to the breakplanes of the function. Note that these breakplanes are present as hyperplanes in $\Arr_f$ by definition, so they are present in $\D$ since $\D$ is compatible with $\Arr_f$.

More generally, we will show how to select all cells based on their positioning relative to a hyperplane and this in any dimension $i$.
\begin{lemma}
  \label{lemselrelhp}
  There exist an $\wal$-formula $\hpPosCCHpR(c,h)$ over $\cellV{m}$ with the following properties.
  Let $m$ be a natural number, let $\Arr$ be an $m$-dimensional hyperplane arrangement, and let $\D=\D_1,\dots,\D_m$ be a CD compatible with $\Arr$. Then on $\D$, the formula $\hpPosCCHpR$ defines all pairs $(c,h)$ such that $h$ is a hyperplane in $\D$ from $\Arr$ and $c$ is an $i$-cell from $\D$ that is on the positive side of $h$. Similarly, $\hpNegCCHpR$ and $\hpOnCCHpR$ define all triples $(c,h)$ such that $h$ is a hyperplane in $\D$ from $\Arr$ and $c$ is an $i$-cell from $\D$ that is on the negative side of $h$ or contained by $h$ respectively.
\end{lemma}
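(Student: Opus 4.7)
The plan is to determine the position of an $m$-cell $c$ relative to a hyperplane $h$ by reducing to a comparison of $h$ with $c$'s start and end section mappings at the appropriate level of the base cell chain. Recall from Remark~\ref{remverthpproj} that $h$ may have $\bot$ values in its highest coefficients when it originates from a vertical hyperplane; let $i_h$ denote the largest index for which $\coefCellW{i_h}(h) \neq \bot$, so that $h$ effectively lives as an affine hyperplane in $\R^{i_h}$ (and has nonzero $i_h$-th coefficient there).

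The position of $c$ with respect to $h$ is the same as the position of its ancestor base cell at dimension $i_h$, call it $c_{i_h}$, because coordinates beyond $x_{i_h}$ do not appear in $h$'s defining affine function and can therefore take arbitrary values along the higher cylinders without changing the sign. Above the base cell $c_{i_h - 1}$ of $c_{i_h}$, the start and end section mappings of $c_{i_h}$ are witnessed by hyperplanes $h_s$ and $h_e$, and $h$ itself is a valid section mapping above $c_{i_h - 1}$ by the compatibility of $\D$ with $\Arr$. A case analysis on the order of $h_s$, $h_e$, and $h$ above $c_{i_h - 1}$, read off from $\stackLtR$ and $\stackEqR$, then determines the side: $c$ is on $h$ precisely when $c_{i_h}$ is a section ($\stackEqR(c_{i_h-1}, h_s, h_e)$) that coincides with $h$ ($\stackEqR(c_{i_h-1}, h_s, h)$); $c$ is strictly above $h$ when $h$ lies weakly below $h_s$ above $c_{i_h-1}$ with $c_{i_h}$ not itself the section $h$; and symmetrically for strictly below. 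Unbounded sectors ($h_s = \minInfC$ or $h_e = \plusInfC$) are handled by treating these markers as $-\infty$ or $+\infty$ in the comparisons.

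Since $m$ is a fixed natural number, the length of the base cell chain is bounded by $m$, so we can write $\hpPosCCHpR$, $\hpNegCCHpR$, and $\hpOnCCHpR$ as a finite disjunction over the level $i_h \in \{0, \dots, m\}$. For each fixed level, the walk from $c$ down to $c_{i_h}$ uses a bounded number of existentially quantified cell variables connected by $\baseR$, after which the local comparison uses only $\stackLtR$, $\stackEqR$, equality to the constants $\minInfC$ and $\plusInfC$, and $\bot$-tests on the coefficients of $h$. The main obstacle will be the careful enumeration of degenerate cases (sections versus proper sectors, unbounded versus bounded stacks, and the base case $i_h = 0$ where $h$ reduces to a constant whose sign is read directly from $\coefCellW{0}(h)$); once these edge cases are systematically listed, the required $\wal$ formulas can be written out in a straightforward manner.
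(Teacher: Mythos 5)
Your high-level approach matches the paper's: identify the effective dimension of $h$, walk down the base-cell chain to that level, and compare $h$ against the start/end section mappings of the ancestor cell using the stack-order relations. However, there are two concrete gaps in how you identify the effective dimension and which object you compare against.

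First, you define $i_h$ as the largest index with $\coefCellW{i_h}(h) \neq \bot$, citing Remark~\ref{remverthpproj}. But that remark is about the \emph{projections} of $h$ that exist in $\D$, not about $h$ itself. By Remark~\ref{remhpcoef}, $h$ (being a hyperplane in $\D$ that came from $\Arr$) retains its original coefficients, all of which are real numbers, never $\bot$. So your $i_h$ is always $m$, and your finite disjunction over levels collapses to the single case $i_h = m$. For vertical hyperplanes (where $\coefCellW{m}(h) = 0$), this is the wrong level: such an $h$ does not represent a section mapping above any $(m-1)$-cell, so $\stackLtR$ and $\stackEqR$ with $h$ at level $m$ are vacuously false and your formula simply misses these pairs. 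The correct test, used by the paper's $\firstNonVerticalF{i}(h)$, is the largest index $i$ with $\coefCellW{i}(h) \neq 0$.

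Second, and relatedly, even with the correct $i$, you cannot compare $h$ itself against the stack above an $(i-1)$-cell when $i < m$. The hyperplane that represents $h$'s section mapping at that level is the \emph{projection} $h'$ of $h$ living in $\R^i$, which appears in $\D$ with $\coefCellW{j}(h') = \bot$ for $j > i$ (Remark~\ref{remverthpproj}). You must first locate $h'$ by matching coefficients (the paper's $\projectVertF{i}(h, h')$) and then compare $h'$ in the stack. Finally, a smaller point: translating "above/below in the stack" to "positive/negative side of $h$" requires case-splitting on the sign of $\coefCellW{i}(h)$, which your sketch does not mention but the paper handles explicitly by choosing between $\afterF{i}$ and $\beforeF{i}$.

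Once these are fixed — use $\neq 0$ rather than $\neq \bot$ to find the level, retrieve the projected hyperplane in $\D$ before consulting the stack relations, and branch on the sign of the leading coefficient — your plan coincides with the paper's proof.
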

\begin{proof}
  First, we will define a formula $\inCF{i}(c)$ that selects cells that are in $\D_i$. We can determine this by for each cell, counting how long the chain of base cells is until we reach the origin cell. Next we will define the formula $\beforeF{i}(c,h)$ that selects all $i$-cells $c$ that are below the hyperplane $h$ in any stack in $\D_i$. (Here $h$ lives in $\R^i$.) To do this we take the starting hyperplane of $c$ and check if it is strictly below $h$ in the stack of the base cell of $c$. If it is not, then either it is a section on $h$ or the entire cell is above $h$ in the stack of the base cell of $c$. Similarly, the formula $\afterF{i}(c,h)$ selects all cells $c$ in $\D_i$ that are above the hyperplane $h$. For this we take the ending hyperplane of $c$ and check that is strictly above the $h$ in the stack above the base cell of $c$. Similar to before, if this is not the case, $c$ is either a section on $h$ or is entirely below $h$. Finally, we also add the formula $\onF{i}(c,h)$ which selects those cells $c$ that are sections on the hyperplane $h$, which is done by checking that the $h$ is both a starting and an ending hyperplane of $c$. All of these formulas can be written in $\wal$ as follows:
  \begin{tabbing}
    \quad\=\,\,\,\=\,\,\,\=\,\,\,\=\,\,\,\=\,\,\,\=\,\,\,\=\,\,\,\=\kill
    \( \displaystyle \baseChainF{i}(b_0,b_i) := \exists b_1, \dots, b_{i-1} (\bigwedge_{k \in \{0,\dots,i-1\}}\baseR(b_{k+1}, b_{k})) \) \\[\jot]
    \> (for \( i \in \{1,\dots,d\} \))\\[\jot]
    \( \inCF{i}(c) := \baseChainF{i}(\originCellC,c) \) \\[\jot]
    \> (for \( i \in \{1,\dots,d\} \))\\[\jot]
  
    \( \beforeF{i}(c,h) := \inCF{i}(c) \land \exists b,h_c (\baseR(c,b) \land \startR(c,h_c) \land \stackLtiR{i}(h_c,h,b)) \) \\[\jot]
    \> (for \(i \in \{1,\dots,d\}\)) \\[\jot]
    \( \afterF{i}(c,h) := \inCF{i}(c) \land \exists b,h_l (\baseR(c,b) \land  \EndR(c,h_l) \land \stackLtiR{i}(h,h_l,b)) \) \\[\jot]
    \> (for \(i \in \{1,\dots,d\}\)) \\[\jot]
    \( \onF{i}(c,h) := \inCF{i}(c) \land \EndR(c,h) \land \startR(c,h) \)\\[\jot]
    \> (for \(i \in \{1,\dots,d\}\))
  \end{tabbing}
  
  Using the above, we can expreess that a $d$-cell $c$ is on the positiveside of a hyperplane $h$ from $\D$ that lives in $\R^d$. Indeed, we can use the predicates $\beforeF{i}$ or $\afterF{i}$ where we project down to $i$ dimensions, where $i$ is the highest index where $h$ has a nonzero coefficient.
  \begin{tabbing}
    \quad\=\,\,\,\=\,\,\,\=\,\,\,\=\,\,\,\=\,\,\,\=\,\,\,\=\,\,\,\=\kill
    \( \displaystyle \firstNonVerticalF{i}(h) := \hpR(h) \land \coefCellW{i}(h) \neq 0 \land (\bigwedge_{k \in {i+1,\dots,d}} \coefCellW{k}(h) = 0)  \)\\[\jot]
    \> (for $i \in \{1,\dots,d\}$)
  \end{tabbing}
  
  Now positive side is expressed by $\afterF{i}$, or $\beforeF{i}$, depends on whether the $i$-th coefficient is positive or negative. Similarly, we can express that a cell is on the negative side of a hyperplane. We can also express a cell $c$ is entirely contained by a hyperplane $h$ with the formula $\hpOnCCHpR(c,h)$ using $\onF{i}(c,h)$. The auxiliary formula $\projectVertF{i}(h,h_p)$, retrieves the projection $h_p$ of $h$ that lives in $\R^i$. By Remark~\ref{remverthpproj}, this projection exists in $\D$.
  \begin{tabbing}
    \quad\=\,\,\,\=\,\,\,\=\,\,\,\=\,\,\,\=\,\,\,\=\,\,\,\=\,\,\,\=\kill
    \( \projectVertF{i}(h,h_p) := \) \\[\jot]
    \> \( \displaystyle \hpR(h) \land \hpR(h_p) \land (\bigwedge_{j \in {0,\dots,i}} \coefCellW{j}(h_p) = \coefCellW{j}(h)) \land (\bigwedge_{k \in {i+1,\dots,d}} \coefCellW{k}(h_p) = \bot)  \)\\[\jot]
    \> (for $i \in \{1,\dots,d\}$) \\[\jot]
  
    \( \hpPosCCHpR(c,h) :=\) \\[\jot]
    \> \( \displaystyle \hpR(h) \land (\bigwedge_{i \in \{1,\dots,d\}} \firstNonVerticalF{i}(h)\)\\[\jot]
    \> \> \> \( \implies  \exists c', h'(\inCF{i}(c') \land \baseChainF{d-i}(c',c) \land \projectVertF{i}(h,h') \)\\[\jot]
    \> \> \> \> \( {} \land ((\coefCellW{i}(h') > 0 \land \afterF{i}(c',h'))\) \\[\jot]
    \> \> \> \> \> \( {} \lor (\coefCellW{i}(h') < 0 \land \beforeF{i}(c',h')))))\) \\[\jot]
    \( \hpNegCCHpR(c,h) :=\) \\[\jot]
    \> \( \displaystyle \hpR(h) \land (\bigwedge_{i \in \{1,\dots,d\}} \firstNonVerticalF{i}(h)\)\\[\jot]
    \> \> \> \( \implies \exists c', h'(\inCF{i}(c') \land \baseChainF{d-i}(c',c) \land \projectVertF{i}(h,h')\)\\[\jot]
    \> \> \> \> \( {} \land ((\coefCellW{i}(h) > 0 \land \beforeF{i}(c',h')) \lor (\coefCellW{i}(h) < 0 \land \afterF{i}(c',h')))))\) \\[\jot]
    \( \hpOnCCHpR(c,h) :=\) \\[\jot]
    \> \( \displaystyle \hpR(h) \land (\bigwedge_{i \in \{1,\dots,d\}} \firstNonVerticalF{i}(h)\)\\[\jot]
    \> \> \> \( \implies \exists c',h'(\inCF{i}(c') \land \baseChainF{d-i}(c',c) \land \projectVertF{i}(h,h')\) \\[\jot]
    \> \> \> \> \( {} \land \onF{i}(c',h')))\)
  \end{tabbing}
\end{proof}

We are now ready to describe how to handle a subformula of the form $F(x_{g_1},\dots, x_{g_m}) = x_{g_m+1}$ with $0 < g_1 < \cdots < g_{m+1} \leq d$. We refer to $(g_1, \dots ,g_{m})$ as a \emph{polytope context} and to $(g_1, \dots ,g_{m+1})$ as a \emph{function context}. Recall that $\Arr_f$ (and hence $\D$) contains a hyperplane in context $(g_1, \dots ,g_{m})$ for every breakplane $b$ of $f$, and a hyperplane in context $(g_1, \dots ,g_{m+1})$ for every piece of $f$ on every polytope $p$. The following formulas define these relations

\begin{tabbing}
  \quad\=\,\,\,\=\,\,\,\=\,\,\,\=\,\,\,\=\,\,\,\=\,\,\,\=\,\,\,\=\kill
  \( \insertedBpF{g_1,\dots,g_m}(b,h) := \)\\[\jot]
  \> \( \domR{\PWLV{m}}(b) \land \breakplaneR(b) \land \domR{\cellV{d}}(h) \land \hpR(h) \)\\[\jot]
  \> \( \displaystyle {} \land (\bigwedge_{i \in 1,\dots,m} \coefPWLW{i}(b) = \coefCellW{g_i}(h)) \land (\bigwedge_{i \in \{1,\dots,d\}\setminus \{g_1,\dots,g_m\}} \coefCellW{i}(h) = 0) \)\\[\jot]
  \> (for $1 \leq g_1 < \cdots < g_m < d$) \\[\jot]

  \( \insertedCellF{g_1,\dots,g_{m+1}}(p,h) := \)\\[\jot]
  \> \( \domR{\PWLV{m}}(p) \land \ptopeR(p) \land \domR{\cellV{d}}(h) \land \hpR(h) \)\\[\jot]
  \> \( \displaystyle {} \land (\bigwedge_{i \in 1,\dots,m+1} \coefPWLW{i}(p) = \coefCellW{g_i}(h)) \land (\bigwedge_{i \in \{1,\dots,d\}\setminus \{g_1,\dots,g_{m+1}\}} \coefCellW{i}(h) = 0) \)\\[\jot]
  \> (for $1 \leq g_1 < \cdots < g_{m+1} \leq d$)
\end{tabbing}

Our final formula now defines the $d$-cells satisfying $F(x_{g_1},\dots, x_{g_m}) = x_{g_m+1}$ by checking whether it lies on the hyperplane of a piece of $F$ in context $(g_1, \dots ,g_{m+1})$ and checking whether it has the same relative position to all hyperplanes of breakplanes in context $(g_1, \dots ,g_{m})$ as its polytope using the formulas of Lemma~\ref{lemselrelhp}:
\begin{tabbing}
  \quad\=\,\,\,\=\,\,\,\=\,\,\,\=\,\,\,\=\,\,\,\=\,\,\,\=\,\,\,\=\kill
  \( \NNCR{g_1,\dots,g_{m+1}}(c) := \) \\[\jot]
  \> \( \cellR(c) \) \\[\jot]
  \> \( {} \land \exists p(\domR{\PWLV{m}}(p) \land \ptopeR(p)  \)\\[\jot]
  \> \> \( {} \land \exists h_p (\insertedCellF{g_1,\dots,g_{m+1}}(p,h_p) \land \hpOnCCHpR(c,h_p)) \) \\[\jot]
  \> \> \( {} \land \forall b ((\domR{\PWLV{m}}(b) \land \breakplaneR(b)) \)\\[\jot]
  \> \> \> \( {} \implies  \exists h_b (\domR{\cellV{d}}(h) \land \hpR(h_b) \land \insertedBpF{g_1,\dots,g_m}(b,h_b) \)\\[\jot]
  \> \> \> \> \> \( {} \land ((\posSR(p,b) \land \hpPosCCHpR(c,h_b))\)\\[\jot]
  \> \> \> \> \> \> \( {} \lor (\negSR(p,b) \land \hpNegCCHpR(c,h_b))\)\\[\jot]
  \> \> \> \> \> \> \( {}  \lor (\onSR(p,b) \land \hpOnCCHpR(c,h_b)))))) \) \\[\jot]
  \> (for $1 \leq g_1 < \cdots < g_{m+1} \leq d$)
\end{tabbing}

It remains to deal with atomic subformulas that are linear constraints of the form $c_0 + c_1 x_1 + \cdots + c_d x_d > 0$. These are present as hyperplanes in $\Arr_\psi$. The cells satisfying this constraint are defined as follows:
\begin{tabbing}
  \quad\=\,\,\,\=\,\,\,\=\,\,\,\=\,\,\,\=\,\,\,\=\,\,\,\=\,\,\,\=\kill
  \( \hpCR{h}(c) := \) \\[\jot]
  \> \( \exists h'((\bigwedge_{i \in \{0,\dots,d\}} \coefCellW{i}(h') = c_i) \land \domR{\cellV{d}}(h') \land \hpR(h') \land \hpPosCCHpR(c,h'))  \) \\[\jot]
  \> (for each $h \in \Arr_\psi$, defined by \(c_0 + c_1 x_1 + \dots + c_d x_d > 0\))
\end{tabbing}

\section{Proof of Lemma~\ref{lemintm} for any fixed $m$ and $\ell$}\label{secproofintmd}

\newcommand{\inIntervalF}{\textit{cell-in-interval}}
\newcommand{\posVolCellF}{\textit{pos-side-vol-cell}}
\newcommand{\negVolCellF}{\textit{neg-side-vol-cell}}
\newcommand{\cellInPtopeF}{\textit{cell-in-ptope}}
\newcommand{\startHpF}[1]{\textit{interval-start-hp}_{#1}}
\newcommand{\EndHpF}[1]{\textit{interval-end-hp}_{#1}}
\newcommand{\allSectorsF}{\textit{all-sectors-cell}}
\newcommand{\realFaceICellF}[2]{\textit{face-of-}\mathit{cell}_{#1}^{#2}}
\newcommand{\validFacesSimplexF}[1]{\textit{valid-faces-}\mathit{simplex}_{#1}}
\newcommand{\noCornerParentF}[1]{\textit{not-contain-corner}_{#1}}
\newcommand{\realSimplexCellF}[1]{\textit{simplex-of-cell}_{#1}}
\newcommand{\gensNoSimplicesF}[1]{\textit{generates-no-simplices}_{#1}}
\newcommand{\detT}[1]{\det_{#1}}
\newcommand{\simplexVolT}[1]{\textit{simplex-vol}_{#1}}
\newcommand{\cellVolT}{\textit{cell-vol}}

\begin{lemma}[Repetition of Lemma~\ref{lemintm}]
  For any $m$, there exists an $\wal$ term $t$ over $\Upsnet(m,1)$ with 
  $m$ additional pairs of weight constant symbols $\mathit{min}_i$ and
  $\mathit{max}_i$ for $i \in \{ 1, \dots, m\}$, such that for any fixed $\ell$,
  any network $\Net$ in $\K(m,\ell)$, and $m$ pairs of values $a_i$ and $b_i$ for
  $\mathit{min}_i$ and $\mathit{max}_i$, we have \[
    t^{\Net,a_1,b_1,\dots,a_m,b_m} = \int_{a_1}^{b_1}\cdots \int_{a_m}^{b_m} \Func\Net\,dx_1\dots dx_m .
  \]
\end{lemma}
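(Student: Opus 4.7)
The plan is to cascade the translations of Section~\ref{sectheorem} until we arrive at a cylindrical decomposition of $\R^m$ adapted both to $f \coloneqq \Func\Net$ and to the integration box, and then to express the integral cell by cell as an explicit sum of affine-over-a-simplex contributions. I would first apply Lemma~\ref{lempwl} to turn $\Net$ into a proper $\PWLV{m}$-structure $\F$ representing $f$. Next, by a minor variant of Lemma~\ref{lemfuncHarr}, I would extend this to a hyperplane arrangement $\Arr^{\star}$ in $\R^m$ consisting of all breakplanes of $\F$ together with the $2m$ bounding hyperplanes $x_i - \mathit{min}_i = 0$ and $x_i - \mathit{max}_i = 0$, whose coefficients are read directly off the extra weight constants. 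Then I would apply Lemma~\ref{lemcd} to obtain a cylindrical decomposition $\D$ of $\R^m$ compatible with $\Arr^{\star}$. Using Lemma~\ref{lemselrelhp} on the $2m$ bounding hyperplanes, I would define a formula $\inIntervalF(c)$ selecting the $m$-cells of $\D$ lying inside $[a_1,b_1] \times \cdots \times [a_m,b_m]$. Each such cell is bounded and, by compatibility of $\D$ with the breakplanes of $\F$, lies in a unique polytope $p_c$ of $\F$ (identifiable in the style of $\cellToPtopeF$ from the proof of Lemma~\ref{lempwlsumcd}), whose affine component function of $f$ I denote $g_{p_c}$.

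It then remains to express $\int_c g_{p_c}$ as a weight term and sum over the selected cells. My plan is to use the Kuhn triangulation of each cell. From the proof of Lemma~\ref{lemcdstackord}, the (at most $2^m$) corners of $c$ are indexed by strings $r \in \{\StartC,\EndC\}^m$ and each coordinate $\coordT_{j,r}(c)$ is an $\wal$ weight term. For each permutation $\pi \in S_m$, I form the simplex with vertices $v_\pi^{(0)} \coloneqq \StartC^m$ and $v_\pi^{(k)}$ obtained from $v_\pi^{(k-1)}$ by flipping the $\pi(k)$-th coordinate to $\EndC$. Its signed volume
\[
V_\pi(c) \coloneqq \tfrac{1}{m!}\,\detT{m}\bigl(v_\pi^{(1)} - v_\pi^{(0)},\,\ldots,\,v_\pi^{(m)} - v_\pi^{(0)}\bigr)
\]
is a fixed polynomial in the corner coordinates, hence a weight term, and so is the centroid $\bar v_\pi(c) \coloneqq \tfrac{1}{m+1}\sum_{k=0}^{m} v_\pi^{(k)}$. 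Since $g_{p_c}$ is affine, the contribution of this simplex to $\int_c g_{p_c}$ equals $\mathrm{sign}(\pi)\cdot V_\pi(c)\cdot g_{p_c}(\bar v_\pi(c))$, and the final term is
\[
t \;\coloneqq\; \sum_{c \,:\, \inIntervalF(c)}\; \sum_{\pi \in S_m}\; \mathrm{sign}(\pi)\cdot V_\pi(c)\cdot g_{p_c}(\bar v_\pi(c)),
\]
where the inner sum is written out as $m!$ addends since $m$ is fixed, and the outer sum is a single $\wal$ summation. Duplicate cells, should they arise, are handled by dividing by a duplicate count in the usual way as in the proof sketch of Lemma~\ref{lemintm} in Appendix~\ref{secproofint1d}.

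The hard part will be justifying the Kuhn triangulation for cylindrical cells. Such a cell is a convex polytope, being cut out by affine section-mapping inequalities, with at most $2^m$ vertices, possibly with corners collapsing (for instance at points where two successive section mappings meet on the boundary of their base cell). The key observation is that each such cell is the affine-like image of the combinatorial cube $\{\StartC,\EndC\}^m$ under the map $r \mapsto (\coordT_{1,r}(c),\ldots,\coordT_{m,r}(c))$, and the $m!$ Kuhn simplices of the cube, taken with signs $\mathrm{sign}(\pi)$, pushforward to a signed tiling that integrates any affine function correctly, even when the map is not injective: degenerate simplices simply have zero signed volume and contribute nothing. Everything else is routine bookkeeping within the $\wal$-translation framework.
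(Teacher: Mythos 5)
Your proposal is correct, but it follows a genuinely different route from the paper's. Both arguments share the same pipeline front end: translate $\Net$ into a $\PWLV{m}$-structure (Lemma~\ref{lempwl}), build a compatible hyperplane arrangement and cylindrical decomposition, select the cells inside the box, and triangulate. The differences are in what you decompose and how you integrate. The paper works in ambient dimension $m+1$: it adds the graph hyperplanes of $\Func\Net$ and the hyperplane $x_{m+1}=0$, builds a CD of $\R^{m+1}$, classifies the $(m+1)$-cells that lie \emph{between} the graph and $x_{m+1}=0$ into positive- and negative-volume cells, and computes their $(m+1)$-dimensional volumes by the recursive \emph{pulling} triangulation (cone from a vertex, \cite{clarkson88}). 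You instead stay in $\R^m$: you build a CD of $\R^m$ compatible only with the breakplanes and the $2m$ bounding hyperplanes, attach to each selected $m$-cell $c$ its unique covering polytope $p_c$ and component function $g_{p_c}$, triangulate $c$ by the Kuhn scheme, and evaluate $\int_c g_{p_c}$ via the affine-over-simplex formula $\mathrm{vol}\cdot g(\text{centroid})$. Your version is dimensionally leaner and avoids the paper's case split into positive/negative-volume cells, because the sign of the integral is carried directly by $g_{p_c}$; the paper's version only ever manipulates geometric volumes, which may be marginally more uniform.

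One step that deserves more than the asserted ``routine bookkeeping'' is the justification that the Kuhn simplices of a cylindrical $m$-cell really do tile it. The corner map $r\mapsto(\coordT_{1,r}(c),\dots,\coordT_{m,r}(c))$ is not affine (it is multilinear: each coordinate is affine in $s_j$ for fixed $s_1,\dots,s_{j-1}$), so you cannot simply push a cube triangulation forward through an affine isomorphism. What saves you is an inductive argument over the tower of base cells: the base $(m-1)$-cell is Kuhn-triangulated into $(m-1)!$ simplices by induction, the cell $c$ over each such simplex is a slanted simplicial prism (two affine section mappings over a simplex), and the Kuhn/Freudenthal triangulation of a simplicial prism into $m$ simplices glues consistently across shared faces, yielding the claimed $m!$-simplex tiling. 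It is also worth making explicit why $\mathrm{sign}(\pi)\cdot V_\pi(c)\ge 0$: after subtracting consecutive columns and permuting rows by $\pi^{-1}$, the edge-vector matrix is lower triangular with nonnegative diagonal entries (each diagonal entry being the positive gap between the lower and upper section mapping in that coordinate), so $\det = \mathrm{sign}(\pi)\cdot(\text{nonnegative})$, which is what makes the ``signed'' sum collapse to the unsigned simplex integral. With those two points spelled out, the argument is complete; degenerate (lower-dimensional) cells and collapsed corners are handled automatically, since they only produce zero-volume simplices, and duplicate cells disappear once the construction is phrased through $\wal$ translations and $\eqF$ as in the rest of Section~\ref{sectheorem}.
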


The proof proceeds in four steps. The first step is to translate our network $\Net$ from an $\Upsnet$-structure into an $\PWLV{m}$ structure. Lemma~\ref{lempwl} gives this translation for any network in $\K(m', \ell')$ for any arbitrary $m'$ and $\ell'$.

The second step is to create a hyperplane arrangement in $\R^{m+1}$ that contains $\Arr_{\Func{\Net}}$ together with $x_{m+1} = 0$, and all hyperplanes that either are at the start or end of the intervals over which we are integrating for each dimension in $x_1,\dots,x_m$. Concretely, $\Arr_H = S \cup E \cup \{ x_{m+1} = 0\}$ where $S = \{ x_i - a_i = 0 \mid i \in \{1,\dots,m\}\}$ and $E = \{ x_i - b_i= 0 \mid i \in \{1,\dots,m\}\}$. Using Lemma~\ref{lemfuncHarr} we can then create a translation that maps $\Func{\Net}$ to $\Arr_{\Func{\Net}} \cup \Arr_H$.

The third step is to create a cylindrical decomposition $\D$ compatible with $\Arr_{\Func{\Net}} \cup \Arr_H$ by invoking Lemma~\ref{lemcd}.

The final step is to define the term $\integrateT()$ over the vocabulary $\cellV{m+1} \uplus \PWLV{m}$ that calculates the desired integral on a structure representing the disjoint union of $\D$ and $\Func{\Net}$. It does this by selecting all cells that are within the given intervals and that are either below the function and above the $x_1,\dots,x_m$-hyperplane, or are above the function and below the $x_1,\dots,x_m$-hyperplane. We then calculate their volumes and either add them to or subtract them from the total volume, depending on whether they are above or below the $x_1,\dots,x_m$-hyperplane. 

We shall start by defining the formula $\inIntervalF(c)$ that selects all $(m+1)$-cells that are within the given interval. For this we will use the formulas $\hpPosCCHpR$, $\hpNegCCHpR$ from Lemma~\ref{lemselrelhp}. Additionally, we define the formulas $\startHpF{i}(h)$ and $\EndHpF{i}(h)$ that for each $i \in \{1,\dots,m\}$ select the hyperplane at the start and end of the interval for the dimension $x_i$.

\begin{tabbing}
  \quad\=\,\,\,\=\,\,\,\=\,\,\,\=\,\,\,\=\,\,\,\=\,\,\,\=\,\,\,\=\kill
  \( \displaystyle \startHpF{i}(h) := \domR{\cellV{m}}(h) \land \hpR(h) \land \coefCellW{i} = a_i \land (\bigwedge_{j \in \{1,\dots,m+1\}\setminus\{i\}} \coefCellW{j} = 0)\) \\[\jot]
  \> (for each $i \in \{1,\dots,m\}$)\\[\jot]
  \( \displaystyle \EndHpF{i}(h) := \domR{\cellV{m}}(h) \land \hpR(h) \land \coefCellW{i} = b_i \land (\bigwedge_{j \in \{1,\dots,m+1\}\setminus\{i\}} \coefCellW{j} = 0)\) \\[\jot]
  \> (for each $i \in \{1,\dots,m\}$)\\[\jot]
  
  \( \inIntervalF(c) := \) \\[\jot]
  \> \( \displaystyle \forall h (((\bigvee_{i \in \{1,\dots,m\}}\startHpF{i}(h)) \implies \hpPosCCHpR(c,h)) \)\\[\jot]
  \> \> \( \displaystyle {} \land ((\bigvee_{i \in \{1,\dots,m\}}\EndHpF{i}(h)) \implies \hpNegCCHpR(c,h))) \)
\end{tabbing}

We will now define formulas that select the $(m+1)$-cells $c$ within the interval and that are between the function and $x_{m+1} = 0$; formula $\posVolCellF(c)$ select those on the positive side of $x_{m+1} = 0$, $\negVolCellF(c)$ those on the negative side. To be able to define these formulas we will first define the formula $\cellInPtopeF(c,p)$ which is true if all values the $(m+1)$-cell $c$ have their first $m$-components restricted to the values contained in the polytope $p$.
\begin{tabbing}
  \quad\=\,\,\,\=\,\,\,\=\,\,\,\=\,\,\,\=\,\,\,\=\,\,\,\=\,\,\,\=\kill
  \( \cellInPtopeF(c,p) :=  \) \\[\jot]
  \> \( \domR{\PWLV{m}}(p) \land \ptopeR(p)  \)\\[\jot]
  \> \> \( {} \land \forall b ((\domR{\PWLV{m}}(b) \land \breakplaneR(b)) \)\\[\jot]
  \> \> \> \( \displaystyle {} \implies  \exists h_b (\domR{\cellV{d}}(h) \land \hpR(h_b) \land \coefCellW{m+1}=0 \land (\bigwedge_{i\in\{1,\dots,m\}} \coefPWLW{i}(b) = \coefCellW{i}(h_b)) \)\\[\jot]
  \> \> \> \> \> \( {} \land ((\posSR(p,b) \land \hpPosCCHpR(c,h_b))\)\\[\jot]
  \> \> \> \> \> \> \( {} \lor (\negSR(p,b) \land \hpNegCCHpR(c,h_b))\)\\[\jot]
  \> \> \> \> \> \> \( {}  \lor (\onSR(p,b) \land \hpOnCCHpR(c,h_b))))) \) \\[\jot]
  
  \( \posVolCellF(c):= \) \\[\jot]
  \> \( \displaystyle \exists p,h_p,h_m ( \hpR(h_m) \land \coefCellW{m+1}(h_m) = 1 \land (\bigwedge_{i\in\{1,\dots,m\}} \coefCellW{i}(h_m) = 0) \) \\[\jot]
  \> \> \( \displaystyle {} \land \cellInPtopeF(c,p) \land \hpR(h_p) \land \coefCellW{m+1}(h_p)=-1 \land (\bigwedge_{i\in\{1,\dots,m\}} \coefPWLW{i}(p) = \coefCellW{i}(h_p)) \) \\[\jot]
  \> \> \( {} \land \inIntervalF(c) \land \hpPosCCHpR(c,h_p) \land \hpPosCCHpR(c,h_m) ) \) \\[\jot]
  \( \negVolCellF(c):= \) \\[\jot]
  \> \( \displaystyle \exists p,h_p,h_m ( \hpR(h_m) \land \coefCellW{m+1}(h_m) = 1 \land (\bigwedge_{i\in\{1,\dots,m\}} \coefCellW{i}(h_m) = 0) \) \\[\jot]
  \> \> \( \displaystyle {} \land \cellInPtopeF(c,p) \land \hpR(h_p) \land \coefCellW{m+1}(h_p)=-1 \land (\bigwedge_{i\in\{1,\dots,m\}} \coefPWLW{i}(p) = \coefCellW{i}(h_p)) \) \\[\jot]
  \> \> \( {} \land \inIntervalF(c) \land \hpNegCCHpR(c,h_p) \land \hpNegCCHpR(c,h_m) ) \)
\end{tabbing}

Since we will be calculating a volume in $m+1$ dimensions, we only want to select cells that are actually $(m+1)$-dimensional $(m+1)$-cells. This is done by the formula $\allSectorsF(c)$, which simply checks that all cells in the chain of base cells of the $(m+1)$-cell $c$ are sectors.

\begin{tabbing}
  \quad\=\,\,\,\=\,\,\,\=\,\,\,\=\,\,\,\=\,\,\,\=\,\,\,\=\,\,\,\=\kill
  \( \allSectorsF(c) :=  \) \\[\jot]
  \> \( \exists c_0, h_{s,1}, h_{e,1}, c_{1}, \dots, h_{s,m+1}, h_{e,m+1}, c_{m+1} (c_{m+1} = c \land c_0 = \originCellC \)\\[\jot]
  \> \> \( \displaystyle {} \land (\bigwedge_{i\in\{1,\dots,m+1\}} \startR(c_i,h_{s,i}) \land \EndR(c_i,h_{e,i}) \land \neg \stackEqR(c_i,h_{s,i}, h_{e,i}) \land \baseR(c_{i},c_{i-1})) )) \)
\end{tabbing}

Now all that remains is the calculate the volumes of the relevant $(m+1)$-dimensional $(m+1)$-cells and add their contributions. 
To do this we will split up these cells into $(m+1)$-simplices, calculate their volumes and add these up.
Observe that these cells are convex polytopes. In general for $i \in \{1,\dots,m+1\}$, let us represent a convex $i$-polytope by the set of its corner points. We use a straightforward recursive method for triangulating a convex $i$-polytope~\cite{clarkson88} given as a finite set $F$ of corner points.
Specifically, consider the following algorithm $\textrm{Triangulate}(F,i)$ returning a set of $i$-simplices; each $i$-simplex is given as a set of $i+1$ points from $F$.
\begin{enumerate}
  \item If $i=1$, return ${F}$.
  \item Otherwise, choose $r \in F$ arbitrarily.
  \item Note that each facet $F' \subset F$ is an $(i-1)$-polytope. Return \[\{ s \cup \{r\} \mid \exists F': F' \text{ is a facet of } F \text{ not containing } r \text{ and } s \in \textrm{Triangulate}(F',i-1)\}.\]
\end{enumerate}

To implement this algorithm in $\wal$ we will first need to be able to represent the corner points of a cell. Since an $i$-cell can at most have $2^i$ corner points, we identify them with strings of length $i$ over the alphabet $\{\StartC,\EndC\}$ as done in the proof of Lemma~\ref{lemcdstackord}. Recall that if a corner $r$ of an $i$-cell $c$ has $r_i = \StartC$, then $r$ is the corner $r_1,\dots,r_{i-1}$ of the base cell of $c$ projected onto the hyperplane representing the lower delineating section mapping of $c$. Likewise, if $r_i = \EndC$, then $r$ is $r_1,\dots,r_{i-1}$ of the base cell of $c$ projected onto the hyperplane representing the upper delineating section mapping of $c$.

From the proof of Lemma~\ref{lemcdstackord}, we have the terms $\coordT_{j,r}(c)$ for each $1 \leq j \leq i$ and each $r \in \{\StartC, \EndC\}^i$, which return the $j$-th coordinate of the corner $r$ of the $i$-cell $c$.

Next we will need to represent a facet of an $(m+1)$-dimensional $(m+1)$-cell. We discuss this for any $i$-dimensional $i$-cell $c$, with $i\in \{1,\dots,m+1\}$, since we will use some notation later.
For any facet of $c$ we claim that all its corner points are identified by $R_{(a, j)}^i = \{ r \in \{\StartC,\EndC\}^i \mid r_j = a \}$ for some $a \in \{\StartC, \EndC\}$ and $j \in \{1,\dots,i\}$. For a $1$-dimensional $1$-cell this is trivial, since each facet only has a single corner point. Assume this holds for an $(i-1)$-dimensional $(i-1)$-cell $c_b$ that is the base cell of $c$. Let $h_1$ and $h_2$ be the hyperplanes that represent the lower an upper deliniating section mappings of $c$. Then a facet of $c$ is either the set of corner points of a facet of $c_b$ projected onto $h_1$ and $h_2$, which we will call a type A facet, or the set of all corner points of $c_b$ projected on either $h_1$ or $h_2$, which we will call a type B facet. In Figure~\ref{fig:facetIdEx} both the facets $(\EndC,1)$ and $(\EndC,2)$ of the 3-dimensional cell are type A facets constructed from the facets $(\EndC,1)$ and $(\EndC,2)$ of the base cell respectively. The facet $(\EndC,3)$ of the 3-dimensional cell is of type B, constructed by projecting all corners of the base cell onto the upper deliniating hyperplane.
If a facet of $c$ is type A, then by induction the corner points of the facet of $c_b$ are identified by $R^{i-1}_{(a,j)}$ for some $a \in \{\StartC, \EndC\}$ and $j \in {1,\dots,i-1}$ and thus the corner points of the facet of $c$ are identified by $R^{i}_{a,j}$.
If the facet of $c$ is type B, then clearly they are identified by $R_{\StartC, i}^i$ or $R_{\EndC, i}^i$ respectively.
Thus we will represent for an $i$-dimensional $i$-cell a facet with corner points $R_{a, j}^i$ by the pair $(a, j)$ if $i$ is clear from context, as done in Figure~\ref{fig:facetIdEx}.

\begin{figure}
  \centering
  \includegraphics{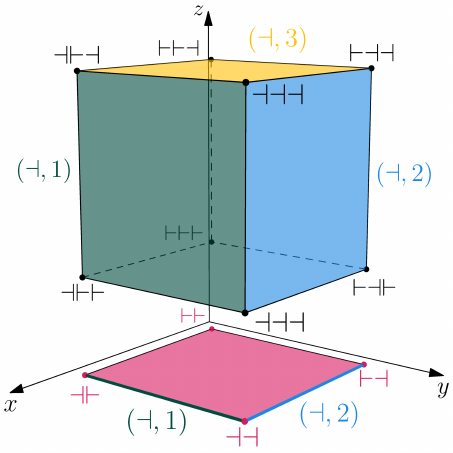}
  \caption{A $3$-dimensional $3$-cell and its base cell bellow it with some facets identified and highlighted in color, and all corners identified}
  \label{fig:facetIdEx}
\end{figure}

Note that for an $i$-dimensional $i$-cell multiple identifiers
may identify the same corner point. For example, in
Figure~\ref{figrealfacet}, in $c_{\triangle}$ the identifiers
$\StartC\StartC$ and $\StartC\EndC$ both identify the upper right
corner of $c_{\triangle}$.  As a consequence,
not every set of the form $R^i_{(a,j)}$ necessarily identifies a
facet.  For example, in
Figure~\ref{figrealfacet}, we have that $(\StartC,1)$ is a facet
in $c_{\square}$ but not of $c_{\triangle}$ even though both are
$2$-dimensional $2$-cells.

So far we have shown how to identify a facet of an
$i$-dimensional $i$-cell $c$. However, our algorithm recursively
takes facets of these facets, which we will show how to identify
next. If a facet has been recursively taken $k$ times of $c$,
this is an $(i-k)$-face of $c$, which is always an
$(i-k)$-polytope.  Similarly, such a face is identified by
$R^{i}_{(a_1, j_1), (a_2,
j_2), \dots, (a_k, j_k)} = \{ r \in \{\StartC, \EndC\}^i \mid
\bigwedge_{\ell \in \{1, \dots, k\}} r_\ell = a_\ell \}$ for some
$a_1, \dots, a_k \in \{\StartC, \EndC\}$ and $j_1, \dots, j_k \in
\{ 1, \dots, i \}$ with all $j$'s distinct.

\begin{figure}
  \centering
  \begin{tikzpicture}
      \begin{axis}[
        xlabel = \(x\),
        xmin=-0.2, xmax=4.5,
        ylabel = \(y\),
        ymin=-0.2, ymax=3.8,
        ticks = none,
        axis lines = center,
        width = 0.5\linewidth,
        axis equal image
      ]
      \addplot [
        color = black
      ] coordinates {
          (-0.2,3) (5.5,3)
        };
      \addplot [
        color = black
      ] coordinates {
          (-0.2,3.2) (3.5,-0.5)
        };
      \addplot [
        color = black
      ] coordinates {
          (-0.2,5.2) (5.5,-0.5)
        };
      \addplot [
        color = black
      ] coordinates {
          (2,-1) (5.5,2.5)
        };
      \addplot [
        color = black,
        dashed
      ] coordinates {
          (2,4) (2,3)
        };
      \addplot [
        color = black,
        dashed
      ] coordinates {
          (2,1) (2,-1)
        };
      \addplot [
        color = black,
        dashed
      ] coordinates {
          (3,4) (3, 3)
        };
      \addplot [
        color = black,
        dashed
      ] coordinates {
          (3, 2) (3,-1)
        };
      \addplot [
        color = black,
        dashed
      ] coordinates {
          (4,4) (4,3)
        };
      \addplot [
        color = black,
        dashed
      ] coordinates {
          (4, 1) (4,-1)
        };
      \addplot [
        color = red,
        only marks
      ] coordinates {
          (0,3) (2, 1) (2,3)
        };
      \addplot [
        color = blue,
        only marks
      ] coordinates {
          (3, 3) (3, 2) (4, 1) (4, 3)
        };
      \addplot[
        color = red,
        dashed,
        line width=0.125em
      ] coordinates {
        (0,3) (2, 1) (2,3)
      } -- cycle;
      \addplot[
        color = blue,
        dashed,
        line width=0.125em
      ] coordinates {
        (3, 3) (3, 2) (4, 1) (4, 3) 
        } -- cycle;
      \node[anchor=south west, color=red] at (0, 3) {$\StartC\StartC, \StartC\EndC$};
      \node[anchor=west, color=red] at (2, 1) {$\EndC\StartC$};
      \node[anchor=south west, color=red] at (2,3) {$\EndC\EndC$};
      \node[anchor=north east, color=blue] at (3, 2) {$\StartC\StartC$};
      \node[anchor=south east, color=blue] at (3, 3) {$\StartC\EndC$};
      \node[anchor=east, color=blue] at (4, 1) {$\EndC\StartC$};
      \node[anchor=south west, color=blue] at (4, 3)  {$\EndC\EndC$};
      \node[] at (3.5, 2.25)  {$c_{\square}$};
      \node[] at (1.5, 2.25)  {$c_{\triangle}$};
    \end{axis}
    \end{tikzpicture}
    \caption{Cylindrical decomposition of a $2$-D hyperplane arrangement with two 2-dimensional 2-cells $c_{\triangle}$ (in red) and $c_{\square}$ (in blue) highlighted with the identifiers of their corner points given.}
    \label{figrealfacet}
\end{figure}
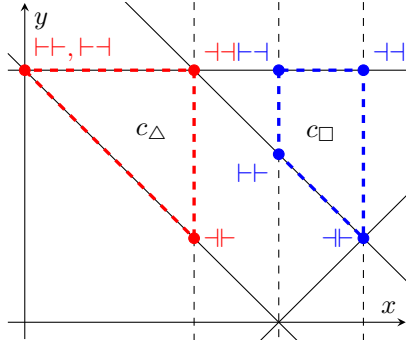

To make our algorithm
deterministic, we choose $r$ predictably, by
always choosing $r$ to be the
string
in $R^i_{(a_1, j_1),\dots,(a_k, j_k)}$
with the most $\StartC$ symbols.
As a consequence of this fixed choice
of $r$, we can simplify the identifiers for the faces we will
consider in our algorithm. Assume we have an $i$-dimensional
$i$-cell $c$ and for some $0 \leq k \leq i-2$ a $(i-k)$-face
$(a_1, j_1),\dots,(a_k, j_k)$ of which we want to identify the
facets. The deterministic algorithm chooses
$r \in \{\StartC,\EndC\}^i$ with
$r_{j_\ell} = a_\ell$ for $\ell \in \{1, \dots, k\}$ and
$\StartC$ elsewhere. Then any facet $(a_1, j_1),\dots,(a_k, j_k),
(\StartC,j_{k+1})$ for any $j_{k+1} \in \{1, \dots, i\} \setminus
\{j_1, \dots, j_k\}$ would contain $r$. Thus $a_k$ can only be
$\EndC$. Since this holds for any $k$ we can simplify the
identifiers of the faces our algorithm considers by identifying
$(a_1, j_1),\dots,(a_k, j_k)$ by $j_1,\dots, j_k$ since we know
that $a_1 = \cdots = a_k = \EndC$. Similarly, we will denote the
set of corners of $j_1,\dots, j_k$ by $R^i_{j_1,\dots, j_k}$ rather
than $R^i_{(\EndC, j_1), \dots, (\EndC, j_k)}$.

Recall that our ambient dimension is $m+1$.
Simplices returned by the algorithm can be represented by the
sequence of chosen facets.  Since every chosen facet has the
previous chosen facet as its prefix, we thus represent each
simplex by the last chosen facet, i.e., by a sequence $j_1,\dots,
j_m$ as above.  Due to the problem that different strings may
identify the same points, mentioned earlier, not every sequence
$j_1,\dots,j_m$
represents a sequence of actual facet choices.
Since at each step the dimension decreases by at least one, we
can check that $j_1,\dots,j_m$ 
represents a sequence of actual facet choices by checking that
the two strings in $R_{j_1,\dots,j_m}$ do not identify the same
point:

\begin{tabbing}
  \quad\=\,\,\,\=\,\,\,\=\,\,\,\=\,\,\,\=\,\,\,\=\,\,\,\=\,\,\,\=\kill
  \( \displaystyle \validFacesSimplexF{j_1,\dots,j_m}(c) :=
\neg \bigwedge_{\ell \in \{1,\dots, m+1\}}\coordT_{\ell,r_1}(c) =
  \coordT_{\ell,r_2}(c) \) \\[\jot]
  \> (with $r_1,r_2$ the two distinct elements of
  $R^{m+1}_{j_1,\dots,j_m}$)
\end{tabbing}

The algorithm only chooses facets that do not contain the corner
$r$. This is expressed as follows:

\begin{tabbing}
  \quad\=\,\,\,\=\,\,\,\=\,\,\,\=\,\,\,\=\,\,\,\=\,\,\,\=\,\,\,\=\kill
  \( \displaystyle \noCornerParentF{j_1,\dots,j_{k}}(c) :=
  \bigwedge_{r' \in R_{j_1,\dots,j_k}}
  \neg \bigwedge_{\ell \in \{1,\dots, m+1\}}
  \coordT_{\ell,r}(c) = \coordT_{\ell,r'}(c) \)
  \\[\jot]
  \> (with $r$ the string where $r_\ell={\dashv}$ for
  $\ell\in\{j_1,\dots,j_{k-1}\}$ and $r_\ell = {\vdash}$ elsewhere)
\end{tabbing}

Hence, the following formula checks that $j_1,\dots,j_m$
represents a simplex returned by the algorithm:

\begin{tabbing}
  \quad\=\,\,\,\=\,\,\,\=\,\,\,\=\,\,\,\=\,\,\,\=\,\,\,\=\,\,\,\=\kill
  \( \displaystyle \realSimplexCellF{j_1,\dots,j_m}(c) := \)\\[\jot]
  \> \(\validFacesSimplexF{j_1,\dots,j_m}(c) \land (\bigwedge_{k \in \{1,\dots, m\}} \noCornerParentF{j_1,\dots,j_{k}}(c)) \)
\end{tabbing}

Note that the corner points of a returned simplex $j_1,\dots,j_m$
are identified by the set of $m+2$ strings $S_{j_1,\dots,j_m}$ defined
below.  The idea is that these are the two points of face
$j_1,\dots,j_m$, plus the deterministically chosen corners of all
higher chosen facets.

\begin{tabbing}
  \quad\=\,\,\,\=\,\,\,\=\,\,\,\=\,\,\,\=\,\,\,\=\,\,\,\=\,\,\,\=\kill
  \( \displaystyle S_{j_1,\dots,j_{m}}^1=\{ r \in \{\StartC,\EndC\}^{m+1} \mid \bigwedge_{j \in \{j_1,\dots,j_{m}\}} r_j = \EndC\}\) \\[\jot]
  \( S_{j_1,\dots,j_{m}}^i = S_{j_1,\dots,j_m}^{i-1} \cup {} \) \\[\jot]
  \> \( \displaystyle \{ r \in \{\StartC,\EndC\}^{m+1} \mid
  \bigwedge_{j \in \{j_1,\dots,j_{m+1-i}\}} r_j = \EndC \land
  \bigwedge_{j \in \{1,\dots, m+1\} \setminus \{j_1,\dots,j_{m+1-i}\}}
  r_j = \StartC\}\)\\[\jot] 
  \> (for $1 < i \leq m$ ) \\[\jot]
  \( \displaystyle S_{j_1,\dots,j_{m}}= S_{j_1,\dots,j_{m}}^{m}
  \cup \{\StartC^{m+1}\} \)
\end{tabbing}

It remains to calculate the volume of every returned simplex.
In general, recall that the volume of an $n$-simplex given by
$n+1$ corners $\vv_1,\dots,\vv_{n+1}$, where each $\vv_i \in
\R^{n}$ is a column vector of coordinates, is given by $\det
(\vv_1-\vv_{n+1},\dots,\vv_n-\vv_{n+1})$.  
In our case, $n=m+1$, and $\vv_1,\dots,\vv_{m+2}$ are given by
the strings $r_1,\dots,r_{m+2}$ in $S_{j_1,\dots,j_m}$.
The coordinates of $r_i$ are defined by the $\wal$ terms
$\coordT_{j,r_i}(c)$ for $j=1,\dots,n$. Using these terms in the
well-known expression for determinant (the Laplace expansion), we
obtain an $\wal$ term $\simplexVolT{j_1,\dots,j_m}(c)$.

Consequently, the volume of cell $c$ is given by the following
term:

\begin{tabbing}
  \quad\=\,\,\,\=\,\,\,\=\,\,\,\=\,\,\,\=\,\,\,\=\,\,\,\=\,\,\,\=\kill
  \( \displaystyle \cellVolT(c) := \sum_{s \in S}
  \ifText \realSimplexCellF{s}(c) \thenText \simplexVolT{s}(c) \elseText 0 \)\\[\jot]
  \> (where $S$ is the set of all $m$-length sequences
  $s=j_1,\dots, j_m$ over $\{1,\dots,m+1\}$ \\
  \> \> with all $j$'s distinct)
\end{tabbing}

Finally, the term $\integrateT()$ can be written as the sum of all volumes of all cells with a positive contribution to the integral and subtract from that the sum of all volumes of all cells with a negative contribution to the integral.

\begin{tabbing}
  \quad\=\,\,\,\=\,\,\,\=\,\,\,\=\,\,\,\=\,\,\,\=\,\,\,\=\,\,\,\=\kill
  \( \integrateT() := \)\\[\jot] 
  \> \( \displaystyle (\sum_{c:\posVolCellF(c) \land \allSectorsF(c)} \cellVolT(c)) \)\\[\jot] 
  \> \( \displaystyle {} - (\sum_{c:\negVolCellF(c) \land \allSectorsF(c)} \cellVolT(c)) \)
\end{tabbing}

\end{document}